\newcommand{\R}{\mathbb{R}}
\newcommand{\E}{\mathbb{E}}
\newcommand{\bI}{\mathbb{I}}
\newcommand{\bP}{\text{Pr}}
\newcommand{\C}{\mathcal{C}}
\newcommand{\G}{\mathcal{G}}
\newcommand{\h}{\mathcal{H}}
\newcommand{\s}{\mathcal{S}}
\newcommand{\F}{\mathcal{F}}
\newcommand{\U}{\mathcal{U}}
\newcommand{\X}{\mathcal{X}}
\newcommand{\Y}{\mathcal{Y}}
\newcommand{\D}{\mathcal{D}}
\newcommand{\cP}{\mathcal{P}}
\newcommand{\cS}{\mathcal{S}}
\newcommand{\norm}[1]{\left\lVert#1\right\rVert}
\DeclareMathOperator*{\argmin}{arg\,min}
\newtheorem{theorem}{Theorem}
\newtheorem{defn}[theorem]{Definition}
\newtheorem{lemma}[theorem]{Lemma}
\newtheorem{remark}{Remark}
\newtheorem*{fact}{Fact}
\title{Data driven semi-supervised learning}
\author{
  Maria-Florina Balcan\\
  School of Computer Science\\
  Carnegie Mellon University\\
  Pittsburgh, PA 15213 \\
  \texttt{ninamf@cs.cmu.edu} \\
  \And
  Dravyansh Sharma\\
  Department of Computer Science\\
  Carnegie Mellon University\\
  Pittsburgh, PA 15213 \\
  \texttt{dravyans@cs.cmu.edu} }
\date{}
\begin{document}

\maketitle

\begin{abstract}
    We consider a novel data driven approach for designing learning algorithms that can effectively learn with only a small number of labeled examples. This is crucial for modern machine learning applications where labels are scarce or expensive to obtain. We focus on graph-based techniques, where the unlabeled examples are connected in a graph  under the implicit assumption that similar nodes likely have similar labels. Over the past decades, several elegant graph-based semi-supervised learning algorithms for how to infer the labels of the unlabeled examples given the graph and a few labeled examples have been proposed. However, the problem of  how to create the graph (which impacts the practical usefulness of these methods significantly) has been relegated to domain-specific art and heuristics and no general principles have been proposed. In this work we present a  novel data driven approach for learning the graph and provide strong formal guarantees in both the distributional and online learning formalizations.

We show how to leverage problem instances coming from an underlying problem domain to learn the graph hyperparameters from commonly used parametric families of graphs that perform well on new instances coming from the same domain. We obtain low regret and efficient algorithms in the online setting, and generalization guarantees in the distributional setting. We also show how to combine several very different similarity metrics and learn multiple  hyperparameters, providing  general techniques to apply to large classes of problems. We expect some of the tools and techniques we develop along the way to be of interest beyond semi-supervised learning, for data driven algorithms for combinatorial problems more generally.

\end{abstract}

\section{Introduction}
In recent years machine learning techniques have found gainful application in diverse settings including textual, visual, or acoustic data. A major bottleneck of the currently used approaches is the heavy dependence on expensive labeled data. At the same time advances in cheap computing and storage have made it relatively easier to store and process large amounts of unlabeled data. Therefore, an important focus of the present research community is to develop general (domain-independent) methods to learn effectively from the unlabeled data, along with a small amount of labels. Achieving this goal would significantly elevate the state-of-the-art machine intelligence, which currently lags behind the human capability of learning from a few labeled examples. Our work is a step in this direction, and provides algorithms and guarantees that enable fundamental techniques for learning from limited labeled data to provably adapt to problem domains.

Graph-based approaches have been popular for learning from unlabeled data for the past two decades \citep{zhu2009introduction}. 
Labeled and unlabeled examples form the graph nodes and (possibly weighted) edges denote the feature similarity between examples. The implicit modeling assumption needed to make semi-supervised learning possible is that the likelihood of having a particular label increases with closeness to nodes of that label (\citet{balcan2010discriminative}). The graph therefore captures how each example is related to other examples, and by optimizing a suitably regularized objective over it one obtains an efficient discriminative, nonparametric method for learning the labels. There are several well-studied ways to define and regularize an objective on the graph \citep{chapelle2006semi,zhu2009introduction}, and all yield comparable results which strongly depend on the graph used. A general formulation is described as follows, variations on which are briefly discussed under related work.

{\bf Problem formulation}: Given sets $L$ and $U$ of labeled and unlabeled examples respectively, and a similarity metric $d$ over the data, the goal is to use $d$ to extrapolate labels in $L$ to $U$. A graph $G$ is constructed with $L+U$ as the nodes and weighted edges $W$ with $w(u,v)=g(d(u,v))$ for some $g:\R_{\ge 0}\rightarrow\R_{\ge 0}$. We seek labels $f(\cdot)$ for nodes $u$ of $G$ which minimize a regularized loss function $l(f)=\alpha \sum_{v\in L} \hat{l}(f(v),y_v)+\beta H(f,W)+\gamma \norm{f}^2$, under some constraints on $f$. The objective $H$ captures the {\it smoothness} (regularization) induced by the graph (see Table \ref{table:algos} for examples) and $\hat{l}(f(v),y_v)$ is the misclassification loss (computed here on labeled examples).

The graph $G$ takes a central position in this formulation. However, the majority of the research effort on this problem has focused on how to design and optimize the regularized loss function $l(f)$, the effectiveness of which crucially depends on $G$. Indeed the graph $G$ is expected to reflect a deep understanding of the problem structure and how the unlabeled data is expected to help. Despite the central role of $G$ in the semi-supervised learning process, only some heuristics are known for setting the graph hyperparameters \citep{zhu2005semi}. There is no known principled study on how to do this and prior work largely treats this as a domain-specific art. 
Is it possible to acquire the required domain expertise, without involving human experts?

In this work we provide an affirmative answer by introducing data-driven algorithms for building the graphs, that is techniques which learn a provably good problem-specific graph from instances of a learning problem. More precisely, we are required to solve not only one instance of the problem, but multiple instances of the underlying algorithmic problem that come from the same domain \citep{balcan2020data,balcan2017learning,gupta2017pac}. This approach allows us to model the problem of identifying a good algorithm from data as an online or statistical learning problem. 
We formulate the problem of creating the learning graph as a data-specific decision problem, where we select the graph from well-known infinite families of candidates which capture a range of ways to encode example similarity. We show learning a near-optimal graph over these families is possible in both online and distributional settings. In the process we generalize and extend results developed in the context of other data-driven learning problems, and obtain practical methods to build the graphs with strong guarantees. In particular, we show that the approach may be used for learning several parameters at once, and it is useful for learning a broader class of parameters than previously known.

\subsection{Our results}

{\bf Semi-supervised learning}: We consider common ways of creating the graph $G$ and formulating the regularized loss $l(f)$ as families of algorithms to learn over, and by learning over these families determine the most suitable semi-supervised learning method for any given problem domain. The graph is created by setting the edge weights according to some (monotonic) function $g$ of the similarity metric $d$, and is parameterized by $\rho$. We denote this graph by $G(\rho)$ (omitting $d$ for conciseness). Note that each value of $\rho$ corresponds to a semi-supervised learning algorithm, where labels for the unlabeled examples are predicted by optimizing over the graph $G(\rho)$ (i.e., similar nodes according to $G(\rho)$ get similar labels). We consider online and distributional settings and provide efficient algorithms to obtain low regret and low error respectively for learning $\rho$.

In the online setting, we receive problem instances $L_i,U_i$ sequentially and must predict labels $f_i$ for the $i$th instance before receiving the next by optimizing over some graph $G(\rho_i)$. We also observe the loss $l(f)$ for prediction according to $G(\rho)$ for all $\rho$ ({\it full information setting}) or some interval containing $\rho_i$ after our prediction ({\it semi-bandit setting}). The performance is measured by the regret of our predictions using $G(\rho_i)$ relative to the optimal graph $G(\rho^*)$ in hindsight. Our key insight is to note that the loss is a piecewise constant function of the parameter $\rho$ with {\it dispersed} discontinuities (Definition \ref{def:dis}) under mild smoothness assumptions on the similarity function, that the metric is not exact and small perturbations to the similarities does not affect learning. Roughly speaking, dispersion means that the discontinuities are not concentrated in a small region, across instances. {The full information setting however can be computationally inefficient, since it involves computing the loss for each of potentially prohibitively many constant performance ``pieces''. This is overcome by Algorithm \ref{alg:ddsslsb} in the semi-bandit setting, where it is sufficient to compute the loss for a small number of pieces contained in an efficiently computable {\it feedback set}. Our implementation involves a novel min-cut and flow recomputation algorithm on a graph with continuously changing edge-weights, and may be of independent interest to the broader theory community.}

In the distributional setting, the problem instances are assumed to be sampled according to some underlying distribution, and we would like to show PAC bounds for learning with low error with a high confidence. We provide asymptotically tight upper and lower bounds on the pseudodimension of learning the best parameter from a parameterized family of semi-supervised learning algorithms, each algorithm corresponding to a graph $G(\rho)$. We consider both unweighted and weighted graph families. Our bounds imply efficient algorithms with PAC guarantees for the unweighted setting, and hardness of efficient learning of worst case instances of weighted graphs. For commonly used approaches to create a weighted graph from a similarity metric, we show efficient learning is still possible under the mild smoothness assumptions used  in the online setting above. {The lower bounds are fairly technical and involve constructing a family of graph instances while setting the distances between graph nodes in a precise correlated manner to ensure that the loss as a function of the parameter oscillates highly and at carefully determined points in the domain.}

Compared to known heuristics to build graphs which consider a fixed problem instance, our approach may be viewed as building graphs by learning over subsets of the full dataset (which previous approaches have not considered to the best of our knowledge) and doing learning with these instances as examples for the hyperparameter learning problem. The setting however is more general and captures batches of partially labeled data arriving online or according to some distribution.


{\bf Multiple metrics}:
In practice, we might have several natural, but very different types of metrics for our problem. The Euclidean distance metric $d(u,v)$ over the representation (embedding) of the examples alone may not best capture the similarity measure between node pairs. When learning over multiple channels or modes simultaneously, for example in detecting people or activities from video footages, one needs to combine information from several natural similarity metrics \citep{balcan2005person}. 
We can view this as a graph with multiple hyperparameters $G(\rho_1,\rho_2, \dots)$, where the additional parameters indicate relative importance of the different metrics. We show how to select a provably good interpolation by generalizing results from \cite{dick2020semi} to multiple parameters. We use tools from algebraic geometry including the Tarski–Seidenberg theorem and properties of the cylindrical algebraic decomposition to accomplish this.

{\bf Data-driven algorithm design}: This work employs and extends powerful general techniques and results for selecting algorithms from a parameterized family in a data-driven way i.e. from several problem instances.  {\it Dispersion} is a property of problem sequences (observed online, or drawn from a distribution) which has been shown to be necessary and sufficient for provably learning optimal parameters for combinatorial algorithms \cite{balcan2018dispersion,sharma2020learning}. {Algorithms for learning dispersed instances are known for both full information and semi-bandit online settings \cite{balcan2018dispersion,dick2020semi}. We study data driven algorithm design for a completely new setting, learning the graph for graph based semi-supervised learning techniques, and undertake the technical work needed to understand the underlying structure of the induced loss functions in these new settings.} 
In the process we extend general tools for deducing dispersion for general algorithm design problems. {Firstly, for one dimensional loss functions, we show a novel structural result for proving dispersion when discontinuities (for loss as function of the algorithm parameter) occur along roots of exponential polynomials with random coefficients with bounded joint distributions (previously shown only for algebraic polynomials \cite{dick2020semi}). This is crucial for showing learnability in the Gaussian graph kernels setting. Secondly, 
prior work \cite{dick2020semi} was only able to prove dispersion when the discontinuities occur along algebraic curves with random coefficients in just two dimensions. By a novel algebraic and learning theoretic argument we are able to analyze higher dimensions, and show dispersion for an arbitrary constant number of parameters, making it much more generally applicable.}


{\bf Key challenges}: We present the first theoretically grounded work outlining how to create good graphs for learning from unlabeled data. Graph-based semi-supervised learning literature has largely been focused on learning approaches given a graph and very little progress made on the arguably more significant problem of designing good graphs. The problem was noted by \cite{zhu2005semi} and has remained largely open for two decades. {We use a data-driven algorithm design perspective \citep{balcan2018dispersion,dick2020semi} and take steps towards resolving this problem. We remark that our techniques are very general and they apply simultaneously for learning the graph when we do prediction by optimizing various quadratic objectives with hard or soft labels (Table \ref{table:algos}).} 

Online learning in our setting poses some interesting challenges. The loss function is a non-Lipschitz function of the parameter, so usual gradient-based approaches do not work. We use mild perturbation invariance assumptions to show dispersion of the non-Lipschitzness which is necessary to overcome the worst case lower bounds. Furthermore, most previously studied settings for dispersion involve polynomially many discontinuities, so efficient algorithms are immediate, which may not be the case for our setting. Instead we crucially rely on semi-bandit algorithms to ensure that the parameters may be learned efficiently, which involve development of careful local search techniques in the parameter space. For weighted graphs and combinatorial optimizations, the challenge of computing changing mincuts with continuously varying graph weights requires a novel algorithm with combinatorial and continuous elements. In the distributional setting, we provide lower bounds for the pseudo-dimension of learning the algorithm, which require technical constructed instances. {In particular we seek to show instances with highly oscillating loss functions at carefully determined points as the graph parameter is varied, which is especially difficult for the weighted graph families.} We note that even for single-parameter families the lower bound is superconstant (for example $\Omega(\log n)$ and even $\Omega(n)$).


Another key challenge we overcome is to show how the results may be extended to tuning several graph hyperparameters at the same time, making our results much more powerful. Similar results are known for linkage-based clustering \citep{balcan2019learning} but they crucially rely on the algorithm depending on relative distances and not the actual values,  and therefore do not extend to our setting. The problem is significantly more complex than the one-dimensional problem as non-Lipschitzness now occurs along high-dimensional surfaces instead of just points of discontinuity for which learnability may be deduced by arguing about the concentration of these points in intervals. For algebraic curves (learning two parameters) one may use a bound on the number of local extrema and curve intersections in a fixed direction \citep{dick2020semi}, but we need a careful projection argument and tools from algebraic geometry to generalize to higher dimensions.

{\bf Extension to active learning}: We also consider an active learning setting where we learn the graph for the graph-based active learning procedure for a fixed constant label budget. We consider data-driven construction of the graph family, where the labels used for semi-supervised learning are obtained by the greedy active learning algorithm from \cite{zhu2003combining}. The procedure selects the next node which minimizes the expected estimated risk after querying the node. We show how to learn a graph for which the budgeted active learning procedure, followed by semi-supervised label predictions, results in provably near-minimum loss. 


\subsection{Related work}

\begin{table}
\centering
\begin{tabular}{l|c|c|c}
Algorithm & $(\alpha,\beta,\gamma)$ & $H(f,W), \norm{\cdot}$ & Constraints on $f$   \\
\hline
 &&&\\
 A. Mincut (\citet{blum2001learning})               &       $(\infty,1,0) $               & $f^T(D-W)f$ &    $f\in \{0,1\}^n$                  \\
 &&&\\
 B. Harmonic functions (\citet{zhu2003semi})          &   $(\infty,1,0) $                    &   $f^T(D-W)f$ &      $f\in [0,1]^n$                \\
 &&&\\
  C. Normalized cut (\citet{shi2000normalized})              &         $(\infty,1,0) $              & $f^T(D-W)f$  &       $f^T\mathbf{1}=0,f^Tf=n^2,$             \\
  &&&$f\in [0,1]^n$\\
  D. Label propagation (\citet{zhou2004learning})            &           $(1,\mu,1) $              & $f^T\mathcal{L} f$, $\norm{\cdot}_2$  &     $f\in [0,1]^n$                 \\
 &&&\\
\end{tabular}
\vspace*{0.4cm}

\begin{caption}
\\Optimization using a quadratic objective involved in some prominent algorithms for graph-based semi-supervised learning. Here $D_{ij}:=\bI[i=j]\sum_{k}W_{ik}, \mathcal{L}:=D^{-1/2}(D-W)D^{-1/2}$ and the objective is $l(f)=\alpha \sum_{u\in L} (f(u)-y_u)^2+\beta H(f,W)+\gamma \norm{f}^2$.\label{table:algos}
\end{caption}
\end{table}

{\it Semi-supervised learning} is a paradigm for learning from both labeled and unlabeled data (\citet{zhu2009introduction}). It resembles human learning behavior more closely than fully supervised and fully unsupervised models (\citet{gibson2013human,zhu2007humans}). Learning from unlabeled data may be possible due to implicit {\it compatibility} between the target concept and the underlying data distribution (\citet{balcan2010discriminative}). A semi-supervised learning method makes a particular compatibility assumption and provides a procedure to determine a predictor which fits the labeled data well and has high compatibility.

{\it Graph-based methods}: A prominent and effective approach widely used in practice for semi-supervised learning is to optimize a graph-based objective. Here the compatibility assumption is that the labels are {\it smooth} over the graph, and as such the performance is highly sensitive to the graph structure and the edge weights. Since labels partition the graph, we seek a (possibly soft) graph cut as the predictor. Several methods have been proposed to obtain predictors given a graph including $st$-mincuts (\citet{blum2001learning}), soft mincuts that optimize a quadratic energy objective (\citet{zhu2003semi}), label propagation (\citet{zhu2002learning}), and many more (\citet{blum2004semi,belkin2006manifold}). Table \ref{table:algos} summarizes the optimization involved in some prominent algorithms. $\alpha=\infty$ corresponds to forcing labels of labeled examples $L$.

However, it is not clear how to create the graph itself on which the extensive literature stands, although some heuristics are known (\citet{zhu2005semi}).
\citet{zemel2004proximity} discuss how to create a robust graph by considering an ensemble of minimum spanning trees for several data perturbations and randomly retaining edges which appear often. The algorithm however uses a parameter $t$ for expected graph density and it is unclear how to set it for any given problem instance, and no theoretical guarantees are provided. \citet{sindhwani2005beyond} construct {\it warped} kernels more aligned with the data geometry, but the performance may vary strongly with warping and it is not clear how to optimize over it. To the best of our knowledge, we provide the first techniques that yield provably near-optimal graphs. Prior art largely compares semi-supervised graph learning in terms of assumption generality and experimental evidence.

{\it Data-driven design and dispersion}: \citet{gupta2017pac} define a formal learning framework for selecting algorithms from a family of heuristics or a range of hyperparameters. The framework is further developed by \citet{balcan2017learning} and its usefulness as a fundamental algorithm design perspective has been noted \citep{balcan2020data,blum2020technical}. It has been successfully applied to several combinatorial problems like integer programming and clustering \citep{balcan2018learning,balcan2019learning,balcan2018data} and for giving powerful guarantees like adversarial robustness, adaptive learning and differential privacy \citep{balcan2018dispersion,balcan2020power,sharma2020learning,vitercik2019estimating}.  \citet{balcan2020data} provides a simple introduction to and a comprehensive survey on this rapidly expanding research direction.

\citet{balcan2018dispersion} introduce {\it dispersion}, a useful property of the problem instances with respect to an algorithm family, which, if satisfied, intuitively allows it to be efficiently learned in {\it full information} online as well as distributional settings. Full information may be expensive to compute and work with, and a semi-bandit algorithm is introduced by \citet{dick2020semi}. The same work also presents a general technique for analyzing dispersion which is used to show dispersion when the non-Lipschitzness occurs along roots of polynomials with random coefficients, for up to two parameters. {We apply dispersion in a new problem setting, and show how to learn algorithms for semi-supervised learning of labels by carefully studying the properties of these problems.} We also extend the technique, and prove that dispersion holds for a broader setting involving non-polynomial discontinuities and employ tools from algebraic geometry to extend the theory to an arbitrary constant number of parameters.

\section{Notation and definitions}\label{sec:notation}
We are given some labeled points $L$ and unlabeled points $U$. One constructs a graph $G$ by placing (possibly weighted) edges $w(u,v)$ between pairs of data points $u,v$ which are `similar', and labels for the unlabeled examples are obtained by optimizing some graph-based score. We have an oracle $O$ which on querying provides us the labeled and unlabeled examples, and we need to pick $G$ from some family $\G$ of graphs. We commit to using some algorithm $A(G,L,U)$ (abbreviated as $A_{G,L,U}$) which provides labels for examples in $U$, and we should pick a $G$ such that $A(G,L,U)$ results in small error in its predictions on $U$. To summarize more formally,

{\it Problem statement}: Given data space $\X$, label space $\Y$ and an oracle $O$ which yields a number of labeled examples  $L\subset \X\times\Y$ and some unlabeled examples $U\subset \X$ such that $|L|+|U|=n$. We are further given a parameterized family of graph construction procedures over parameter space $\cP$, $\G:\cP\rightarrow(\X\times\X\rightarrow \R_{\ge 0})$, graph labeling algorithm $A:(\X\times\X\rightarrow \R_{\ge 0})\times 2^{\X\times(\Y\cup \{\perp\})}\rightarrow (\X\rightarrow\Y)$, a loss function $l:\Y\times\Y\rightarrow [0,1]$ and a target labeling $\tau:U\rightarrow \Y$. We need to select $\rho\in \cP$ such that corresponding graph $G(\rho)$ minimizes $\sum_Ul(A_{G(\rho),L,U}(u),\tau(u))$ w.r.t. $\rho$.

We will consider online and distributional settings of the above problem. In the online setting we make no distributional assumptions about the data and simply seek to minimize the regret, i.e. the loss suffered in an arbitrary online sequence of oracle queries $O$ relative to that endured by the best parameter $\rho^*$ in hindsight. In the distributional setting we will assume that the data and labels supplied by $O$ come from an underlying distribution $\D$ and we would like to minimize the expected loss suffered on test examples drawn from the distribution with high probability. We will present further details and notations for the respective settings in the subsequent sections.

We will now describe graph families $\G$ and algorithms $A_{G,L,U}$ considered in this work. We assume there is a feature based {\it similarity function} $d:\X\times\X\rightarrow\R_{\ge 0}$, a metric which monotonically captures similarity between the examples. In section \ref{sec:multiple-metrics}, we will consider creating graphs with several similarity functions, but for now assume we have a single $d$. Definition \ref{defn:g} summarizes commonly used parametric methods to build a graph using the similarity function.



In this work, we will consider three parametric families of graph construction algorithms defined below. $\bI[\cdot]$ is the indicator function taking values in $\{0,1\}$.

\begin{defn}\label{defn:g} Graph kernels.
\begin{enumerate}
   \item[a)] Threshold graph, $G(r)$. Parameterized by a threshold $r$, we set $w(u,v)=\bI[d(u,v)\le r]$.
    \item[b)] Polynomial kernel, $G(\Tilde{\alpha})$. $w(u,v)=(\Tilde{d}(u,v)+\Tilde{\alpha})^d$ for fixed degree $d$, parameterized by $\Tilde{\alpha}$.\footnote{With some notational abuse here, we have $d$ as the integer degree of the polynomial, and $\Tilde{d}(\cdot,\cdot)$ as the similarity function.}
    \item[c)] Gaussian RBF or exponential kernel, $G(\sigma)$. $w(u,v)=e^{-d(u,v)^2/\sigma^2}$, parameterized by $\sigma$.
\end{enumerate}

\end{defn} 

\begin{remark}
Another popular family of graphs used in practice is the $k$ nearest neighbor graphs, where $k\in\{0,1,\dots,n-1\}$, $n$ is the number of nodes in the graph, is the parameter. Even though $k$-NN graphs may result in different graphs the ones considered in the paper, learning how to build an optimal graph over the algorithm family $G(k)$ is much simpler. Online learning of the parameter $k$ in this setting can be recognized as an instance of learning with experts advice for a finite hypothesis class (Section 3.1 of \cite{shalev2011online}), where an upper bound of $O(\sqrt{T\log n})$ is known for the Weighted Majority algorithm. Online-to-batch conversion provides generalization guarantees in the distributional setting (Section 5 of \cite{shalev2011online}). We remark that our algorithm families need more sophisticated analysis due to continuous ranges of the algorithm parameters.
\end{remark}

The threshold graph adds (unweighted) edges to $G$ only when the examples are closer than some $r\in\R_{\ge 0}$, i.e. a step function of the distance. Polynomial and exponential kernels add (weighted) edges to the graph, with weights varying polynomially and exponentially (respectively) with the similarity. Note that similarity function $\Tilde{d}(u,v)$ in the definition for polynomial kernels  increases monotonically with similarity of examples, as opposed to the other two\footnote{Common choices are setting $d(u,v)$ as the Euclidean norm and $\Tilde{d}(u,v)$ as the dot product when $u,v\in\R^n$}. Usually the threshold graph setting (Definition \ref{defn:g}a) will be easier to optimize over, but it is also a small parameter family  often with relatively weaker performance in practice. In the following, we will refer to this setting by the {\it unweighted graph} setting, and the other two settings (Definitions \ref{defn:g}b and \ref{defn:g}c) by the {\it weighted graph} setting. Often we will discuss just the Gaussian RBF setting since it is more technically challenging and more commonly used in practice for building graphs. However, in some instances working through the polynomial kernel setting can provide useful insights. 

Once the graph is constructed using one of the above kernels, we can assign labels using a suitable algorithm $A_{G,L,U}$. A popular and effective approach is by optimizing a quadratic objective $\frac{1}{2}\sum_{u,v}w(u,v)(f(u)-f(v))^2=f^T(D-W)f$. Here $f$ may either be discrete $f(u)\in\{0,1\}$ which corresponds to finding a graph mincut separating the oppositely labeled vertices \citep{blum2001learning}, or $f$ may be continuous, i.e. $f\in[0,1]$, and we can round $f$ to obtain the labels \citep{zhu2003semi}. These correspond to algorithms A and B respectively from Table \ref{table:algos}. It is noted that all algorithms have comparable performance provided the graph $G$ encodes the problem well \citep{zhu2009introduction}. We restrict our attention to these two algorithms for simplicity of presentation, although our algorithms and proofs may be extended to any quadratic objective based algorithm in Table \ref{table:algos}\footnote{Specifically by extending arguments for algorithm B since the optimization is similar. In contrast, Algorithm A is combinatorial and the reasoning diverges somewhat.}.

Finally we note definitions of some useful learning theoretic complexity measures. First recall the definitions of pseudodimension and Rademacher complexity, well-known measures for hypothesis-space complexity in statistical learning theory. Bounding these quantities implies immediate bounds on learning error using classic learning theoretic results. In  Section \ref{sec: distrib} we will bound the pseudodimension and Rademacher complexity for the problems of learning unweighted and weighted graphs.
\begin{defn}Pseudo-dimension \citep{pollard2012convergence}. Let $\h$ be a set of real valued functions from input space $\X$. We say that
$C = (x_1, \dots, x_m)\in \X^m$ is pseudo-shattered by $\h$ if there exists a vector
$r = (r_1, \dots, r_m)\in\R^m$ (called ``witness”) such that for all
$b= (b_1, \dots, b_m)\in \{\pm 1\}^m $ there exists $h_b\in \h$ such that $\text{sign}(h_b(x_i)-r_i)=b_i$. Pseudo-dimension of $\h$  is the cardinality of the largest set
pseudo-shattered by $\h$.
\end{defn}

\begin{defn}
Rademacher complexity \citep{bartlett2002rademacher}. Let $\F=\{f_\rho:\X\rightarrow [0,1], \rho\in\C\subset\R^d\}$ be a parameterized family of functions, and sample $\s=\{x_i,\dots,x_T\}\subseteq\X$. The empirical Rademacher complexity of $\F$ with respect to $\s$ is defined as $\hat{R}(\F,\s)=\E_{\mathbf{\sigma}}\left[\sup_{f\in\F}\frac{1}{T}\sum_{i=1}^T\sigma_if(x_i)\right]$, where $\sigma_i\sim U(\{-1,1\})$ are Rademacher variables.
\end{defn}

We will also need the definition of {\it dispersion} which, informally speaking, captures how amenable a non-Lipschitz function is to online learning. As noted in \citep{balcan2018dispersion,sharma2020learning}, dispersion is necessary and sufficient for learning piecewise Lipschitz functions.
\begin{defn}\label{def:dis} Dispersion \citep{dick2020semi}.
The sequence of random loss functions $l_1, \dots,l_T$ is $\beta$-{\it dispersed} for the Lipschitz constant $L$ if, for all $T$ and for all $\epsilon\ge T^{-\beta}$, we have that, in expectation, at most
$\Tilde{O}(\epsilon T)$ functions (the soft-O notation suppresses dependence on quantities beside $\epsilon,T$ and $\beta$, as well as logarithmic terms)
are not $L$-Lipschitz for any pair of points at distance $\epsilon$ in the domain $\C$. That is, for all $T$ and for all $\epsilon\ge T^{-\beta}$,
\begin{align*}
    \E\left[
\max_{\substack{\rho,\rho'\in\C\\\norm{\rho-\rho'}_2\le\epsilon}}\big\lvert
\{ t\in[T] \mid l_t(\rho)-l_t(\rho')>L\norm{\rho-\rho'}_2\} \big\rvert \right] 
\le  \Tilde{O}(\epsilon T).
\end{align*}
\end{defn}

\section{New general dispersion-based tools for data-driven design}\label{sec:dtools}

We present new techniques and generalize known tools for analyzing data-driven algorithms \cite{balcan2018dispersion,dick2020semi}. Our new tools apply to a very broad class of algorithm design problems, for which we derive sufficient {\it smoothness} conditions to infer dispersion of a random sequence of problems, i.e. the algorithmic performance as a function of the algorithm parameters is dispersed. \citet{dick2020semi} provide a general tool for verifying dispersion if non-Lipschitzness occurs along roots of (algebraic) polynomials in one and two dimensions. We improve the results in the following two ways.

Our first result is that dispersion for one-dimensional loss functions follows when the points of discontinuity occur at the roots of exponential polynomials if the coefficients are random, lie within a finite range, and are drawn according to a bounded joint distribution. In addition to generalizing prior results, we present a new simpler proof. The full proof appears in Appendix \ref{app:exp}.

\begin{theorem}\label{thm:exproots1}
Let $\phi(x)=\sum_{i=1}^na_ie^{b_ix}$ be a random function, such that coefficients $a_i$ are real and of magnitude at most $R$, and distributed with joint density at most $\kappa$. Then for any interval $I$ of width at most $\epsilon$, P($\phi$ has a zero in $I$)$\le \Tilde{O}(\epsilon)$ (dependence on $b_i,n,\kappa,R$ suppressed).
\end{theorem}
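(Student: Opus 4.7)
The plan is to bound $P(\phi$ has a zero in $I)$ by marginalizing over all coefficients but one and showing that, for any fixed values of the remaining $n-1$ coefficients, the set of "bad" values of the singled-out coefficient has Lebesgue measure $O(\epsilon)$; since the joint density is at most $\kappa$, this immediately gives the conclusion.

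First I would condition on $a_1,\dots,a_{n-1}$ and solve for $a_n$. The equation $\phi(x)=0$ is equivalent to
\[
a_n \;=\; h(x)\;:=\;-\sum_{i=1}^{n-1} a_i\,e^{(b_i-b_n)x},
\]
so the set of bad $a_n$ is contained in the image $h(I)$. Since $h$ is continuous, $h(I)$ is a connected subset of $\R$ and hence an interval; by the mean value theorem, its length is at most $|I|\cdot \sup_{x\in I}|h'(x)|$.

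Second I would bound $|h'(x)|$ uniformly over $I$. Differentiating and using $|a_i|\le R$, together with the implicit assumption that $I$ sits in a compact window $[-B,B]$ of the parameter domain (which is the relevant regime for the paper's dispersion applications), yields
\[
|h'(x)|\;\le\;R\sum_{i=1}^{n-1} |b_i-b_n|\,e^{|b_i-b_n|B}\;=:\;M,
\]
a finite constant depending only on $n,R,B$ and $\{b_i\}$. Hence $h(I)$ has measure at most $M\epsilon$. Integrating, using $\mu(a)\le\kappa$ on $[-R,R]^n$ and $|h(I)\cap[-R,R]|\le M\epsilon$,
\[
P(\phi\text{ has a zero in }I)\;\le\;\kappa\int_{[-R,R]^{n-1}} |h(I)|\,da_1\cdots da_{n-1}\;\le\;\kappa(2R)^{n-1}M\epsilon\;=\;\tilde O(\epsilon).
\]

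The main technical subtlety is that $h$ need not be monotone, so one cannot simply identify $h(I)$ with $[h(c),h(c+\epsilon)]$; however, continuity plus the derivative bound is enough to show that $h(I)$ is an interval of small length, which is exactly why this proof is simpler than its algebraic polynomial analog in \citet{dick2020semi} (that argument needs resultants or explicit root-counting). A minor but unavoidable point is that $M$ depends on a bound $B$ on the location of $I$, which is harmlessly available in any dispersion setting since the algorithm parameters live in a bounded domain; one should also assume (or reduce to the case of) distinct $b_i$, merging equal ones into a single term before applying the argument.
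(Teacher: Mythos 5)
Your proof is correct in all its essential steps, but it takes a genuinely different route from the paper's. You reduce the claim to a one-dimensional marginalization: fix $a_1,\dots,a_{n-1}$, observe that the bad set of $a_n$ is the image $h(I)$, control its length by a derivative bound, and integrate against the $\kappa$-bounded joint density. The paper's Appendix~\ref{app:exp} argument is geometric in the full coefficient space $\R^n$: it identifies the bad set as a union of hyperplanes $S_\rho$ orthogonal to the exponential moment vectors $\varrho(\rho)$, bounds the angle $\theta_{\rho,\rho'}=\tilde O(\epsilon)$ between nearby moment vectors by a Taylor expansion (in which the linear-in-$\varepsilon$ contributions cancel after symmetrizing $i\leftrightarrow j$), and then converts the small angle into a small-volume ``thickened slab'' inside $[-R,R]^n$. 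Both are valid; yours is more elementary, while the paper's has one concrete advantage that you already anticipate: its hidden constant depends only on $\{b_i\},n,\kappa,R$, not on the location of $I$, because the angle-ratio $\sin^2\theta_{\rho,\rho'}\approx\varepsilon^2\cdot\sum_{i<j}e^{2\rho(b_i+b_j)}(b_i-b_j)^2\big/\sum_{i,j}e^{2\rho(b_i+b_j)}$ is uniformly bounded in $\rho$ by $\max_{i<j}(b_i-b_j)^2$. Your constant $M=R\sum_i|b_i-b_n|e^{|b_i-b_n|B}$ genuinely blows up with $B$ if you always eliminate the same coefficient $a_n$; the theorem statement as written suppresses only $b_i,n,\kappa,R$, so this is a real (if small) deficit. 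The fix, consistent with your own observation, requires a tiny bit of care: choose \emph{which} coefficient to eliminate according to the side of the origin on which $I$ lies. If $b_n=\max_i b_i$ and $I\subseteq[0,\infty)$, every exponent $b_i-b_n$ is $\le 0$, so $|h'(x)|\le R\sum_i|b_i-b_n|$ with no exponential in $B$; the symmetric choice $b_n=\min_i b_i$ handles $I\subseteq(-\infty,0]$, and an interval straddling $0$ can be split. With that adaptive choice your proof becomes fully location-independent and matches the strength of the paper's. The remark about merging coinciding exponents is correct and, as you note, degenerate exponent patterns reduce to measure-zero events.
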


\begin{proof}[Proof Sketch]
For $n=1$ there are no roots, so assume $n>1$. Suppose $\rho$ is a root of $\phi(x)$. Then $\mathbf{a}=(a_1,\dots,a_n)$ is orthogonal to $\varrho(\rho)=(e^{b_1\rho},\dots,e^{b_n\rho})$ in $\R^n$. For a fixed $\rho$, the set $S_\rho$ of coefficients $\mathbf{a}$ for which $\rho$ is a root of $\phi(y)$ lie along an $n-1$ dimensional linear subspace of $\R^n$. Now $\phi$ has a root in any interval $I$ of length $\epsilon$, exactly when the coefficients lie on $S_\rho$ for some $\rho\in I$.  The desired probability is therefore upper bounded by $\max_{\rho}\textsc{Vol}(\cup S_y\mid y\in [\rho - \epsilon, \rho + \epsilon])/\textsc{Vol}(S_y\mid y\in \R)$ which we will show to be $\Tilde{O}(\epsilon)$. The key idea is that if $|\rho-\rho'|<\epsilon$, then $\varrho(\rho)$ and $\varrho(\rho')$ are within a small angle $\theta_{\rho,\rho'}=\Tilde{O}(\epsilon)$ for small $\epsilon$ (the probability bound is vacuous for large $\epsilon$). But any point in $S_{\rho}$ is at most $\Tilde{O}(\theta_{\rho,\rho'})$ from a point in $S_{\rho'}$, which implies the desired bound.
\end{proof}

We further go beyond single-parameter discontinuties, which occur as points along a line to general small dimensional parameter spaces $\R^p$, where discontinuties can occur along algebraic hypersurfaces.
We employ tools from algebraic geometry to establish a bound on shattering of algebraic hypersurfaces by axis-aligned paths (Theorem \ref{thm:alg-hyp}), which implies dispersion using a VC dimension based argument (Theorem \ref{thm:VC-bound-general}). Our result is the first of its kind, a general sufficient condition for dispersion for any constant number $p$ of parameters, and applies to a broad class of algorithm families. Full proofs may be found in Appendix \ref{app:multimetric}.

\begin{restatable}{theorem}{thmalghyp}\label{thm:alg-hyp}
There is a constant $k$ depending only on $d$ and $p$ such that axis-aligned line segments in $\R^p$ cannot
shatter any collection of $k$ algebraic hypersurfaces of degree at most $d$.
\end{restatable}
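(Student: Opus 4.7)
The plan is to fix an axis direction and count, in the corresponding segment parameter space, the number of distinct ``hit patterns'' $(\chi_1,\dots,\chi_k)\in\{0,1\}^k$ that the $k$ hypersurfaces can induce on axis-aligned segments of that direction; summed over the $p$ directions, the total must be at least $2^k$ for shattering to occur. I will show that the per-direction count is polynomial in $k$ with constants depending only on $d$ and $p$, which forces $k$ to be bounded.

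First I would parameterize the set of axis-aligned segments of direction $j$ by $(\mathbf{x}_{-j}, a, b)\in\R^{p-1}\times\R^2 \cong \R^{p+1}$, where $\mathbf{x}_{-j}$ records the $p-1$ fixed coordinates and $a\le b$ are the endpoints along the $j$-th axis. For each hypersurface $H_i=\{f_i=0\}$ with $\deg f_i\le d$, let $A_i^{(j)}=\{(\mathbf{x}_{-j},a,b) : \exists\, t\in[a,b]\ \text{with}\ f_i(\mathbf{x}_{-j},t)=0\}$. This is semi-algebraic by Tarski--Seidenberg, and the hit pattern at a parameter point is precisely the indicator vector of membership in $(A_1^{(j)},\dots,A_k^{(j)})$. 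Hence the number of direction-$j$ hit patterns equals the number of cells in the arrangement of the $A_i^{(j)}$.

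Next I would argue that the topological boundary of $A_i^{(j)}$ is contained in the zero set of three explicit polynomials: $f_i(\mathbf{x}_{-j},a)$, $f_i(\mathbf{x}_{-j},b)$, and the discriminant $\mathrm{Disc}_t(f_i)(\mathbf{x}_{-j})$. Intuitively, one can only leave $A_i^{(j)}$ when a real root of $f_i(\mathbf{x}_{-j},\cdot)$ either crosses an endpoint of $[a,b]$ or collides with another real root and becomes complex inside the interval; the first two events correspond to the endpoint polynomials, the third to the discriminant. These are $3k$ algebraic hypersurfaces in $\R^{p+1}$ of degree bounded by a constant depending only on $d$. A standard Milnor--Thom / Warren cell-counting bound then yields at most $O((kd)^{p+1})$ connected components in the arrangement, and hence at most that many direction-$j$ hit patterns.

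Summing over the $p$ axis directions gives at most $p\cdot O((kd)^{p+1})$ distinct hit patterns, while shattering demands $2^k$ of them, so $k$ is at most some constant depending only on $d$ and $p$. The main obstacle I anticipate is justifying the boundary characterization in the previous paragraph: an honest proof requires careful case analysis at parameter points where a root has even multiplicity tangent to $[a,b]$ or where two roots simultaneously merge at an endpoint, to verify that no additional semi-algebraic boundary pieces arise. If this direct characterization proved stubborn, one could instead invoke quantitative Tarski--Seidenberg bounds (e.g., Basu--Pollack--Roy) on the semi-algebraic sets $A_i^{(j)}$ to control the combinatorial complexity of the arrangement without writing down the boundary explicitly, which would still yield a polynomial-in-$k$ bound sufficient for the same conclusion.
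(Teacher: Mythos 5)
Your proposal is correct but takes a genuinely different route from the paper. The paper fixes an axis direction and then decomposes the count into two factors: along any single line in that direction, Bezout's theorem caps the number of intersection regions at $kd+1$, yielding at most $\binom{kd+1}{2}$ subsets hit by segments of that line; and the lines themselves fall into equivalence classes given by the cells of a cylindrical algebraic decomposition of the projection of the hypersurfaces onto the orthogonal hyperplane (obtained via Tarski--Seidenberg), bounded by $O\bigl((2d)^{2^p-1}k^{2^p-1}2^{2^{p-1}}\bigr)$. You instead parameterize all segments of a fixed direction directly as points of $\R^{p+1}$, observe that each ``segment meets $H_i$'' set is semi-algebraic with boundary carried by the two endpoint evaluations and the $t$-discriminant of $f_i$, and then apply a Milnor--Thom/Warren cell bound to the arrangement of these $3k$ bounded-degree hypersurfaces in $\R^{p+1}$. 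Your approach avoids the two-stage decomposition and gives a cleaner single-exponent bound $O\bigl((kd)^{p+1}\bigr)$ per direction in place of the paper's CAD-derived doubly-exponential expression; in exchange, it needs the explicit boundary characterization, which requires care in degenerate cases you rightly flag (roots of even multiplicity at endpoints, $\mathrm{Disc}_t(f_i)\equiv 0$ when $f_i$ has repeated factors in $t$, or the leading $t$-coefficient vanishing so that roots escape to infinity). Your fallback of invoking quantitative Tarski--Seidenberg bounds (Basu--Pollack--Roy) on the arrangement of the projection sets $A_i^{(j)}$ sidesteps those case distinctions entirely and is the safer way to finish; either way the per-direction count is polynomial in $k$, which, summed over the $p$ directions and compared against $2^k$, yields the claimed constant $k$.
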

\begin{proof}[Proof Sketch]
Let $\C$ denote a collection of $k$ algebraic hypersurfaces of degree at most $d$ in $\R^p$. 
We say that a subset of $\C$ is {\it hit} by a line segment if the subset is exactly the set of curves in $\C$ which intersect the segment, and {\it hit} by a line if some segment of the line hits the subset. We can upper bound the subsets of $\C$ by line segments in a fixed axial direction $x$ in two steps. Along a fixed line, Bezout's theorem bounds the number of intersections and therefore subsets hit by different line segments. The lines along $x$ can further be shown to belong to equivalence classes corresponding to cells in the cylindrical algebraic decomposition of the projection of the hypersurfaces, orthogonal to $x$. Finally, we can extend this to axis-aligned segments by noting they may hit only $p$ times as many subsets.
\end{proof}


\begin{restatable}{theorem}{thmvcgeneral}\label{thm:VC-bound-general}
 Let $l_1, \dots, l_T : \R^p \rightarrow \R$ be independent piecewise $\mathcal{L}$-Lipschitz functions, each having discontinuities specified by a collection of at most $K$ algebraic hypersurfaces of bounded degree. Let $L$ denote the set of axis-aligned paths between pairs of points in $\R^p$, and for each $s\in L$ define
 $D(T, s) = |\{1 \le t \le T \mid l_t\text{ has a discontinuity along }s\}|$. Then we
have $\E[\sup_{s\in L} D(T, s)] \le \sup_{s\in L} \E[D(T, s)] +
O(\sqrt{T \log(TK)})$.
\end{restatable}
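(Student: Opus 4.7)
The plan is a standard uniform-convergence argument: reduce to a Rademacher complexity via symmetrization, then bound that complexity using Theorem \ref{thm:alg-hyp} and Massart's finite class lemma.

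First I would introduce the indicator $X_t(s)=\mathbf{1}[l_t\text{ has a discontinuity along }s]\in\{0,1\}$, so that $D(T,s)=\sum_{t=1}^T X_t(s)$, and $X_1,\dots,X_T$ are independent (the $l_t$ being independent). The elementary inequality $\sup_s D(T,s)\le \sup_s \E[D(T,s)]+\sup_s(D(T,s)-\E[D(T,s)])$ reduces the theorem to showing $\E\sup_{s\in L}(D(T,s)-\E[D(T,s)])=O(\sqrt{T\log(TK)})$. Applying the standard symmetrization inequality (introducing ghost losses $l_t'$ and Rademacher signs $\sigma_t$) dominates this by $2\,\E\sup_{s\in L}\sum_t \sigma_t X_t(s)$.

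Next I would condition on the realization of $l_1,\dots,l_T$, so that the associated collection $\mathcal{H}$ of at most $TK$ discontinuity hypersurfaces is fixed. Because $X_t(s)\in\{0,1\}$ and the vector $(X_1(s),\dots,X_T(s))$ ranges over a finite subset $V\subseteq\{0,1\}^T$ with $\|v\|_2\le\sqrt{T}$ for every $v\in V$, Massart's finite class lemma gives
\[
\E_\sigma\sup_{s\in L}\sum_{t=1}^T\sigma_t X_t(s)\;\le\;\sqrt{2T\log N_T},
\]
where $N_T=|V|$ is the growth function, i.e.\ the number of distinct label patterns achievable by $s\in L$. It then suffices to prove $\log N_T=O(\log(TK))$ deterministically.

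Each label pattern is determined by the subset of $\mathcal{H}$ hit by the path $s$. An axis-aligned path between two points in $\R^p$ is a concatenation of at most $p$ axis-aligned segments (one per coordinate), so its hit-set is the union of at most $p$ segment hit-sets. By Theorem \ref{thm:alg-hyp}, axis-aligned segments cannot shatter more than $k=k(d,p)=O(1)$ hypersurfaces, so the Sauer–Shelah lemma gives at most $O((TK)^k)$ distinct segment hit-sets over $\mathcal{H}$. Forming unions of $p$ such sets yields $N_T\le O((TK)^{kp})$, whence $\log N_T=O(\log(TK))$. Substituting back produces the $O(\sqrt{T\log(TK)})$ bound, completing the argument after taking expectation over the $l_t$ (the bound being deterministic in the conditioned quantities).

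The main obstacle is the segments-to-paths passage: one must verify that allowing $p$ axis-aligned segments per path only inflates the growth function polynomially, which relies on both $p$ and $k$ being constants. Secondary care is needed in the conditioning step of Massart (the finite class depends on the realized hypersurfaces, but the bound $\sqrt{2T\log N_T}$ holds uniformly since $N_T$ is bounded deterministically by the Sauer–Shelah count), and in a clean symmetrization that accommodates $X_t$ being a nonlinear functional of the random hypersurface family defining $l_t$'s discontinuities.
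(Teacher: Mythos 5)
Your proof is correct and follows essentially the same route as the paper's: both reduce the deviation term to a uniform-convergence bound over axis-aligned paths, both invoke Theorem~\ref{thm:alg-hyp} to control how many discontinuity hypersurfaces can be distinguished by a single axis-aligned segment, and both lift from segments to paths by paying a factor of $p$. The only difference is packaging: the paper cites VC uniform convergence as a black box after bounding the VC dimension of the class $\F=\{f_s\}$ at $O(\log K)$, whereas you unfold that same bound into symmetrization, a deterministic Sauer--Shelah/growth-function count of $O((TK)^{kp})$ realizable hit-patterns, and Massart's finite-class lemma --- a perfectly legitimate and, if anything, slightly more self-contained way to reach the identical $O(\sqrt{T\log(TK)})$ term.
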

\begin{proof}[Proof Sketch]
We  relate the
number of ways line segments can label vectors of $K$ algebraic hypersurfaces of degree $d$ to the VC-dimension of line
segments (when labeling algebraic hypersurfaces), which from Theorem \ref{thm:alg-hyp} is constant. To verify dispersion,
we need a uniform-convergence bound on the number of Lipschitz failures between the worst pair of points $\rho,\rho'$
at
distance $\le \epsilon$, but the definition allows us to bound the worst rate of discontinuties along any path between $\rho,\rho'$ of our
choice. We can bound the VC dimension of axis aligned segments against bounded-degree algebraic
hypersurfaces, which will allow us to establish dispersion by considering piecewise axis-aligned paths between points $\rho$ and $\rho'$.
\end{proof}

\section{Data-driven semi-supervised learning}
We will warm up this section with a simple example demonstrating the need for and challenges posed by the problem of learning how to build a good graph from data. We will then consider online and distributional settings in sections \ref{sec: ol} and \ref{sec: distrib} respectively. For online learning we show how to employ and extend dispersion based analysis to our setting, and obtain algorithms which learn good graphs with low regret and are efficient to implement, under mild assumptions on data niceness. For the distributional setting, we analyze the pseudodimension and Rademacher complexity of our learning problems which imply generalization guarantees for learning the graph parameters.

{\it Transductive and inductive aspects}: For semi-supervised learning, we may distinguish the transductive setting where predictions are evaluated only on provided unlabeled examples $U$, with the inductive setting where we also care about new unseen examples coming from the same distribution. Graph-based methods were originally introduced as transductive learning approaches \citep{blum2001learning,zhu2002learning}, but may be used in either setting. For induction we may recompute the graph, or use a fixed subgraph (and assume that new points do not affect the transductive labels) for more efficient prediction \citep{delalleau2005efficient}. Our setting has an inductive aspect since we learn a graph (by learning graph parameter values) which we expect to use for unseen problem instances. 
\subsection{Any threshold may be optimal}
We consider the setting of learning thresholds for unweighted graphs (Definition \ref{defn:g}a). We give a simple demonstration that in a {\it single instance} any threshold may be optimal for labelings consistent with graph smoothness assumptions, therefore providing motivation for the learning in our setting. The example below captures the intuition that any unlabeled point may get weakly connected to examples from one class for a small threshold but may get strongly connected to another class as the threshold is increased to a larger value. Therefore depending on the unknown true label either threshold may be optimal or suboptimal, and it makes sense to learn the correct value through repeated problem instances.
\begin{theorem}
Let $r_{\min}$ denote the smallest value of threshold $r$ for which every unlabeled node of $G(r)$ is reachable from some labeled node, and $r_{\max}$ be the smallest value of threshold $r$ for which $G(r)$ is the complete graph. There exists a data instance $(L,U)$ such that for any $r_{\zeta}=\zeta r_{\min}+(1-\zeta) r_{\max}$ for $\zeta\in(0,1)$, there exists a set of labelings $\U$ of the unlabeled points such that for some $U_{\zeta},\Bar{U}_\zeta\in\U$, $r_{\zeta}$ minimizes $l_{A(G(r),L,U_{\zeta})}$ but not  $l_{A(G(r),L,\Bar{U}_\zeta)}$. 
\end{theorem}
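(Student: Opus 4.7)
The plan is to exhibit a single explicit instance $(L,U)$ on the real line whose algorithm output $\hat L_r := A(G(r), L, U)$ takes at least two distinct values as $r$ ranges over the open interval $(r_{\min}, r_{\max})$, and then to build the two required labelings directly from those distinct outputs. Once this structural property is in hand, I will choose $U_\zeta$ to be exactly what the algorithm predicts at $r_\zeta$ (forcing zero loss, hence optimality), and $\bar U_\zeta$ to be exactly what the algorithm predicts at some \emph{other} $r_0$ in the interval (forcing strictly positive loss at $r_\zeta$, hence suboptimality).

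For the construction I place $L = \{\ell_+, \ell_-\}$ (one node of each class) together with a small number of unlabeled nodes between them at asymmetric distances, chosen so that at some critical $r^\star \in (r_{\min}, r_{\max})$ a newly activated edge of $G(r)$ alters the min-cut (algorithm A) or tips a harmonic value across $1/2$ (algorithm B). A concrete realization is $\ell_+=0$, $u_1=4$, $u_2=7$, $\ell_-=10$ on $\R$; then $r_{\min}=3$ (via edges $(u_2,\ell_-),(u_1,u_2)$), $r_{\max}=10$, and a closed-form solve of the harmonic system yields $\hat L_r(u_1)=0$ for $r\in[3,4)$ but $\hat L_r(u_1)=1$ for $r\in[4,6)$, confirming that $\hat L_r$ realizes at least two distinct labelings inside the open interval. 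The same configuration, augmented with any deterministic tie-breaking rule, works for algorithm A of Table \ref{table:algos}.

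Given the instance, fix any $\zeta\in(0,1)$ and set $U_\zeta := \hat L_{r_\zeta}$. Then $l_{A(G(r_\zeta),L,U_\zeta)} = 0$, and nonnegativity of the loss makes $r_\zeta$ a minimizer of $r \mapsto l_{A(G(r),L,U_\zeta)}$. Since $\hat L_r$ takes at least two distinct values on $(r_{\min},r_{\max})$, there exists $r_0$ in the interval with $\hat L_{r_0}\neq \hat L_{r_\zeta}$; set $\bar U_\zeta := \hat L_{r_0}$. At $r_\zeta$ at least one node of $U$ is misclassified relative to $\bar U_\zeta$, so $l_{A(G(r_\zeta),L,\bar U_\zeta)} > 0$, whereas $l_{A(G(r_0),L,\bar U_\zeta)} = 0$; thus $r_\zeta$ is not a minimizer under $\bar U_\zeta$. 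Taking $\U := \{U_\zeta,\bar U_\zeta\}$ completes the proof.

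The main obstacle is the structural step: verifying that $\hat L_r$ really does change at least once inside the open interval $(r_{\min}, r_{\max})$ for the specific algorithm $A$. For harmonic-function prediction this reduces to a small closed-form linear solve; for mincut one must additionally fix a tie-breaking rule and check that some newly added edge genuinely shifts the mincut. Everything else is algebra-free and relies only on nonnegativity of the loss and the piecewise-constant dependence of $\hat L_r$ on $r$.
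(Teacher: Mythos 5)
Your proof is correct and follows essentially the same strategy as the paper's: build a single instance whose predicted labeling is non-constant over $(r_{\min}, r_{\max})$, then define $U_\zeta$ as the algorithm's prediction at $r_\zeta$ (giving zero loss, hence optimal) and $\bar U_\zeta$ as the prediction at some other $r_0$ in the interval (positive loss at $r_\zeta$, zero at $r_0$, hence suboptimal). The concrete gadgets differ --- the paper uses one unlabeled point with unbalanced classes $|L_1|<|L_2|$ and a single flip threshold $r^*$, covering all $\zeta$ with one fixed two-element $\mathcal{U}$, whereas you use two unlabeled points on a line with an explicit harmonic computation --- but the underlying argument is the same.
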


\begin{figure}[h]
    \centering

\tikzset{every picture/.style={line width=0.75pt}} 

\begin{tikzpicture}[x=0.75pt,y=0.75pt,yscale=-1,xscale=1]

\draw   (404,301) .. controls (404,292.99) and (410.49,286.5) .. (418.5,286.5) .. controls (426.51,286.5) and (433,292.99) .. (433,301) .. controls (433,309.01) and (426.51,315.5) .. (418.5,315.5) .. controls (410.49,315.5) and (404,309.01) .. (404,301) -- cycle ;
\draw   (569,299.63) .. controls (569,288.37) and (578.12,279.25) .. (589.38,279.25) .. controls (600.63,279.25) and (609.75,288.37) .. (609.75,299.63) .. controls (609.75,310.88) and (600.63,320) .. (589.38,320) .. controls (578.12,320) and (569,310.88) .. (569,299.63) -- cycle ;
\draw   (483,301) .. controls (483,299.34) and (484.34,298) .. (486,298) .. controls (487.66,298) and (489,299.34) .. (489,301) .. controls (489,302.66) and (487.66,304) .. (486,304) .. controls (484.34,304) and (483,302.66) .. (483,301) -- cycle ;
\draw    (418,329) -- (487,329) ;
\draw [shift={(489,328.25)}, rotate = 539.8] [color={rgb, 255:red, 0; green, 0; blue, 0 }  ][line width=0.75]    (10.93,-3.29) .. controls (6.95,-1.4) and (3.31,-0.3) .. (0,0) .. controls (3.31,0.3) and (6.95,1.4) .. (10.93,3.29)   ;
\draw [shift={(416,328.5)}, rotate = 359.8] [color={rgb, 255:red, 0; green, 0; blue, 0 }  ][line width=0.75]    (10.93,-3.29) .. controls (6.95,-1.4) and (3.31,-0.3) .. (0,0) .. controls (3.31,0.3) and (6.95,1.4) .. (10.93,3.29)   ;
\draw    (491,329) -- (593,329) ;
\draw [shift={(595,329.25)}, rotate = 180.54] [color={rgb, 255:red, 0; green, 0; blue, 0 }  ][line width=0.75]    (10.93,-3.29) .. controls (6.95,-1.4) and (3.31,-0.3) .. (0,0) .. controls (3.31,0.3) and (6.95,1.4) .. (10.93,3.29)   ;
\draw [shift={(489,328.25)}, rotate = 0.54] [color={rgb, 255:red, 0; green, 0; blue, 0 }  ][line width=0.75]    (10.93,-3.29) .. controls (6.95,-1.4) and (3.31,-0.3) .. (0,0) .. controls (3.31,0.3) and (6.95,1.4) .. (10.93,3.29)   ;
\draw    (421,265.25) -- (590,265.25) ;
\draw [shift={(592,265.25)}, rotate = 180] [color={rgb, 255:red, 0; green, 0; blue, 0 }  ][line width=0.75]    (10.93,-3.29) .. controls (6.95,-1.4) and (3.31,-0.3) .. (0,0) .. controls (3.31,0.3) and (6.95,1.4) .. (10.93,3.29)   ;
\draw [shift={(419,265.25)}, rotate = 0] [color={rgb, 255:red, 0; green, 0; blue, 0 }  ][line width=0.75]    (10.93,-3.29) .. controls (6.95,-1.4) and (3.31,-0.3) .. (0,0) .. controls (3.31,0.3) and (6.95,1.4) .. (10.93,3.29)   ;
\draw [line width=3]    (433,301) -- (483,301) ;

\draw (412,293) node [anchor=north west][inner sep=0.75pt]   [align=left] {$\displaystyle L_{1}$};
\draw (582,293) node [anchor=north west][inner sep=0.75pt]   [align=left] {$\displaystyle L_{2}$};
\draw (482,286) node [anchor=north west][inner sep=0.75pt]   [align=left] {$\displaystyle a$};
\draw (418,331.5) node [anchor=north west][inner sep=0.75pt]   [align=left] {$\displaystyle r_{\min} ={\textstyle \frac{r^{*}}{2}}$};
\draw (537,331) node [anchor=north west][inner sep=0.75pt]   [align=left] {$\displaystyle r^{*}$};
\draw (487,241.5) node [anchor=north west][inner sep=0.75pt]   [align=left] {$\displaystyle r_{\max}$};

\end{tikzpicture}
    \caption{$G(r)$ connects $a$ to nodes in $L_1$ for $r_{\min}\le r<r^*$.}
    \label{fig:any_threshold}
\end{figure}
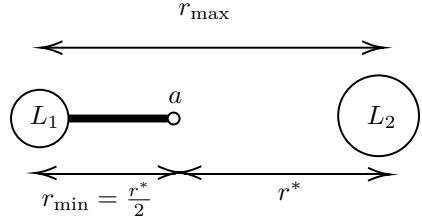
\begin{proof}
Note that for any $r<r_{\min}$, there is no graph similarity information for at least one node, and therefore all labels cannot be predicted. Also, the graph is unchanged for all $r\ge r_{\max}$. Therefore, $r\in[r_{\min},r_{\max}]$ captures all graphs of interest on a given data instance.

Intuitively the statement claims that any threshold $r$ (modulo the scaling factors for the data embedding) may be optimal or suboptimal for some data labeling for a given constructed instance. Therefore it is useful to consider several problem instances and learn the optimal value of $r$ for the data distribution.
We will present an example where an unlabeled point is closest to some labeled point of one class but closer to more points of another class on average. So for small thresholds it may be labeled as the first class and for larger thresholds as the second class.

Let $L=L_1\cup L_2$ with $|L_1|<|L_2|$ and $d(u,v)=0$ for $u,v\in L_i,i\in\{1,2\}$, $d(u,v)=3r^*/2$ for $u\in L_i,v\in L_j, i\ne j$, where $r^*$ is a positive real. Further let $U=\{a\}$ such that $d(a,u_i)=ir^*/2$ for each $u_i\in L_i$. It is straightforward to verify that the triangle inequality is satisfied. Further note that $r_{\min}=r^*/2$ and $r_{\max}=3r^*/2$. Our set of labelings $\U$ will include one that labels $a$ according to each class. Now we have two cases
\begin{enumerate}
    \item $\zeta\in(0,\frac{1}{2})$: $r_{\min}\le r<r^*$, $G(r_{\zeta})$ connects $a$ to $L_1$ but not $L_2$ and we have that the loss is minimized exactly for the labeling where $a$ matches $L_1$.
    \item $\zeta\in[\frac{1}{2},1)$: $ r^*\le r\le r_{\max}$, $G(r_{\zeta})$ connects $a$ to  both $L_1$ and $L_2$. But since $|L_1|<|L_2|$, we predict that the label of $a$ matches that of $L_2$.
\end{enumerate}
Finally we note that $d(u,v)$ may not be exactly zero when $u\ne v$ for a metric. This is easily fixed by making tiny perturbations to the labeled points, for any given $r_{\zeta}$.
\end{proof}

The example presented above captures some essential challenges of our setting in the following sense. Firstly, we see that the loss function may be non-Lipschitz (as a function of the parameter $r$), which makes the optimization problem more challenging. More importantly, it highlights that graph similarity only approximately corresponds to label similarity, and how the accuracy of this correspondence is strongly influenced by the graph parameters. In this sense, it may not be possible to learn from a single instance, and considering a data-driven setting is crucial. 

\subsection{Dispersion and online learning}\label{sec: ol}
We consider the problem of learning the graph online. 
In the online setting, we are presented with instances of the problem and want to learn the best value of the parameter $\rho$ while making predictions. We also assume we get all the labels for past instances which may be used to determine the loss for any $\rho$ ({\it full information})\footnote{We can think of each problem instance to be of a small size, so we do not need too many labels if we can learn with a reasonable number of problem instances. We improve on the label requirement further in the semi-bandit setting.}. A choice of $\rho$ uniquely determines the graph $G(\rho)$ (for example in single parameter families in Definition \ref{defn:g}) and we use some algorithm $A_{G(\rho),L,U}$ to make predictions (e.g. minimizing the quadratic penalty score above) and suffer loss $l_{A(G(\rho),L,U)}:=\sum_{u\in U}l(A_{G(\rho),L,U}(u),\tau(u))$ which we seek to minimize relative to the best fixed choice of $\rho$ in hindsight. Formally, at time $t\in[T]$ we predict $p_t\in\cP$ (the parameter space) based on labeled and unlabeled examples $(L_i,U_i), i\in[t]$ and past labels $\tau(u)$ for each $u\in U_j, j<t$ and seek to minimize
\[R_T:=\sum_{t=1}^Tl_{A(G(\rho_t),L_t,U_t)}-\min_{\rho\in\cP}\sum_{t=1}^Tl_{A(G(\rho),L_t,U_t)}\]

A key difficulty in the online optimization for our settings is that the losses, as noted above, are discontinuous functions of the graph parameters $\rho$. We can efficiently solve this problem if we can show that the loss functions are dispersed, in fact $\frac{1}{2}$-dispersed functions may be learned with $\Tilde{O}(\sqrt{T})$ regret (\cite{balcan2018dispersion,sharma2020learning}). {Algorithm \ref{alg:ddssl} adapts the general algorithm of \cite{balcan2018dispersion} to data-driven graph-based learning and achieves low regret for dispersed functions.}
Recall that dispersion roughly says that the discontinuities in the loss function are not too concentrated. We will exploit an assumption that the embeddings are approximate, so small random perturbations to the distance metric will likely not affect learning. This mild distributional assumption allows us to show dispersion, and therefore learnability. For further background and additional proofs and details from this section, see Appendix \ref{app:ol}.

\begin{algorithm}[t]
\caption{{Data-driven Graph-based SSL ($\lambda$)}}
\label{alg:ddssl}
\begin{algorithmic}[1]
\STATE {\bfseries Input:} Graphs $G_t$ with labeled and unlabeled nodes $(L_t,U_t)$ and node similarities $d(u,v)_{u,v\in L_t\cup U_t}$.
\STATE {\bfseries Hyperparameter:} step size parameter  $\lambda \in (0, 1]$.
\STATE {\bfseries Output:} Graph parameter $\rho_t$ for times $t=1,2,\dots,T$.
\STATE{Set $w_1(\rho)=1$ for all $\rho\in \R_{\ge 0}$.}
\FOR{$t=1,2,\dots,T$}
\STATE{$W_t:=\int_{C}w_t(\rho)d\rho$.}
\STATE{Sample $\rho$ with probability
        $p_{t}(\rho)=\frac{w_t(\rho)}{W_t}$, output as $\rho_t$.}
\STATE{Compute average loss function $l_t(\rho)=\frac{1}{|U_t|}\sum_{u\in U}l(A_{G_t(\rho),L_t,U_t}(u),\tau(u))$.}
\STATE{Set $u_t(\rho)=1-l_t(\rho)$ (loss is converted to utility function, $u_t(\rho)\in[0,1]$).}
\STATE{For each $\rho\in C, \text{ set }w_{t+1}(\rho)=e^{\lambda u_t(\rho)}w_{t}(\rho)$.}
\ENDFOR
\end{algorithmic}
\end{algorithm}

\subsubsection{Dispersion of the loss functions}\label{sec:ol-d}
We start with showing dispersion for the unweighted graph family, with threshold parameter $r$. Here dispersion follows from a simple assumption that the distance $d(u,v)$ for any pair of nodes $u,v$ follows a $\kappa$-bounded distribution\footnote{A density function $f : \R \rightarrow \R$ corresponds to a $\kappa$-bounded
distribution if $\max_{x\in\R}\{f(x)\} \le \kappa$. For example, $\mathcal{N}(\mu,\sigma)$ is $\frac{1}{2\pi\sigma}$-bounded for any $\mu$.}, and observing that discontinuities of the loss (as a function of $r$) must lie on the set of distances $d(u,v)$ in the samples (for any optimization algorithm).

\begin{lemma}\label{lem:disc}
Let $\bar{l}(r)=l_{A(G(r),L,U)}$ be the loss function for graph $G(r)$ created using the threshold kernel $w(u,v)=\bI[d(u,v)\le r]$. Then $\bar{l}(r)$ is piecewise constant and any discontinuity occurs at $r^*=d(u,v)$ for some  graph nodes $u,v$.
\end{lemma}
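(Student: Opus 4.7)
The plan is to argue directly from the definition of the threshold kernel that the graph $G(r)$, viewed as a function of the real parameter $r$, can only change at the finitely many values of $r$ equal to some pairwise distance $d(u,v)$ between graph nodes, and that the loss depends on $r$ only through $G(r)$.

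First I would fix the finite set of nodes $V = L \cup U$ and let $S = \{d(u,v) : u,v \in V,\ u \ne v\}$ denote the finite set of pairwise distances. For any $r \ge 0$, the edge set of $G(r)$ is $E(r) = \{(u,v) : d(u,v) \le r\}$ by the definition of the threshold kernel $w(u,v) = \bI[d(u,v) \le r]$. I would then observe that $E$ is a monotone step function of $r$: for any open interval $(a,b)$ with $(a,b) \cap S = \emptyset$, the set $\{(u,v) : d(u,v) \le r\}$ is constant in $r$ over $(a,b)$, because no pairwise distance lies between $a$ and $b$. Hence $G(r)$ is the same (unweighted) graph on $V$ for every $r \in (a,b)$.

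Next I would appeal to the fact that the labeling algorithm $A_{G,L,U}$ takes $G$, $L$, $U$ as its only inputs (see the problem statement in Section~\ref{sec:notation}), so $A_{G(r),L,U}$ produces identical output on any such interval, and therefore $\bar l(r) = \sum_{u \in U} l(A_{G(r),L,U}(u),\tau(u))$ is constant on $(a,b)$. Since the complement of $S$ in $\R_{\ge 0}$ is a finite union of such open intervals, $\bar l$ is piecewise constant and any discontinuity of $\bar l$ must be located at some $r^* \in S$, i.e.\ $r^* = d(u,v)$ for some pair of nodes $u,v \in V$, which is the claim. If the algorithm $A$ is randomized (e.g.\ due to tie-breaking in the mincut), the same argument applies to each realization, and thus to the expected loss as well.

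There is no real obstacle here beyond being careful about two small points: (i) monotonicity of $E(r)$ only changes the graph at values of $r$ in $S$, which is immediate from the indicator form of the kernel, and (ii) the algorithm must be deterministic given the graph (or at least its distribution over outputs must depend on $r$ only through $G(r)$), which follows from the stated signature of $A$. No smoothness or dispersion assumptions are needed for this structural observation; those will enter only when bounding how the discontinuity locations in $S$ are distributed across instances.
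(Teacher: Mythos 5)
Your proposal is correct and is essentially the same argument the paper gives, just spelled out more carefully: the graph $G(r)$ can only change at the finitely many pairwise distances, and since $A$ depends on $r$ only through $G(r)$, the loss is constant between consecutive such values. The paper's proof is a terser version of exactly this observation.
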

\begin{proof}This essentially follows from the observation that as $r$ is increased, the graph gets a new edge only for some $r^*=d(u,v)$. Therefore no matter what the optimization algorithm is used to predict labels to minimize the loss, the loss is fixed given the graph, and has discontinuities potentially only when new edges are added. 
\end{proof}
We can use it to show the following theorem.

\begin{theorem}\label{thm:gr-dispersion}
Let $l_1,\dots, l_T:\R\rightarrow\R$ denote an independent\footnote{{Note that the problems arriving online are adversarial. The adversary is smoothed \cite{spielman2004smoothed} in the sense it has a distribution which it can choose as long as it has bounded density over the parameters, independent samples are drawn from adversary's distribution.}} sequence of losses as a function of parameter $r$, when the graph is created using a threshold kernel $w(u,v)=\bI[d(u,v)\le r]$ and labeled by applying any algorithm on the graph. 
If $d(u,v)$ follows a $\kappa$-bounded distribution for any $u,v$, the sequence is $\frac{1}{2}$-dispersed, and there is an algorithm (Algorithm \ref{alg:ddssl}) for setting $r$ with regret upper bounded by $\Tilde{O}(\sqrt{T})$.
\end{theorem}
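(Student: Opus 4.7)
The plan is to combine the structural characterization of discontinuities provided by Lemma~\ref{thm:gr-dispersion}'s companion Lemma~\ref{lem:disc} with a $\kappa$-boundedness bound and a uniform convergence argument, to verify the dispersion condition in Definition~\ref{def:dis}, and then invoke the known regret guarantee for Algorithm~\ref{alg:ddssl} on $\tfrac{1}{2}$-dispersed sequences.

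First I would fix a round $t$ and recall from Lemma~\ref{lem:disc} that $l_t(r)$ is piecewise constant with discontinuities contained in the set of pairwise distances $\{d_t(u,v)\mid u,v\in L_t\cup U_t\}$, a set of cardinality at most $\binom{n}{2}$. Hence each $l_t$ contributes at most polynomially many candidate breakpoints, and each breakpoint is a random variable with a $\kappa$-bounded marginal density by assumption. For any fixed interval $I\subset\R$ of width $\epsilon$, the probability that a single breakpoint $d_t(u,v)$ lies in $I$ is at most $\kappa\epsilon$, so by linearity of expectation the expected number of discontinuities of $l_t$ inside $I$ is at most $\binom{n}{2}\kappa\epsilon$, and summing over $t\in[T]$ yields
\begin{equation*}
\E\bigl[\lvert\{t\in[T]\mid l_t\text{ discontinuous in }I\}\rvert\bigr]\le \binom{n}{2}\kappa\epsilon T.
\end{equation*}

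Next I would upgrade this pointwise-in-$I$ bound to a uniform bound over all intervals $I$ of width $\epsilon$. Intervals in $\R$ form a VC class of constant dimension, so the specialization of Theorem~\ref{thm:VC-bound-general} to $p=1$ (equivalently, the one-dimensional uniform convergence lemma of \cite{balcan2018dispersion}) gives
\begin{equation*}
\E\Bigl[\sup_{|I|\le\epsilon}D(T,I)\Bigr]\le \sup_{|I|\le\epsilon}\E[D(T,I)]+O(\sqrt{T\log T})=\binom{n}{2}\kappa\epsilon T+O(\sqrt{T\log T}).
\end{equation*}
For every $\epsilon\ge T^{-1/2}$ the second term is $O(\sqrt{T\log T})=\tilde O(\epsilon T)$, so the entire right-hand side is $\tilde O(\epsilon T)$, which is precisely the $\tfrac{1}{2}$-dispersion condition since any pair $r,r'$ with $|r-r'|\le\epsilon$ is covered by an interval of width $\epsilon$ and a discontinuity of $l_t$ between $r$ and $r'$ is equivalent to $l_t$ being non-Lipschitz between them (the pieces are constant, so any Lipschitz failure requires a jump).

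Finally, dispersion in hand, I would invoke the standard regret analysis of the exponentially weighted continuous updates in Algorithm~\ref{alg:ddssl} (Theorem from \cite{balcan2018dispersion,sharma2020learning}): for a $\tfrac{1}{2}$-dispersed sequence of $[0,1]$-valued utilities over a bounded parameter interval $C$, tuning $\lambda=\Theta(\sqrt{\log T/T})$ yields expected regret $\tilde O(\sqrt{T})$ with the hidden constants depending only on $|C|$ and the dispersion parameters. The main obstacle I expect is the uniform-in-$I$ step, since the per-interval expectation bound is essentially immediate from $\kappa$-boundedness; the VC-based concentration step is the one place where care is needed to avoid introducing an extra factor that would destroy the $\tfrac{1}{2}$-dispersion. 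Everything else is bookkeeping and appealing to the cited black-box regret bound.
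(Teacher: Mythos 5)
Your proposal is correct and follows essentially the same strategy as the paper: characterize the breakpoints via Lemma~\ref{lem:disc}, use $\kappa$-boundedness to bound the expected number of breakpoints in any fixed $\epsilon$-interval, and upgrade to a uniform bound over all intervals by a VC argument, then read off the regret bound from the cited black-box. The only organizational difference is that the paper fixes a node-pair index $(i,j)$, applies the interval-VC concentration bound to each sequence $\{d^t_{i,j}\}_t$ separately (getting $O(\kappa\epsilon T+\sqrt{T\log(D/\delta)})$ per pair), and then union-bounds over the $O(n^2)$ pairs, whereas you aggregate all $\binom{n}{2}$ breakpoints per round into a single count $D(T,I)$ and invoke the VC theorem once with $K=\binom{n}{2}$ candidate discontinuities per function; both yield $\tilde O(\epsilon T)$ for $\epsilon\ge T^{-1/2}$ with the same hidden $\mathrm{poly}(n)$ factors, and yours is arguably a little cleaner since it avoids the explicit $\delta$-budget split across pairs.
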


\begin{proof}
Assume a fixed but arbitrary ordering of nodes in each $V_t=L_t\cup U_t$ denoted by $V_t^{(i)},i\in[n]$. Define $d_{i,j}=\{d(u,v)\mid u=V_t^{(i)},v=V_t^{(j)},t\in[T]\}$. Since $d_{i,j}$ is $\kappa$-bounded, the probability that it falls in any interval of length $\epsilon$ is $O(\kappa\epsilon)$. Since different problem instances are independent and using the fact that the VC dimension of intervals is 2, with probability at
least $1-\delta/D$, every interval of width $\epsilon$ contains at most $O\left(\kappa\epsilon T +\sqrt{T\log D/\delta}\right)$ discontinuities from each $d_{i,j}$ (using Lemma \ref{lem:disc}). Now a union bound over the failure modes for $d_{i,j}$ for different $i,j$ gives $O\left(n^2\kappa\epsilon T +n^2\sqrt{T\log n/\delta}\right)$ discontinuities with probability at least $1-\delta$ for any $\epsilon$-interval. Setting $\delta=1/\sqrt{T}$, for each $\epsilon\ge 1/\sqrt{T}$ the maximum number of discontinuities in any $\epsilon$-interval is at most $(1-\delta)O\left(\kappa n^2\sqrt{T\log n \sqrt{T}}\right)+\delta T=\Tilde{O}(\epsilon T)$, in expectation, proving $\frac{1}{2}$-dispersion.
\end{proof}

We can show dispersion for weighted graph kernels as well, but under stronger assumptions than before. We assume that distances $d(u,v)$ are jointly $\kappa$-bounded on a closed and bounded support. The plan is to use results for dispersion analysis from \cite{dick2020semi} (summarized in Appendix \ref{app:disp-recipe}), which implies that roots of a random polynomial are dispersed provided it has finite coefficients distributed jointly with a $\kappa$-bounded distribution. We establish the following for learning polynomial kernels (full proof in Appendix \ref{app:disp-recipe}).

\begin{restatable}{theorem}{thmpoly}\label{thm:dispersion}
Let $l_1,\dots, l_T:\R\rightarrow\R$ denote an independent sequence of losses as a function of parameter $\Tilde{\alpha}$, when the graph is created using a polynomial kernel $w(u,v)=(\Tilde{d}(u,v)+\Tilde{\alpha})^d$ and labeled by optimizing the quadratic objective $\sum_{u,v}w(u,v)(f(u)-f(v))^2$. If $\Tilde{d}(u,v)$ follows a $\kappa$-bounded distribution with a closed and bounded support, the sequence is $\frac{1}{2}$-dispersed, and the regret of Algorithm \ref{alg:ddssl} may be upper bounded by $\Tilde{O}(\sqrt{T})$.
\end{restatable}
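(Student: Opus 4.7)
The plan is to reduce the statement to the dispersion recipe for random polynomials summarized in Appendix~\ref{app:disp-recipe} (the same tool used implicitly for all the weighted-graph arguments in Section~\ref{sec:ol-d}). The overall strategy mirrors the proof of Theorem~\ref{thm:gr-dispersion}: first show that the loss $\bar{l}(\tilde{\alpha})=l_{A(G(\tilde{\alpha}),L,U)}$ is piecewise constant with discontinuities lying on an algebraic set depending on the data, then argue that in each problem instance every potential discontinuity is a root of a univariate polynomial in $\tilde{\alpha}$ whose coefficients inherit a $\kappa'$-bounded joint distribution from the $\kappa$-bounded distances $\tilde{d}(u,v)$, and finally invoke the VC-based concentration argument over all such polynomials to get at most $\tilde{O}(\epsilon T)$ discontinuities in any interval of width $\epsilon\ge T^{-1/2}$.

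First, I would locate the discontinuities. The predicted label for each $u\in U$ is produced by optimizing $\sum_{u,v}w(u,v)(f(u)-f(v))^2=f^T(D-W)f$ where $w(u,v)=(\tilde{d}(u,v)+\tilde{\alpha})^d$ is a degree-$d$ polynomial in $\tilde{\alpha}$. For the soft/harmonic variant (Algorithm B of Table~\ref{table:algos}), $f(u)$ equals a ratio of polynomials in the entries of $W$ (obtained by Cramer's rule applied to the graph Laplacian system restricted to $U$), so $f(u)$ is a rational function of $\tilde{\alpha}$, and the rounding threshold condition $f(u)=1/2$ (equivalently the numerator minus half the denominator equals zero) is a polynomial equation in $\tilde{\alpha}$ of degree bounded by some $D=\mathrm{poly}(n,d)$. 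For the hard-label mincut variant (Algorithm A), discontinuities occur when two distinct $st$-cuts achieve equal weight; that equality is again a polynomial equation in $\tilde{\alpha}$ since each cut weight is a sum of terms $(\tilde{d}(u,v)+\tilde{\alpha})^d$. Thus in either case the finitely many discontinuities of $\bar{l}$ are among the real roots of an explicit finite family of polynomials $\{p_{i,j,t}(\tilde{\alpha})\}$ with bounded degree, where indices $i,j$ range over node-pair configurations in instance $t$.

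Next, I would verify the probabilistic hypothesis needed for the dispersion recipe. The coefficients of each $p_{i,j,t}$ are polynomial functions of the distances $\tilde{d}(u,v)$ for $u,v$ in the $t$-th instance. By assumption $\tilde{d}(u,v)$ has a $\kappa$-bounded joint density on a closed and bounded support, and a bounded-degree polynomial change of variables on a compact domain preserves boundedness of the joint density up to a factor depending on the Jacobian (which is bounded away from zero on a generic set and the exceptional set has measure zero). Therefore the coefficients of $p_{i,j,t}$ form a $\kappa'$-bounded random vector supported on a bounded set, which is exactly the input required by the polynomial root dispersion lemma. Applied to each fixed $(i,j)$, this gives that $\mathbb{P}(p_{i,j,t}\text{ has a root in an interval of width }\epsilon)\le \tilde{O}(\epsilon)$.

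Finally, I would assemble the dispersion bound in the style of Theorem~\ref{thm:gr-dispersion}: fix an arbitrary ordering of nodes within each instance, union-bound over the polynomially many polynomial families indexed by tuples of nodes, and apply the VC-dimension-2 concentration for intervals to conclude that with probability at least $1-1/\sqrt{T}$ every interval of width $\epsilon\ge T^{-1/2}$ contains at most $\tilde{O}(\epsilon T + \mathrm{poly}(n,d)\sqrt{T})=\tilde{O}(\epsilon T)$ discontinuities across the $T$ instances. Combining with the trivial bound $T$ on the failure event yields the $\frac{1}{2}$-dispersion condition in expectation. The $\tilde{O}(\sqrt{T})$ regret then follows from the generic guarantee for Algorithm~\ref{alg:ddssl} on $\frac{1}{2}$-dispersed piecewise-Lipschitz losses from \cite{balcan2018dispersion}. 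The main obstacle is the second step: carefully writing down the polynomial whose roots capture label changes (especially for the combinatorial mincut algorithm, where one must argue about all pairs of potentially optimal cuts) and verifying that the induced coefficient distribution is indeed $\kappa'$-bounded rather than degenerating on lower-dimensional subsets; the compactness of the support of $\tilde{d}(u,v)$ is used precisely to control this push-forward.
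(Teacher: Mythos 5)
Your proposal follows essentially the same route as the paper's proof in Appendix~\ref{app:disp-recipe}: identify the discontinuities of $\bar l(\tilde\alpha)$ as real roots of bounded-degree polynomials in $\tilde\alpha$ (coming from $f(u)=1/2$ for the harmonic case), argue that the coefficients inherit a bounded joint density from the $\kappa$-bounded distances supported on a compact interval, and then invoke the polynomial-root probability bound (Theorem~\ref{thm:poly-roots}) and the VC uniform-convergence step (Theorem~\ref{thm:VC-bound}) to conclude $\tfrac12$-dispersion and regret $\tilde O(\sqrt T)$. You actually cover a bit more ground than the paper, which only proves the harmonic case; your sketch of the mincut case (equal-weight cuts give polynomial equalities in $\tilde\alpha$) is also sound.

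The one place where your sketch is shakier than the paper's is the justification that the coefficient distribution stays $\kappa'$-bounded. Saying the Jacobian of the polynomial coefficient map is ``bounded away from zero on a generic set and the exceptional set has measure zero'' is not sufficient in general: the pushforward of a bounded density under a map whose Jacobian merely fails to vanish off a null set can still be unbounded near the image of that set (e.g.\ $Y=X^2$ for $X$ uniform on $[0,1]$ has density proportional to $1/\sqrt{y}$ near $0$). What actually saves the argument is that the distances live in $[m,M]$ with $m>0$ (a metric on a closed, bounded support), so the relevant derivatives are \emph{uniformly} bounded away from zero. The paper makes this precise by avoiding the Jacobian change-of-variables entirely and instead applying Lemma~\ref{lem:bounded}, which bounds the density of sums and products of bounded-density random variables with bounded supports bounded away from zero. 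This lemma is tailored to the compositional (sums of products) structure of the polynomial coefficients and also sidesteps any invertibility concerns in the coefficient map; substituting it for your Jacobian heuristic is what would close the step you yourself flag as the main obstacle.
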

\begin{proof}[Proof Sketch] $w(u,v)$ is a polynomial in $\Tilde{\alpha}$ of degree $d$ and the coefficients are $K\kappa$-bounded, where $K$ is a constant that only depends on $d$ and the support of $\Tilde{d}(u,v)$. Consider the harmonic solution of the quadratic objective \citep{zhu2003semi} which is given by $f_{U}=(D_{UU}-W_{UU})^{-1}W_{UL}f_L$. For any $u\in U$, $f(u)=1/2$ is a polynomial equation in $\Tilde{\alpha}$ with degree at most $nd$. Therefore the labeling, and consequently also the loss function, may only change when $\Tilde{\alpha}$ is a root of one of $|U|$ polynomials of degree at most $dn$. The dispersion result is now a simple application of results from \cite{dick2020semi} (noted as Theorems \ref{thm:poly-roots} and \ref{thm:VC-bound} in Appendix \ref{app:disp-recipe}). The regret bound is implied by Theorem 2 of \citep{dick2020semi}.
\end{proof}

\begin{remark}[Extension to exponential kernel]
We can also extend the analysis to obtain similar results when using the exponential kernel $w(u,v)=e^{-||u-v||^2/\sigma^2}$. The results of \cite{dick2020semi} no longer directly apply as the points of discontinuity are no longer roots of polynomials, and we need to analyze points of discontinuities of {\it exponential polynomials}, i.e. $\phi(x)=\sum_{i=1}^ka_ie^{b_ix}$ (See Section \ref{sec:dtools} and Appendix \ref{app:exp}). The number of discontinuities may be exponentially high in this case. Indeed, solving the quadratic objective shows that discontinuities lie at zeros of exponential polynomials  with $k=O(|U|^n)$ terms.
\end{remark}  

\begin{remark}[Extension to local and global classification \cite{zhou2004learning}] Above results can be extended to the classification algorithm used in \cite{zhou2004learning}. The key observation is that the labels are given by a closed-form matrix, $f^*=(I-\alpha D^{-1/2}WD^{1/2})Y$ or $f^*=(D-\alpha W)Y$ (for the two variants considered). For threshold graphs $G(r)$, the regret bound in Theorem \ref{thm:gr-dispersion} applies to any classification algorithm. Extension to polynomial kernels $G(\Tilde{\alpha})$ is described below.
For fixed $\alpha$ (in the notation of \cite{zhou2004learning}, in expression for $f^*$ above), the discontinuities in the loss as a function of the parameter $\Tilde{\alpha}$ lie along roots of polynomials in the parameter $\Tilde{\alpha}$ and therefore the same proof as Theorem \ref{thm:dispersion} applies (essentially we get polynomial equations with slightly different but still $K$-bounded coefficients). On the other hand, if we consider $\alpha$ as another graph parameter, we can still learn the kernel parameter $\Tilde{\alpha}$ together with $\alpha$ by applying Theorem \ref{thm:poly-roots} and Theorem \ref{thm:VC-bound-general} (instead of Theorem \ref{thm:VC-bound}) in the proof of Theorem \ref{thm:dispersion}. 
\end{remark}

\subsubsection{Combining several similarity measures}\label{sec:multiple-metrics}
Often the distance metric used for measuring similarity between the data points is a heuristic, and we can have multiple reasonable metrics.  Different metrics may have their own advantages and issues and often a weighted combination of metrics, say $\sum_i{\rho_i}d_i(\cdot,\cdot)$, works better than any individual metric. This has been observed in practice for semi-supervised learning \citep{balcan2005person}. The combination weights $\rho_i$ are additional graph hyperparameters. A combination of metrics has been shown to boost performance theoretically and empirically for linkage-based clustering \citep{balcan2019learning}. However the argument therein crucially relies on the algorithm depending on relative distances and not the actual values, and therefore does not extend directly to our setting. We develop a first general tool for analyzing dispersion for multi-dimensional parameters (Section \ref{sec:dtools}), which implies the multi-parameter analogue of Theorem \ref{thm:dispersion}, stated as follows. See Appendix \ref{app:multimetric} for proof details.




\begin{restatable}{theorem}{thmmultimetric}\label{thm:mutlimetric}
Let $l_1,\dots, l_T:\R^p\rightarrow\R$ denote an independent sequence of losses as a function of parameters $\rho_i,i\in[p]$, when the graph is created using a polynomial kernel $w(u,v)=(\sum_{i=1}^{p-1}\rho_i\Tilde{d}(u,v)+\rho_p)^d$ and labeled by optimizing the quadratic objective $\sum_{u,v}w(u,v)(f(u)-f(v))^2$. If $\Tilde{d}(u,v)$ follows a $\kappa$-bounded distribution with a closed and bounded support, the sequence is $\frac{1}{2}$-dispersed, and the regret may be upper bounded by $\Tilde{O}(\sqrt{T})$.
\end{restatable}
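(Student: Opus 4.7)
The plan is to follow the template of Theorem~\ref{thm:dispersion} but lift the analysis from a single parameter to the $p$-dimensional space $\rho = (\rho_1,\ldots,\rho_p)$, invoking the new multi-parameter dispersion machinery from Section~\ref{sec:dtools} in place of the one-dimensional tools of \cite{dick2020semi}. First I would observe that each edge weight $w(u,v) = (\sum_{i=1}^{p-1}\rho_i \Tilde{d}(u,v) + \rho_p)^d$ is a polynomial of degree $d$ in $\rho$, and its coefficients are polynomial functions (of bounded degree, depending only on $d$ and $p$) of the random quantities $\Tilde{d}(u,v)$. Since $\Tilde{d}(u,v)$ is $\kappa$-bounded on a closed and bounded support, a standard change-of-variables argument shows the induced joint density on these coefficients is $\kappa'$-bounded for some constant $\kappa'$ depending only on $d$, $p$, $\kappa$, and the diameter of the support.

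Next, following the harmonic-solution step from Theorem~\ref{thm:dispersion}, for each unlabeled node $u$ the predicted soft label $f(u) = [(D_{UU}-W_{UU})^{-1} W_{UL} f_L]_u$ is a rational function of $\rho$, and the loss $\bar{l}(\rho)$ can change only when some $f(u)$ crosses the rounding threshold $1/2$. Clearing denominators, every such discontinuity is contained in the zero set of a polynomial in $\rho$ of degree $O(nd)$, i.e.\ an algebraic hypersurface of bounded degree; moreover the coefficients of these polynomials are still jointly $\kappa''$-bounded because they are polynomial combinations of the entries of $W$. Thus the loss $l_t$ at each round is piecewise constant with discontinuities contained in at most $K = |U|$ algebraic hypersurfaces of bounded degree, with $\kappa''$-bounded coefficient density.

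Now I invoke the new tools: Theorem~\ref{thm:alg-hyp} ensures that axis-aligned line segments have constant VC-dimension for shattering bounded-degree algebraic hypersurfaces in $\R^p$, and Theorem~\ref{thm:VC-bound-general} then gives a uniform-convergence bound on the number of discontinuities along axis-aligned paths. To control a single axial segment I would apply the one-dimensional root-avoidance bound (Theorem~\ref{thm:poly-roots} of \cite{dick2020semi}, since restricting a multivariate polynomial to an axial line preserves both the degree and the $\kappa''$-boundedness of its coefficients), yielding probability $\Tilde{O}(\kappa'' \epsilon)$ that any fixed hypersurface is crossed within an interval of length $\epsilon$. For arbitrary $\rho,\rho' \in \R^p$ with $\|\rho-\rho'\|_2 \le \epsilon$, connecting them by a piecewise-axial path of total length at most $p\epsilon$ and summing over the (at most $KT$) hypersurfaces, Theorem~\ref{thm:VC-bound-general} gives $\E[\sup D(T,s)] = \Tilde{O}(\epsilon T)$, establishing $\tfrac{1}{2}$-dispersion. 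The $\Tilde{O}(\sqrt{T})$ regret bound for Algorithm~\ref{alg:ddssl} is then immediate from Theorem~2 of \cite{dick2020semi}.

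The main obstacle is the middle step: verifying that, after clearing denominators in the harmonic-solution formula, the resulting multivariate discontinuity polynomials still have bounded degree and a jointly $\kappa''$-bounded coefficient density in $\rho$. Once this structural fact is in hand, the heavy lifting is absorbed by the new tools of Section~\ref{sec:dtools}, and the rest of the argument is a direct generalization of the one-dimensional case in Theorem~\ref{thm:dispersion}.
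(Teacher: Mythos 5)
Your proposal follows essentially the same route as the paper's proof: identify the discontinuities of the loss as zeros of bounded-degree polynomials in $\rho$ (from the harmonic-solution condition $f(u)=\tfrac{1}{2}$ via the polynomial kernel), connect $\rho$ and $\rho'$ by a piecewise axis-aligned path, apply the one-dimensional root-avoidance bound (Theorem~\ref{thm:poly-roots}) on each axial segment since the restricted polynomial remains single-variable with bounded-density coefficients, and then invoke Theorem~\ref{thm:VC-bound-general} (built on Theorem~\ref{thm:alg-hyp}) for the uniform-convergence step. The only cosmetic difference is that you spell out the intermediate $\kappa'$/$\kappa''$-boundedness bookkeeping which the paper leaves implicit.
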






\subsubsection{Semi-bandit setting and efficient algorithms}\label{sec: semibandit}

Online learning with full information is usually inefficient in practice since it involves computing and working with the entire domain of hyperparameters. For our setting in particular this is computationally infeasible for weighted graphs since the number of pieces (in loss as a piecewise constant function of the parameter) may be exponential in the worst case (see section \ref{sec: pd}). Fortunately we have a workaround provided by \citet{dick2020semi} where dispersion implies learning in a semi-bandit setting as well. This setting differs from the full information online problem as follows. In each round as we select the parameter $\rho_i$, we only observe losses for a single interval containing $\rho_i$ (as opposed to the entire domain). We call the set of these observable intervals the {\it feedback set}, and these provide a partition of the domain. The trade-off is slightly slower convergence, the regret bound for these approaches is weaker (it is $O(\sqrt{K})$ in the size $K$ of the feedback set instead of $O(\sqrt{\log K})$) but still converges to optimum as $\Tilde{O}(1/\sqrt{T})$.

\begin{algorithm}[t]
\caption{{Efficient Data-driven Graph-based SSL ($\lambda$)}}
\label{alg:ddsslsb}
\begin{algorithmic}[1]
\STATE {\bfseries Input:} Graphs $G_t$ with labeled and unlabeled nodes $(L_t,U_t)$ and node similarities $d(u,v)_{u,v\in L_t\cup U_t}$.
\STATE {\bfseries Hyperparameter:} step size parameter  $\lambda \in (0, 1]$.
\STATE {\bfseries Output:} Graph parameter $\rho_t$ for times $t=1,2,\dots,T$.
\STATE{Set $w_1(\rho)=1$ for all $\rho\in C$}
\FOR{$t=1,2,\dots,T$}
\STATE{$W_t:=\int_{C}w_t(\rho)d\rho$.}
\STATE{Sample $\rho$ with probability
        $p_{t}(\rho)=\frac{w_t(\rho)}{W_t}$, output as $\rho_t$.}
\STATE{Compute the feedback set $A^{(t)}(\rho)$ containing $\rho_t$. For example, for the min-cut objective use Algorithm \ref{algorithm: semi cut} to set $A^{(t)}(\rho)=\textsc{DynamicMinCut}(G_t,\rho_t,1/\sqrt{T})$, similarly for the harmonic objective use Algorithm \ref{algorithm: semi harmonic}.}
\STATE{Compute average loss function $l_t(\rho)=\frac{1}{|U_t|}\sum_{u\in U}l(A_{G_t(\rho),L_t,U_t}(u),\tau(u))$.}
\STATE{Set $\hat{l}_t(\rho)=\frac{\bI[\rho\in A^{(t)}(\rho)]}{\int_{A^{(t)}(\rho)}p_t(\rho)}l_t(\rho)$.}
\STATE{For each $\rho\in C, \text{ set }w_{t+1}(\rho)=e^{\lambda \hat{l}_t(\rho)}w_{t}(\rho)$.}
\ENDFOR
\end{algorithmic}
\end{algorithm}

For the case of learning the unweighted threshold graph, computing the feedback set containing a given $r$ is easy as we only need the next and previous thresholds from among the $O(n^2)$ values of pairwise distances where loss may be discontinuous in $r$. We present algorithms for computing the semi-bandit feedback sets (constant performance interval containing any $\sigma$) for the weighted graph setting (we will use Definition \ref{defn:g}c for concreteness, but the algorithms easily extend to Definition \ref{defn:g}b). We propose a hybrid combinatorial-continuous algorithm for the min cut objective and use continuous optimization for the harmonic objective (recall objectives from Table \ref{table:algos}). See Appendix \ref{app:sb} for background on the flow-cut terminology in Algorithm \ref{algorithm: semi cut}, and for a complete proof of its correctness.

\begin{algorithm}[t]
\caption{$\textsc{DynamicMinCut}(G,\sigma_0,\epsilon)$}
\label{algorithm: semi cut}
\begin{algorithmic}[1]
\STATE {\bfseries Input:} Graph $G$ with unlabeled nodes, query parameter $\sigma_0$, error tolerance $\epsilon$.
\STATE {\bfseries Output:} Piecewise constant interval containing $\sigma_0$.
\STATE{Use a max-flow algorithm to compute max-flow and min-cut $\C$ for $G(\sigma)$, $\sigma_h=\sigma_0$.}
\STATE{Compute the flow decomposition of the max-flow, $\F$.}
\STATE{Let $f_e$ be a unique {\it path flow} (i.e. along an $st$-path, Definition \ref{defn:pathflow}) through $e\in\C$.}
\STATE{Say $e$ is {\it augmentable} if flow $f_e$ can be increased by amount $w_e(\sigma)-w_e(\sigma_h)$ for some $\sigma>\sigma_h$. $e$ acts as the bottleneck for increasing the flow $f_e$.}
\STATE{Initialize $S$ to $\C$ (a set of saturated edges).}
\WHILE{All edges $e\in S$ are augmentable,}
\STATE{Increase flow in all $f_e$ for $e\in S$ to keep $e$ saturated.}
\STATE{Find first saturating edge $e_1\notin S$ for some $f_{e'}$ ($e'\in S$) and $\sigma'$ to within $\epsilon$.}
\STATE{Reassociate flow through $e_1,e'$ as $f_{e_1}$. $f_{e'}$ will now be along an alternate path in the residual capacities graph (Definition \ref{defn:residual}).}
\STATE{Add $e_1$ to $S$.}
\STATE{Set $\sigma_h=\sigma'$.}
\ENDWHILE
\STATE{Similarly find the start of the interval $\sigma_l$ by detecting saturation while reducing flows.}
\RETURN{$[\sigma_l,\sigma_h]$.}
\end{algorithmic}
\end{algorithm}

\begin{restatable}{theorem}{thmsb}\label{thm:sb}

For the mincut objective and exponential kernel (Definition \ref{defn:g}c), Algorithm \ref{algorithm: semi cut} outputs the interval containing $\sigma$ in time $O(n^3K(n,\epsilon))$, where $K(n,\epsilon)$ is the complexity of solving an exponential equation $\phi(y)=\sum_{i=1}^na_iy^{b_i}=0$ to within error $\epsilon$ (for example  $K(n,\epsilon)=O(n\log\log\frac{1}{\epsilon})$ using Newton's method with quadratic convergence).
\end{restatable}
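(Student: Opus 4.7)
The plan is to verify (i) correctness, i.e.\ that Algorithm \ref{algorithm: semi cut} returns an interval $[\sigma_l,\sigma_h]$ on which the min-cut of $G(\sigma)$ as an edge set (and hence the labeling and loss) is constant, and (ii) the stated running-time bound. The central tool is max-flow/min-cut duality: we will maintain an explicit feasible $st$-flow at every $\sigma$ in the interval whose value equals the capacity $\mathrm{cap}_\sigma(\C)=\sum_{e\in\C}w_e(\sigma)$, which certifies that $\C$ remains a min-cut throughout.

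For correctness, I would establish the invariant that after each iteration of the main while loop, the algorithm has a valid flow decomposition of a max-flow for every $\sigma$ between the current $\sigma_h$ and the previous one. Because $w_e(\sigma)=e^{-d(u,v)^2/\sigma^2}$ varies continuously in $\sigma$, each path flow $f_e$ through a cut edge $e\in\C$ must be scaled so $e$ remains saturated; the bottleneck is that increasing $f_e$ may saturate some other edge $e_1$ along the path. Solving $w_{e_1}(\sigma)-w_{e_1}(\sigma_h)$ against the residual along $f_{e'}$ yields an exponential polynomial equation, whose first positive root $\sigma'$ the algorithm computes to within $\epsilon$. At $\sigma'$, rerouting $f_{e'}$ through an alternate $st$-path in the residual graph preserves feasibility and flow value (such an alternate path exists precisely because $\C\setminus\{e'\}$ is not yet a cut), and $e_1$ takes over the bottleneck role for its path. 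The loop exits exactly when some $e\in S$ ceases to be augmentable, meaning no further infinitesimal increase of $\sigma$ admits a saturating flow through $\C$, i.e.\ the combinatorial max-flow structure must change — this is $\sigma_h$. The symmetric decreasing-$\sigma$ process identifies $\sigma_l$.

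For the complexity bound, the initial max-flow and flow decomposition are computed in $O(n^3)$ time via a standard algorithm (e.g.\ push-relabel). In each iteration of the while loop, one new edge is added to the monotonically growing set $S\subseteq E$, so the loop runs at most $O(n^2)$ times; within each iteration, determining the earliest saturation requires solving $O(n)$ equations of the form $\sum_i a_ie^{b_i/\sigma^2}=c$, each in time $K(n,\epsilon)$ by Newton iteration. Combining, the total cost is $O(n^3\,K(n,\epsilon))$, dominating the initial max-flow step.

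The main obstacle will be the rerouting argument: showing that exactly when $e_1$ first saturates along $f_{e'}$, the residual graph still admits an alternate $st$-path to carry the flow previously routed through $e'$, and that the collection of saturation events detected by the algorithm contains \emph{every} $\sigma$ at which the min-cut edge set changes between $\sigma_0$ and $\sigma_h$. A subtle issue is that several edges may saturate at the same $\sigma'$ and the algorithm must handle ties consistently; another is certifying that between two consecutive saturation events $\sigma_i<\sigma_{i+1}$ the edge set $\C$ is genuinely unchanged, which follows from the continuity of $w_e(\sigma)$ combined with the invariant that the maintained flow saturates only edges in $S\supseteq\C$. The $\epsilon$ tolerance in the root-finding step introduces an additive error in the interval endpoints that does not affect asymptotic regret once $\epsilon=1/\sqrt T$ is plugged into Algorithm~\ref{alg:ddsslsb}.
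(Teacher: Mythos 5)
Your proposal follows essentially the same route as the paper's proof: compute an initial max-flow and min-cut in $O(n^3)$, take a flow decomposition, grow each path flow through a cut edge while keeping it saturated as $\sigma$ increases, detect the next saturating edge by solving an exponential-polynomial equation, reroute in the residual graph, stop when no alternate path exists (new min-cut), and bound iterations by noting that $S$ grows monotonically to size at most $O(n^2)$ with $O(n)$ root-finding calls each, giving $O(n^3K(n,\epsilon))$. The subtleties you flag at the end (rerouting always possible before termination, simultaneous ties, certifying the cut is genuinely fixed between events) are also left informal in the paper's proof, so both arguments share the same level of rigor and the same structure.
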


\begin{proof}[Proof Sketch]
Let $L_1$ and $L_2$ denote the labeled points $L$ of different classes. To obtain the labels for $U$, we seek the smallest cut $(V_1, V\setminus V_1)$ of $G$ separating the nodes in $L_1$ and $L_2$. If $L_i\subseteq V_1$, label exactly the nodes in $V_1$ with label $i$. The loss function, $l(\sigma)$ gives the fraction of labels this procedure gets right for the unlabeled set $U$.

It is easy to see, if the min-cut is the same for two values of $\sigma$, then so is the loss function $l(\sigma)$. So we seek the smallest amount of change in $\sigma$ so that the mincut changes. Consider a fixed value of $\sigma=\sigma_0$ and the corresponding graph $G(\sigma_0)$. We can compute the max-flow on $G(\sigma)$, and simultaneously obtain a min-cut $(V_1,V \setminus V_1)$ in time $O(n^3)$. Clearly, all the edges in $\partial V_1$ are saturated by the flow. For each $e_i\in \partial V_1$, let $f_i$ denote the flow that saturated $e_i$. Note that the $f_i$ are distinct. Now as $\sigma$ is increased, we increment each $f_i$ by the additional capacity in the corresponding edge $e_i$, until an edge in $E\setminus \partial V_1$ saturates (at a faster rate than the flow through it). We now increment flows while keeping this edge saturated. The procedure stops when we can no longer find an alternate path for some flow among the unsaturated edges, which implies the existence of a new min-cut. This gives us a new critical value of $\sigma$.

Finally note that each time we perform step 10 of the algorithm, a new saturated edge stays saturated for all further $\sigma$ until the new cut is found. So we can do this at most $O(n^2)$ times. In each loop we need to obtain the saturation condition for $O(n)$ edges.
\end{proof}

For the harmonic objective, we can obtain similar efficiency (Algorithm \ref{algorithm: semi harmonic}). We seek points where $f_u(\sigma)=\frac{1}{2}$ for some $u\in U$ closest to given $\sigma_0$. For each $u$ we can find the local minima of $\left(f_u(\sigma)-\frac{1}{2}\right)^2$ or simply the root of $f_u(\sigma)-\frac{1}{2}$ using gradient descent or Newton's method. The gradient computation requires $O(n^3)$ time for matrix inversion, and we can obtain quadratic convergence rates for finding the root.

\begin{restatable}{theorem}{thmsb}\label{thm:sb}

For the harmonic function objective (Table \ref{table:algos}) and exponential kernel (Definition \ref{defn:g}c), Algorithm \ref{algorithm: semi harmonic} outputs the interval containing $\sigma$ within accuracy $\epsilon$ in time $O(n^4K_1(\epsilon))$, where $K_1(\epsilon)$ is the complexity of convergence for Newton's method ($K_1(\epsilon)=O(\log\log\frac{1}{\epsilon})$ under standard assumptions).
\end{restatable}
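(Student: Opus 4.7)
The plan is to show that the discontinuities of the piecewise-constant loss $l(\sigma)$ can be localized exactly at the sign changes of $f_u(\sigma) - \tfrac{1}{2}$ for unlabeled nodes $u \in U$, and then bound the cost of finding, for each such $u$, the root of this equation closest to $\sigma_0$ on either side. Recall that the harmonic solution is given in closed form by $f_U(\sigma) = (D_{UU}(\sigma) - W_{UU}(\sigma))^{-1} W_{UL}(\sigma)\, f_L$, and labels are produced by rounding at $\tfrac{1}{2}$. Hence between two consecutive $\sigma$ at which some $f_u(\sigma)$ equals $\tfrac{1}{2}$ the rounded labeling, and therefore $l(\sigma)$, is constant. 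The semi-bandit feedback set containing $\sigma_0$ is then the interval $[\sigma_l,\sigma_h]$ whose endpoints are the roots of $f_u(\sigma) - \tfrac{1}{2}$ closest to $\sigma_0$ from below and above, taken over all $u \in U$.

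First I would verify that each $f_u(\sigma)$ is smooth in $\sigma$ away from the finitely many discontinuity points of $l(\sigma)$, so that Newton's method is well-defined. Differentiating the matrix identity $f_U = M(\sigma)^{-1} b(\sigma)$ with $M(\sigma) = D_{UU} - W_{UU}$ and $b(\sigma) = W_{UL} f_L$ gives $f_U'(\sigma) = -M(\sigma)^{-1}M'(\sigma)M(\sigma)^{-1}b(\sigma) + M(\sigma)^{-1}b'(\sigma)$, both of whose terms reduce to one matrix inversion ($O(n^3)$) plus matrix-vector multiplications ($O(n^2)$) once the entrywise derivatives of $M$ and $b$ (closed-form, since the entries are exponentials $e^{-d(u,v)^2/\sigma^2}$) are computed. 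Thus each Newton step for a fixed $u$ costs $O(n^3)$. Under standard regularity conditions (simple root, nonzero derivative, starting point in the basin of attraction), Newton's method achieves accuracy $\epsilon$ in $K_1(\epsilon) = O(\log\log(1/\epsilon))$ iterations, giving $O(n^3 K_1(\epsilon))$ per root search. We perform such a search for each of $|U| \le n$ unlabeled nodes on each side of $\sigma_0$, yielding the total $O(n^4 K_1(\epsilon))$ bound.

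The main obstacle will be ensuring that Newton's method converges to the \emph{nearest} root on each side, rather than some farther root, so that the feedback interval returned is actually a subset of a single constant-performance piece. I would address this by initializing the Newton iteration at $\sigma_0$ itself (where we know $f_u(\sigma_0) \ne \tfrac{1}{2}$ generically), and combining it with a bracketing step that first uses the sign of $f_u(\sigma_0) - \tfrac{1}{2}$ together with a monotone coarse scan (or a safeguarded Newton/bisection hybrid) to locate a bracket around the nearest crossing before switching to pure Newton steps. The coarse scan contributes lower-order cost since the resolution only needs to distinguish the closest root, and the dispersion argument of Theorem \ref{thm:dispersion} (extended to exponential polynomials via Theorem \ref{thm:exproots1}) guarantees that these crossings are typically well-separated, making the safeguarded phase short. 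Taking the minimum over $u$ of the bracketed roots on each side yields $\sigma_l$ and $\sigma_h$, establishing correctness; summing the costs establishes the claimed running time.
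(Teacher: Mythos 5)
Your proposal is correct and takes essentially the same approach as the paper: discontinuities in the piecewise-constant loss $l(\sigma)$ are identified with sign changes of $f_u(\sigma)-\tfrac{1}{2}$, and Newton's method with a closed-form $O(n^3)$ gradient per step, run over each of $O(n)$ unlabeled nodes, gives the $O(n^4 K_1(\epsilon))$ bound. You additionally flag a legitimate subtlety the paper subsumes under its ``standard assumptions'' caveat — that unsafeguarded Newton iteration from $\sigma_0$ need not converge to the \emph{nearest} root — and your proposed bracketing safeguard is a reasonable way to close that gap without changing the stated asymptotics.
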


\begin{proof}
The key observation is that any boundary point $\sigma'$ of a piece (where the loss function is constant) has $f_u(\sigma')=\frac{1}{2}$ for some $u\in U$. This follows from continuity of $f_u(\sigma)$ (it is in fact differentiable). Algorithm \ref{algorithm: semi harmonic} simply estimates the locations of these $\sigma'$ closest to $\sigma_0$ for each $u$ (by using Newton's method) to find the root of $g_u(\sigma)=(f_u(\sigma)-\frac{1}{2})^2$. For each of $O(n)$ nodes, Algorithm \ref{algorithm: semi harmonic} computes the gradient and function value of $g_u(\sigma)$ in $O(n^3)$ time for different values of $\sigma$ until convergence, which gives the bound on time complexity.
\end{proof}

\begin{algorithm}[h]
\caption{$\textsc{HarmonicFeedbackSet}(G,\sigma_0,\epsilon)$}
\label{algorithm: semi harmonic}
\begin{algorithmic}[1]
\STATE {\bfseries Input:} Graph $G$ with unlabeled nodes, query parameter $\sigma_0$, error tolerance $\epsilon$.
\STATE {\bfseries Output:} Piecewise constant interval containing $\sigma_0$.
\STATE {Let $f_{U}=(D_{UU}-W_{UU})^{-1}W_{UL}f_L$ denote the harmonic objective minimizer, where $D_{ij}:=\bI[i=j]\sum_{k}W_{ik}$}.
\FORALL{$u\in U$}
\STATE{Let $g_u(\sigma)=(f_u(\sigma)-\frac{1}{2})^2$.}

\STATE{$\sigma_1=\sigma_0-\frac{g_u(\sigma_0)}{g'_u(\sigma_0)}$, where $g'_u(\sigma)$ is given by }
\begin{align*}
    \frac{\partial g_u}{\partial \sigma}&=2\left(f_u(\sigma)-\frac{1}{2}\right) \frac{\partial f_u}{\partial \sigma},\\
    \frac{\partial f}{\partial \sigma}&=(I-P_{UU}^{-1})\left(\frac{\partial P_{UU}}{\partial \sigma}f_U-\frac{\partial P_{UL}}{\partial \sigma}f_L\right),\\
    \frac{\partial P_{ij}}{\partial \sigma}&=\frac{\frac{\partial w(i,j)}{\partial \sigma}-P_{ij}\sum_{k\in L+U}\frac{\partial w(i,k)}{\partial \sigma}}{\sum_{k\in L+U}w(i,k)},\\
    \frac{\partial w(i,j)}{\partial \sigma}&=\frac{2w(i,j)d(i,j)^2}{\sigma^3},
\end{align*}
where $P=D^{-1}W$.

\STATE{$n=0$}

\WHILE{$|\sigma_{n+1}-\sigma_{n}|\ge \epsilon$}
\STATE{$n\leftarrow n+1$}
\STATE{$\sigma_{n+1}=\sigma_n-\frac{g_u(\sigma_n)}{g'_u(\sigma_n)}$}
\ENDWHILE
\STATE{$\sigma_u=\sigma_{n+1}$}
\ENDFOR
\STATE{$\sigma_l=\max_u\{\sigma_u\mid \sigma_u<\sigma_0\}$, $\sigma_h=\min_u\{\sigma_u\mid \sigma_u>\sigma_0\}$}
\RETURN{$[\sigma_l,\sigma_h]$.}
\end{algorithmic}
\end{algorithm}


\subsection{Distributional setting}\label{sec: distrib}
In the distributional setting, we are presented with instances of the problem assumed to be drawn from an unknown distribution $\D$ and want to learn the best value of the graph parameter $\rho$. We also assume we get all the labels for past instances ({\it full information}). A choice of $\rho$ uniquely determines the graph $G(\rho)$ and we use some algorithm $A_{G(\rho),L,U}$ to make predictions (e.g. minimizing the quadratic penalty score above) and suffers loss $l_{A(G(\rho),L,U)}:=\sum_Ul(A_{G(\rho),L,U}(u),\tau(u))$ which we seek to minimize relative to smallest possible loss by some graph in the hypothesis space, in expectation over the data distribution $\D$. 

We will show a divergence in the weighted and unweighted graph learning problems. {We analyze and provide asymptotically tight bounds for the pseudodimension of the set of loss functions (composed with the graph creation algorithm family and the optimization algorithm for predicting labels) prarmeterized by the graph family parameter $\rho$, i.e. $\mathcal{H}_{\rho}=\{l_{A(G(\rho),L,U)}\mid \rho\in\cP \}$.} For learning the unweighted threshold graphs, the pseudodimension is $O(\log n)$ which implies existence of an efficient algorithm with generalization guarantees in this setting. However, the pseudodimension is shown to be $\Omega(n)$ for the weighted graph setting, and therefore the smoothness assumptions are necessary for learning over the algorithm family. Both these bounds are shown to be tight up to constant factors (Section \ref{sec: pd}).

We establish uniform convergence guarantees in Section \ref{sec:uc}. For the unweighted graph setting, our pseudodimension bounds are sufficient for uniform convergence. We resort to bounding the Rademacher complexity in the weighted graph setting  which allows us to prove distribution dependent generalization guarantees, that hold under distributional niceness assumptions of Section \ref{sec:ol-d} (unlike pseudodimension which gives generalization guarantees that are worst-case over the
distribution). These guarantees are derived by extending our results in the dispersion setting, and follow under the same perturbation invariance assumptions.
Additional proofs and details from this section may be found in Appendix \ref{app:distrib}.

{The online learning results above only work for smoothed but adversarial instances, while the distributional learning sample complexity results based on pseudodimension work for any types (no smoothness needed) of independent and identically distributed instances. So these results are not superseded by the online learning results, the settings are strictly speaking incomparable, and the pseudodimension results in the distributional setting provide new upper and lower bounds for the problem.}

\subsubsection{Pseudodimension bounds}\label{sec: pd}

We can efficiently learn the unweighted graph with polynomially many samples. We show this by providing a bound on the pseudodimension of the set of loss functions $\mathcal{H}_r=\{l_{A(G(r),L,U)}\mid 0\le r<\infty \}$, where $G(r)$ is specified by Definition \ref{defn:g}a. Our bounds hold for both the min-cut and quadratic objectives (Table \ref{table:algos}).

\begin{theorem}\label{thm:pdub}
The pseudo-dimension of $\mathcal{H}_r$ is $O(\log n)$, where $n$ is the total number of (labeled and unlabeled) data points.
\end{theorem}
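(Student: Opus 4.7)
The plan is to bound the pseudo-dimension by counting the number of achievable sign patterns on any $m$ problem instances. The key observation, essentially Lemma \ref{lem:disc} from the dispersion analysis, is that for any single problem instance with a total of $n$ nodes, the loss $l_i(r)$ is a piecewise constant function of $r$ whose discontinuities can only occur at the at most $\binom{n}{2}$ pairwise distances $d(u,v)$ among the nodes of that instance. This holds for both the mincut and the quadratic/harmonic objectives, because the graph $G(r)$ itself is constant on each maximal interval between consecutive pairwise distances, so the output of any algorithm $A_{G(r),L,U}$, and hence the loss, is constant there.

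Next, I would take any set $\{(L_1,U_1),\dots,(L_m,U_m)\}$ of $m$ instances together with any proposed witness vector $(r_1^*,\dots,r_m^*)\in\R^m$, and look at the map $r\mapsto (\mathrm{sign}(l_i(r)-r_i^*))_{i=1}^m$. Union the discontinuity sets of the individual loss functions: the union has size at most $m\binom{n}{2} = O(mn^2)$, so it partitions $\R_{\ge 0}$ into at most $O(mn^2)$ intervals. On each such interval every $l_i(r)$ is constant by Step 1, and therefore the full $m$-dimensional sign pattern above is constant on the interval. Consequently at most $O(mn^2)$ distinct sign patterns are realizable by varying $r$, independent of the choice of witness.

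Finally, for $m$ instances to be pseudo-shattered, all $2^m$ sign patterns must be achievable, so the inequality $2^m \le C\, mn^2$ must hold for an absolute constant $C$. Solving gives $m = O(\log n)$, which is the desired bound on the pseudo-dimension. The only step requiring any care is the first one: for the quadratic objective with the rounding step, one must observe that although the unrounded minimizer $f$ can vary with $r$, the graph itself (and hence the optimizer and its rounding, under a fixed tie-breaking rule) is unchanged on each interval between consecutive values of $d(u,v)$, so the loss is indeed piecewise constant with only $O(n^2)$ pieces per instance. The rest is the standard Sauer-style counting argument adapted to the pseudo-dimension.
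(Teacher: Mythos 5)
Your proof is correct and follows essentially the same route as the paper's: both argue that on any $m$ instances the union of pairwise-distance thresholds splits the real line into $O(mn^2)$ intervals on which the graph, hence the algorithm's output and loss, is constant, and then apply the counting inequality $2^m \le O(mn^2)$. Your added remark about tie-breaking in the rounding step is a worthwhile clarification that the paper leaves implicit.
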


\begin{proof}
There are at most ${\binom{n}{2}}$ distinct distances between pairs of data points. As $r$ is increased from 0 to infinity, the graph changes only when $r$ corresponds to one of these distances, and so at most $\binom{n}{2}+1$ distinct graphs may be obtained.\\
Thus given set $\mathcal{S}$ of $m$ instances $(A^{(i)},L^{(i)})$, we can partition the real line into $O(mn^2)$ intervals such that all values of $r$ behave identically for all instances within any fixed interval. Since $A$ and therefore its loss is deterministic once $G$ is fixed, the loss function is a piecewise constant with only $O(n^2)$ pieces. Each piece can have a witness above or below it as $r$ is varied for the corresponding interval, and so the binary labeling of $\mathcal{S}$ is fixed in that interval. The pseudo-dimension $m$ satisfies $2^m\le O(mn^2)$ and is therefore $O(\log n)$.
\end{proof}

We can also show an asymptotically tight lower bound on the pseudodimension of $\mathcal{H}_r$. We do this by presenting a collection of graph thresholds and well-designed labeling instances which are shattered by the thresholds. An intuitive simplified sketch for the proof is included below, for full details see Appendix \ref{app:distrib}.

\begin{restatable}{theorem}{thmpdlb}\label{thm:pdlb}
The pseudo-dimension of $\mathcal{H}_r$ is $\Omega(\log n)$.
\end{restatable}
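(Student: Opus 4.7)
The plan is to pseudo-shatter $m=\Omega(\log n)$ carefully constructed single-unlabeled-point instances by exhibiting $2^m$ threshold values $r^*_0<\cdots<r^*_{2^m-1}$ so that, at $r=r^*_k$, the sign vector $(\bI[l_i(r)>w_i])_{i=0}^{m-1}$ is exactly the binary representation of $k$. In other words the $m$ instances will jointly behave like an $m$-bit counter in the parameter $r$, realizing all $2^m$ sign patterns and matching the $O(\log n)$ upper bound of Theorem \ref{thm:pdub} up to constants.

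Fix $m=\lfloor\log_2 n\rfloor-1$ and pick any $2^m$ strictly increasing positive reals $c_0<c_1<\cdots<c_{2^m-1}$. For each $i\in\{0,\dots,m-1\}$, instance $I_i$ lives on the real line, has a single unlabeled point $a_i$ with true label $A$, a single seed labeled point of class $A$ at distance $c_0$ from $a_i$, and, for every $j=1,2,\dots,2^{m-i}-1$, two labeled points at distance $c_{j\cdot 2^i}$ from $a_i$ (on opposite sides of $a_i$ on the line), of common class $B$ if $j$ is odd and $A$ if $j$ is even. A direct count then shows that just after the $j$-th toggle the neighborhood of $a_i$ in $G(r)$ contains $j+1$ vertices of the current majority class and $j$ of the other, so the simple majority of $a_i$'s neighbors flips precisely at each $c_{j\cdot 2^i}$; equivalently, for $r\in(c_k,c_{k+1})$ the majority is $B$ iff $\lfloor k/2^i\rfloor$ is odd iff bit $i$ of $k$ equals $1$. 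Instance $I_i$ uses only $1+1+2(2^{m-i}-1)\le 2^{m+1}\le n$ nodes, so the point budget is respected, and since all points lie on one line every triangle inequality is automatic.

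To turn the majority count into a loss value I will verify that both the min-cut algorithm (Algorithm A of Table \ref{table:algos}) and the harmonic soft cut (Algorithm B) output the majority vote over the neighbors of $a_i$. Under $\alpha=\infty$ the classes in $L$ are clamped, so edges among same-class labeled points lie entirely on one side of any feasible cut and edges between opposite-class labeled points are cut regardless, contributing only a constant to the objective; the remaining optimization at $a_i$ then reduces to minimizing its own cut cost (min-cut) or to averaging its clamped neighbors and rounding at $1/2$ (harmonic), and both coincide with the majority over unit-weight neighbors. Consequently $l_i(r)\in\{0,1\}$, and with witness $w_i=1/2$ the sign vector realized anywhere in $(c_k,c_{k+1})$ is exactly the binary expansion of $k$. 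Letting $k$ sweep $0,1,\dots,2^m-1$ realizes every pattern in $\{0,1\}^m$, so $\{I_0,\dots,I_{m-1}\}$ is pseudo-shattered, giving $\mathrm{Pdim}(\mathcal{H}_r)\ge m=\Omega(\log n)$.

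The main obstacle I expect is making the majority-vote reduction airtight: exact ties in the neighbor count must be avoided (a generic tiny perturbation of the seed distance $c_0$ breaks any tie without changing the toggle structure), and one must check that the labeled-to-labeled edges that appear as $r$ grows never influence $a_i$'s predicted label beyond the constant shifts noted above. A secondary concern is adapting the argument to the other quadratic-objective variants of Table \ref{table:algos}, but the same local reduction at $a_i$ under clamping applies to each, so the same shattering witness works uniformly across the algorithms covered by $\mathcal{H}_r$.
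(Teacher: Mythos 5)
Your construction is correct, and it is also genuinely different from the paper's. The paper's Lemma~\ref{lem:pdlb} builds each instance with only \emph{five} labeled points ($a_1,a_2,a_3,b_1,b_2$) and $n'=n-5$ \emph{unlabeled} points $u_1,\dots,u_{n'}$, with distances $d(b_i,u_k)=r_k$ chosen so that as $r$ crosses each $r_k$ a new unlabeled point $u_k$ gets connected to both $b$'s and flips its soft prediction; the loss thus oscillates between two nearby values $l_{\text{low}},l_{\text{high}}$, and the witness is placed between them. You invert this: each instance has a \emph{single} unlabeled point $a_i$ and up to $n-1$ labeled points arranged on a line, with pairs of labeled nodes toggling the majority (hence the hard/soft prediction) as $r$ crosses $c_{j\cdot 2^i}$; the loss is $\{0,1\}$-valued and the witness is $1/2$. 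Your ``$m$-bit counter'' reading of the interval index $k$ (bit $i$ of $k$ equals $\lfloor k/2^i\rfloor \bmod 2$) is a clean way to exhibit all $2^m$ sign patterns, and the reduction to majority vote at $a_i$ is immediate for the min-cut objective (only the edges incident to $a_i$ depend on $f(a_i)$) and for the harmonic objective (with a single unlabeled node, $f_{a_i}$ is exactly the unweighted average of its labeled neighbors' values, rounded at $1/2$). Your node-budget bound $2^{m-i+1}\le 2^{m+1}\le n$ and the odd total neighbor count (one seed plus pairs) which rules out ties are both correctly noted. The trade-offs: the paper's construction keeps the labeled set tiny and constant-sized, which is closer to the semi-supervised regime the paper emphasizes (few labels, many unlabeled points), and its oscillation lemma is a reusable building block; your construction is more elementary, avoids the need to estimate $l_{\text{low}},l_{\text{high}}$, and makes the majority-vote mechanism fully explicit, at the cost of using $\Theta(n)$ labeled points and only one unlabeled point per instance. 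Both give $\Omega(\log n)$ and both are consistent with the $O(\log n)$ upper bound of Theorem~\ref{thm:pdub}.
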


\begin{proof}[Proof Sketch]
{We have three labeled nodes, $a_1$ with label 0 and $b_1,b_2$ labeled 1, and $n'=O(n)$ unlabeled nodes $U=\{u_1,\dots,u_{n'}\}$. We can show that given a sequence $\{r_1,\dots,r_{n'}\}$ of values of $r$, it is possible to
construct an instance with suitable true labels of $U$ such that the loss as a function of $r$ oscillates above and below some witness as $r$ moves along the sequence of intervals $(r_i,r_{i+1})_{i\ge 0}$.}

\begin{figure}[h!]
    \centering
    
    \begin{tabular}{c }
    
    \begin{subfigure}[b]{0.22\textwidth}
    
\centering
         \includegraphics[width=\textwidth]{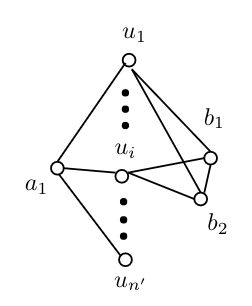}

 \subcaption{$G(r_i)$}
    \label{fig:lb-main1}
    \end{subfigure}
    \hspace{2.5cm}
    \begin{subfigure}[b]{0.33\textwidth}
    
\centering
         \includegraphics[width=0.8\textwidth]{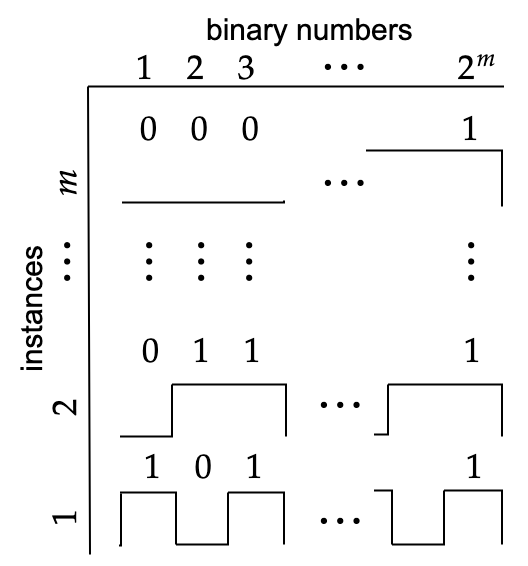}

 \subcaption{Loss oscillations in shattered instances}
    \label{fig:lb-main2}
    \end{subfigure}
    
    \end{tabular}

 \caption{Graphs $G(r)$ as $r$ is varied, for lower bound construction for pseudodimension of $\mathcal{H}_r$. Labels are set in the instances to match bit flips in the sequence of binary numbers as shown.}
    \label{fig:lb-main}
\end{figure}

{At the initial threshold $r_0$, all unlabeled points have a single incident edge, connecting to $a_1$, so all predicted labels are 0. As the threshold is increased to $r_i$, (the distances are set so that) $u_i$ gets connected to both nodes with label 1 and therefore its predicted label changes to 1. If the sequence of nodes $u_i$ is alternately labeled, the loss decreases and increases alternately as all the predicted labels turn to 1 as $r$ is increased to $r_{n'}$. This oscillation between a high and a low value can be achieved for any subsequence of distances $r_1,\dots,r_{n'}$, and a witness may be set as a loss value between the oscillation limits. By precisely choosing the subsequences so that the oscillations align with the bit flips in the binary digit sequence, we can construct $m$ instances which satisfy the $2^m$ shattering constraints.}
\end{proof}

For learning weighted graphs, we can show a $\Theta(n)$ bound on the pseudodimension of the set of loss functions. We show this for the loss functions for learning graphs with exponential kernels, $\mathcal{H}_{\sigma}=\{l_{A(G(\sigma),L,U)}\mid 0\le \sigma<\infty \}$, where $G(\sigma)$ is specified by Definition \ref{defn:g}c. For simplicity of presentation, the lower bound will be shown for the min-cut objective.

\begin{theorem}\label{thm:pdubs}
The pseudo-dimension of $\mathcal{H}_{\sigma}$ is $O(n)$, where $n$ is the total number of (labeled and unlabeled) data points.
\end{theorem}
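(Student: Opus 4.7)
The plan mirrors the strategy used for Theorem~\ref{thm:pdub}: given a sample of $m$ problem instances $\mathcal{S}=\{(L_i,U_i,\tau_i)\}_{i=1}^m$, I would partition the $\sigma$-axis into at most $m \cdot N$ intervals on which the vector $(l_1(\sigma),\ldots,l_m(\sigma))$ is constant, where $N$ upper-bounds the number of pieces of a single loss function $l_i(\sigma)$. Pseudo-shattering of $\mathcal{S}$ with any witness $r\in\R^m$ requires $2^m$ distinct sign patterns of $l_i(\sigma)-r_i$, so the counting argument gives $2^m \le m N$, hence $m = O(\log N + \log m)$. It therefore suffices to prove $N = 2^{O(n)}$.

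First I would analyze the min-cut objective. For any two vertex cuts $C_1,C_2$ of the graph, their weights as functions of $\sigma$ take the form $W_{C_j}(\sigma) = \sum_{(u,v)\in C_j} e^{-d(u,v)^2/\sigma^2}$, so the difference $W_{C_1}(\sigma)-W_{C_2}(\sigma)$ is an exponential sum of the form $\phi(\sigma) = \sum_{k} a_k e^{-b_k/\sigma^2}$ with at most $\binom{n}{2}$ distinct exponents $b_k$. By the classical fact that a real exponential sum $\sum_{k=1}^{K} \alpha_k e^{\beta_k x}$ with distinct $\beta_k$ has at most $K-1$ real zeros (applied after the monotone substitution $x = 1/\sigma^2$ on $\sigma>0$), each such difference changes sign at most $O(n^2)$ times. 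Since a transition in the min-cut value can only occur when two cuts exchange order of weight, the number of pieces of $l_i(\sigma)$ is at most $\binom{2^n}{2}\cdot O(n^2) = 2^{O(n)}$. Substituting into the counting inequality $2^m \le m\cdot 2^{O(n)}$ yields $m = O(n)$.

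The argument extends to the harmonic/quadratic objective: the soft labels $f_U(\sigma) = (D_{UU}-W_{UU})^{-1}W_{UL}f_L$ are rational functions whose numerators and denominators are polynomials in the edge weights $e^{-d(u,v)^2/\sigma^2}$, and the rounded label of node $u$ flips only when $f_u(\sigma)=\tfrac{1}{2}$. Clearing denominators turns this into an equation of the form $\sum_k a_k e^{-b_k/\sigma^2}=0$, to which the same zero-counting bound for exponential sums applies. The main technical obstacle is bookkeeping here: the matrix inversion introduces a determinant expansion that can a priori produce many distinct exponents $b_k$, and one must verify that the number of terms remains $2^{O(n)}$ rather than doubly exponential, so that the per-instance piece count stays within the regime needed to conclude $m=O(n)$. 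Once this accounting is in place, the pseudodimension bound follows uniformly across the algorithms of Table~\ref{table:algos} that admit such a closed-form rational description.
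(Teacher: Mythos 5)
Your proposal reaches the same conclusion as the paper but by a considerably heavier route. The paper's own proof is a single observation: each instance has $n$ vertices and hence at most $2^n$ possible labelings, so $2^m \le m\cdot 2^n$, giving $m = O(n)$ directly. You instead bound the number of \emph{pieces} of each loss function $l_i(\sigma)$ by counting sign changes of pairwise cut-weight differences, invoking the fact that a real exponential sum with $K$ distinct exponents has at most $K-1$ real zeros (this is precisely Lemma~\ref{lem:exproot} in the paper, used there for the dispersion analysis rather than for pseudodimension), to get $N = \binom{2^n}{2}\cdot O(n^2) = 2^{O(n)}$ and then $2^m \le mN$. Both routes yield $m = O(n)$. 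What your approach buys is explicitness: the paper's one-liner implicitly treats ``number of distinct labelings'' as a bound on ``number of constant pieces of $l_i(\sigma)$,'' which is not automatic since the same labeling can recur in several disjoint $\sigma$-intervals; your piece-count is the quantity the counting inequality actually needs. What your approach costs is visible in the harmonic case, where you correctly flag that the determinant expansion of the rational soft-label formula could produce $2^{O(n\log n)}$ distinct exponents, which without more care would only give $m = O(n\log n)$. The paper's labeling-image bound of $2^n$ is agnostic to that internal structure and so avoids this bookkeeping entirely, at the price of glossing over the recurrence issue. For the min-cut case your argument is complete and correct; for the harmonic case the paper's shortcut gives the tighter bound more cheaply.
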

\begin{proof}
This bound trivially follows by noting that we have $n$ vertices and therefore only $2^n$ possible labelings in an instance. Thus, for $m$ problems, $2^m\le m2^n$ gives $m=O(n)$.
\end{proof}

\begin{restatable}{theorem}{thmpdlbs}\label{thm:pdlbs}
The pseudo-dimension of $\mathcal{H}_{\sigma}$ is $\Omega(n)$.
\end{restatable}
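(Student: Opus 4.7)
The plan is to construct $m = \Omega(n)$ problem instances and a witness vector $(r_1, \ldots, r_m)$ such that for each binary pattern $b \in \{\pm 1\}^m$ there exists a parameter $\sigma_b$ realizing $\mathrm{sign}(\ell_j(\sigma_b) - r_j) = b_j$ for all $j \in [m]$, where $\ell_j(\sigma) := l_{A(G_j(\sigma), L_j, U_j)}$. This mirrors the approach behind Theorem~\ref{thm:pdlb} for the unweighted case but now leverages the richer parametric structure of the min-cut under the exponential kernel, where each capacity curve is a monotone sum of exponentials and can compete with many other cuts as $\sigma$ varies.

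First I would fix a target sequence $\sigma_1 < \sigma_2 < \cdots < \sigma_N$ of critical values with $N = 2^m$, with the intent that $\sigma_k$ realizes the binary pattern matching the expansion of $k-1$. For each instance $j \in [m]$, I would design a graph $G_j$ on at most $n$ nodes so that its parametric min-cut under $w(u,v) = e^{-d(u,v)^2/\sigma^2}$ changes at precisely those $\sigma_k$ where the $j$-th bit of $k-1$ flips; choosing the true labels on $U_j$ so each such transition toggles the loss $\ell_j$ between a fixed $\ell_{\mathrm{high}}$ and $\ell_{\mathrm{low}}$, and setting $r_j = (\ell_{\mathrm{high}} + \ell_{\mathrm{low}})/2$, then yields the shattering.

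To position the min-cut transitions at the desired $\sigma_k$, I would use the fact that a change in the min-cut occurs where two cut capacities coincide, i.e., where $\sum_i c_i e^{-d_i^2/\sigma^2} = 0$ with signed coefficients $c_i$. Such signed exponential sums can have multiple positive roots (by a Descartes-style bound on sign changes), so a careful assignment of distances can place these roots at the required target $\sigma_k$. I would proceed inductively, introducing one transition at a time and using a continuity/implicit-function argument to perturb auxiliary distances so as to place the new transition without disturbing earlier ones.

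The main obstacle is producing $2^{\Omega(n)}$ min-cut transitions within a single instance on $O(n)$ nodes, as required for the low-order bits of the counter (bit~$1$ alone needs $\Theta(2^m) = 2^{\Omega(n)}$ flips). This is in principle feasible since an $n$-vertex graph has up to $2^n$ distinct cuts whose monotone exponential capacity curves can have exponentially many breakpoints on their lower envelope, and the upper bound of Theorem~\ref{thm:pdubs} shows this is exactly the capacity we need to saturate. The delicate technical core is to simultaneously (i) realize each instance's required sequence of envelope breakpoints and (ii) interleave the breakpoints across all $m$ instances so that every one of the $2^m$ joint patterns actually appears, which demands a precisely correlated choice of inter-node distances across instances rather than an independent per-instance construction.
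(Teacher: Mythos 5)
Your high-level shattering schema (a binary counter in $\sigma$, with instance $j$ toggling on bit-$j$ flips) is the right one, and the paper uses essentially the same schema. You also correctly identify why this is much harder than Theorem~\ref{thm:pdlb}: now you need $m = \Omega(n)$ bits, so the low-order-bit instance must exhibit $2^{\Omega(n)}$ mincut transitions on $O(n)$ nodes. But your proposed mechanism for producing and aligning these transitions has a genuine gap that the paper's construction is specifically designed to close, and I don't think your plan can be pushed through as written.

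The core problem is degrees of freedom versus constraints. You propose to ``fix a target sequence $\sigma_1 < \cdots < \sigma_N$'' with $N = 2^m = 2^{\Omega(n)}$ in advance and then ``proceed inductively, introducing one transition at a time and using a continuity/implicit-function argument to perturb auxiliary distances.'' A graph on $O(n)$ vertices has only $O(n^2)$ distances to tune, so you cannot place $2^{\Omega(n)}$ mincut breakpoints one at a time at prescribed locations --- you will run out of knobs after $O(n^2)$ transitions. Moreover, even if you let the $\sigma_k$ emerge rather than prescribing them, coordinating $m$ independently built graphs so that each one's natural breakpoints interleave in exactly the counter pattern is a chicken-and-egg problem you correctly flag in your last paragraph but never resolve. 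The paper's missing ingredient is a \emph{nested incremental} construction: it builds a single chain $G_0 \subset G_1 \subset \cdots \subset G_N$ by adding one pair $(x_i, y_i)$ per stage, chooses the $O(i)$ new edge weights so that the ``label $x_i$ as $0$'' vs.\ ``label $x_i$ as $1$'' transition lands strictly inside \emph{every one} of the $2^{i-1}$ existing $\varsigma$-intervals simultaneously (because the cut-dependent correction $F(C_{\mathbf b};\varsigma)$ differs across $\mathbf b$ by quantities small compared to the dominant $1 - 2\varsigma^{\epsilon} + 2\varsigma^{12N\epsilon}$ terms), and then takes the $m$ shattered instances to be $G_1, \dots, G_N$ themselves. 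That single move solves both of your obstacles at once: the breakpoint count doubles per stage using only $O(n)$ new distances, and the interleaving across instances is automatic from the nesting $I^{(i)}_{[\mathbf b\, 0]}, I^{(i)}_{[\mathbf b\, 1]} \subset I^{(i-1)}_{\mathbf b}$. Without some idea playing this role, your plan remains a description of the difficulty rather than a proof.
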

\begin{proof}[Proof Sketch]
We first show that we can carefully set the distances between the examples so that we have at least $2^{\Omega(n)}$ intervals of $\sigma$ such that $G(\sigma)$ has distinct vertex sets as min-cuts in each interval. We start with a pair of labeled nodes of each class, and a pair of unlabeled nodes which may be assigned either label depending on $\sigma$.  We then build the graph in $N=(n-4)/2$ stages, adding two new nodes at each step with meticulously chosen distances from existing nodes. Before adding the $i$th pair $x_i,y_i$ of nodes, there will be 
$2^{i-1}$ intervals of $\sigma$ such that the intervals correspond to distinct min-cuts which result in all possible labelings of $\{x_1,\dots,x_{i-1}\}$. Moreover, $y_j$ will be labeled differently from $x_j$ in each of these intervals. The edges to the new nodes will ensure that the cuts that differ exactly in $x_i$ will divide each of these intervals giving us $2^i$ intervals where distinct mincuts give all labelings of $\{x_1,\dots,x_{i}\}$, and allowing an inductive proof. The challenge is that we only get to set $O(i)$ edges in round $i$ but seek properties about $2^{i}$ cuts, so we must do this carefully. The full construction and proof is presented in Appendix \ref{app:distrib}. 

Finally, we use this construction to generate a set of $\Theta(n)$ instances that correspond to cost functions that oscillate in a manner that helps us pick $2^{\Omega(n)}$ values of $\sigma$ that shatter the samples.
\end{proof}


\subsubsection{Uniform convergence}\label{sec:uc}
Our results above implies a uniform convergence guarantee for the offline distributional setting, for both weighted and unweighted graph families. For the unweighted case, we can use the pseudodimension bounds from section \ref{sec: pd}, and for the weighted case we use dispersion guarantees from section \ref{sec: ol}. For either case it suffices to bound the empirical Rademacher complexity. We will need the following theorem (slightly rephrased) from \cite{balcan2018dispersion}.

\begin{theorem}
Let $\F=\{f_\rho:\X\rightarrow [0,1], \rho\in\C\subset\R^d\}$ be a parametereized family of functions, where $\C$ lies in a
ball of radius $R$. For any set $\s=\{x_i,\dots,x_T\}\subseteq\X$, suppose the functions $u_{x_i}(\rho) = f_\rho(x_i)$ for $i \in [T]$
are piecewise L-Lipschitz and $\beta$-dispersed. Then $\hat{R}(\F,\s)\le O(\min\{\sqrt{(d/T)\log RT}+LT^{-\beta},\sqrt{Pdim(\F)/T}\})$.
\end{theorem}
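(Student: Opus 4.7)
The plan is to prove the two bounds inside the minimum separately; the claimed bound then follows by taking the smaller one in every instance. The second bound $\sqrt{\mathrm{Pdim}(\F)/T}$ is the classical chaining/Dudley bound for real-valued function classes with bounded pseudodimension: a class of $[0,1]$-valued functions with pseudodimension $p$ has $L_\infty$ $\epsilon$-covering number $O((1/\epsilon)^p)$, and Dudley's entropy integral then controls the empirical Rademacher complexity by $O(\sqrt{p/T})$ (absorbing log factors into the constant). Nothing new is needed here beyond standard uniform-convergence machinery.

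The dispersion-based bound is the main content. I would cover $\C$ by a minimal Euclidean $\epsilon$-net $\mathcal{N}\subseteq\C$; since $\C$ lies in a radius-$R$ ball in $\R^d$, a volume argument gives $|\mathcal{N}|=O((R/\epsilon)^d)$. For each $\rho\in\C$ let $\pi(\rho)\in\mathcal{N}$ be a nearest net point, so $\norm{\rho-\pi(\rho)}_2\le\epsilon$. Writing $f_\rho(x_i)=f_{\pi(\rho)}(x_i)+\bigl(f_\rho(x_i)-f_{\pi(\rho)}(x_i)\bigr)$ and taking $\sup_\rho$ under $\E_\sigma$ splits the Rademacher average into a discrete piece and an approximation piece:
\begin{align*}
\hat{R}(\F,\s)\le \E_\sigma\max_{\rho'\in\mathcal{N}}\frac{1}{T}\sum_{i=1}^T\sigma_i f_{\rho'}(x_i)+\sup_{\rho\in\C}\frac{1}{T}\sum_{i=1}^T\bigl\lvert f_\rho(x_i)-f_{\pi(\rho)}(x_i)\bigr\rvert.
\end{align*}
Massart's finite-class lemma applied to the $|\mathcal{N}|$ many $[0,1]$-bounded random variables bounds the first term by $O(\sqrt{(d/T)\log(R/\epsilon)})$. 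For the second term, I invoke $\beta$-dispersion of $u_{x_1},\dots,u_{x_T}$: for any pair $\rho,\pi(\rho)$ at distance $\le\epsilon\ge T^{-\beta}$, at most $\tilde{O}(\epsilon T)$ of the indices $i$ fail the $L$-Lipschitz condition on the segment between them, and those indices contribute at most $1$ each (range $[0,1]$), while the remaining indices contribute at most $L\epsilon$. Averaging yields an approximation term of order $L\epsilon+\tilde{O}(\epsilon)$.

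Adding the two pieces gives, for every $\epsilon\ge T^{-\beta}$, a bound of order $\sqrt{(d/T)\log(R/\epsilon)}+L\epsilon+\tilde{O}(\epsilon)$. Choosing $\epsilon=T^{-\beta}$ balances the resolution and approximation terms and yields $O\bigl(\sqrt{(d/T)\log(RT)}+LT^{-\beta}\bigr)$ after absorbing $\tilde{O}(T^{-\beta})$ into $LT^{-\beta}$ (and the $\beta$ inside the log into the $\log RT$ factor via a constant).

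The main technical obstacle is the approximation-term accounting using dispersion: dispersion bounds the expected number of non-Lipschitz indices between a single worst-case pair of nearby points in $\C$, whereas the approximation term formally needs a uniform-over-$\rho\in\C$ control. The net-based discretization is what makes this work, since it reduces the uniform statement over the uncountable set $\C$ to one statement over the finite net $\mathcal{N}$ plus a single $\epsilon$-perturbation, and the latter is precisely the form that the dispersion definition controls. The only other care needed is to ensure that when dispersion is stated in expectation over the problem randomness, the same expectation is taken when applying it inside the Rademacher expression, so that the final inequality holds for $\hat{R}(\F,\s)$ on the given sample.
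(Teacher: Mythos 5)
Your net-plus-Massart decomposition with dispersion controlling the approximation term at scale $\epsilon = T^{-\beta}$ is the correct argument and matches the proof of this result in \cite{balcan2018dispersion}; the paper itself gives no proof of the statement, importing it ``slightly rephrased'' from that reference, so there is no in-paper derivation to compare against. One caveat worth making explicit, which you gesture at but do not fully resolve: Definition~\ref{def:dis} bounds the \emph{expected} worst-case count of non-$L$-Lipschitz indices over pairs at distance $\le\epsilon$, so the second term in your decomposition is controlled only in expectation over the randomness generating the $u_{x_i}$, and the inequality you obtain is on $\E[\hat{R}(\F,\s)]$ rather than on $\hat{R}(\F,\s)$ for a fixed sample---a looseness already present in the theorem's own phrasing, not a defect introduced by your proof.
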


Now, using classic results from learning theory, we can conclude that Empirical Risk Minimization has good generalization under the distribution. 

\begin{theorem}
For both weighted and unweighted graph $w(u,v)$ defined above, with probability at least $1-\delta$, the average loss on any sample $x_1,\dots,x_T\sim D^T$, the loss suffered w.r.t. to any parameter $\rho\in\R^d$ satisfies $|\frac{1}{T}\sum_{i=1}^Tl_\rho(x_i)-\E_{x\sim D}l_{\rho}(x)|\le O\left(\sqrt{\frac{d\log T\log 1/\delta}{T}}\right)$.
\end{theorem}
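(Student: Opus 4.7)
The plan is to combine the empirical Rademacher complexity bound just stated with a standard symmetrization/McDiarmid argument from statistical learning theory. Since the loss function $l_\rho$ takes values in $[0,1]$, McDiarmid's inequality applied to $\Phi(x_1,\ldots,x_T)=\sup_{\rho\in\C}|\frac{1}{T}\sum_i l_\rho(x_i)-\E_{x\sim D} l_\rho(x)|$ gives that $\Phi$ concentrates around its expectation within $O(\sqrt{\log(1/\delta)/T})$ with probability at least $1-\delta$, and the standard symmetrization lemma bounds $\E[\Phi]$ by $2\E_\s[\hat{R}(\F,\s)]$. Therefore, with probability at least $1-\delta$ over the draw of the sample,
\begin{equation*}
\sup_{\rho\in\C}\left|\frac{1}{T}\sum_{i=1}^T l_\rho(x_i)-\E_{x\sim D}l_\rho(x)\right| \le 2\hat{R}(\F,\s)+O\!\left(\sqrt{\tfrac{\log(1/\delta)}{T}}\right).
\end{equation*}

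The remaining step is to plug in the appropriate bound on $\hat{R}(\F,\s)$ from the preceding theorem in each of the two cases. In the unweighted case, Theorem~\ref{thm:pdub} gives $\mathrm{Pdim}(\F)=O(\log n)$, so the pseudodimension branch of the Rademacher bound yields $\hat{R}(\F,\s)=O(\sqrt{(\log n)/T})$, which since the parameter $\rho=r$ is one-dimensional is consistent with the claimed $O(\sqrt{d\log T\log(1/\delta)/T})$ form. In the weighted case (including the multi-parameter setting of Theorem~\ref{thm:mutlimetric}), the dispersion results of Section~\ref{sec: ol} give $\beta=\tfrac{1}{2}$-dispersion, so the first branch of the Rademacher bound applies and gives $\hat{R}(\F,\s)=O(\sqrt{(d/T)\log RT}+LT^{-1/2})=O(\sqrt{(d\log T)/T})$ (absorbing $L$ and $R$ into the $O(\cdot)$ as problem-independent constants).

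Combining the two displays and taking the worst of the two branches yields the claimed bound $O\!\bigl(\sqrt{d\log T\log(1/\delta)/T}\bigr)$ uniformly over $\rho\in\C$. The main subtlety, and the only step that is not entirely black-box, is verifying that the hypotheses for the Rademacher bound are met in the weighted case: the loss functions $u_{x}(\rho)=l_\rho(x)$ must be piecewise $L$-Lipschitz and $\tfrac{1}{2}$-dispersed, with $\C$ contained in a ball of radius $R$. Piecewise Lipschitzness is immediate because on any piece the graph $G(\rho)$ varies smoothly with $\rho$ (polynomially or exponentially in the entries, with bounded support assumptions on $\tilde d(u,v)$), and the dispersion is exactly what Theorems~\ref{thm:dispersion} and~\ref{thm:mutlimetric} provide. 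The radius $R$ and Lipschitz constant $L$ only enter logarithmically and as additive $T^{-1/2}$ terms respectively, and so do not affect the stated rate.

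The hard part, which has already been done in the preceding sections, is establishing dispersion for the weighted kernels in arbitrary constant dimension via the new tools of Section~\ref{sec:dtools}; once that is in place, the present theorem is essentially a mechanical application of the generic Rademacher-to-uniform-convergence pipeline.
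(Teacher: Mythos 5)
Your proposal is correct and follows exactly the route the paper intends: the paper itself gives no proof beyond invoking "classic results from learning theory" after stating the Rademacher bound from \cite{balcan2018dispersion}, and your McDiarmid-plus-symmetrization argument, together with plugging in the two branches of that Rademacher bound (pseudodimension for unweighted, $\tfrac12$-dispersion for weighted), is precisely that classic pipeline spelled out. A minor note: your bound $2\hat{R}(\F,\s)+O(\sqrt{\log(1/\delta)/T})$ is actually slightly sharper than the stated $O(\sqrt{d\log T\log(1/\delta)/T})$, since $\sqrt{a/T}+\sqrt{b/T}\le O(\sqrt{ab/T})$ whenever $a,b\ge 1$, so the claimed rate follows a fortiori.
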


\section{Extension to active learning}


So far we have considered an oracle which gives labeled and unlabeled points, where the labels of the unlabeled points possibly revealed later but it does not allow us to select a subset of the unlabeled points for which we obtain the labels. A natural question to ask is if we can further reduce the need for labels by active learning. The S$^2$ algorithm of \cite{dasarathy2015s2} allows efficient active learning given the graph, but we would like to learn the graph itself. We will present an algorithm which extends the approach of \cite{zhu2003combining} to do data-driven active learning with a constant budget $L$ of labels in a general (non-realizable) setting. 


We have the same problem setup as before (Section \ref{sec:notation}), except we will now learn the set of labeled examples $L$ via an active learning procedure $A'$. We set $\ell:=|L|$ as the {\it budget} for the active learning which we will think of as a small constant. $A'$ takes as input the graph $G$, the budget $\ell$ and outputs a set of nodes $L$ with $|L|=\ell$ to query labels for. In our budgeted active learning setup we provide all the queries in a single batch, we consider sequential querying in the next section. Let $A'(G,\ell)$ denote the set $L$ of labeled examples obtained using $A'$. In addition, we assume we are given a set of initially labeled points $L_0$ (not included in the budget) consisting of some example from each class of labels. This is needed for known graph-based active learning procedures to work \cite{zhu2003combining}, our results in the next section imply upper bounds on the size of $L_0$ for a purely unsupervised initialization. As with the semi-supervised learning above, the effectiveness of the active learning also heavily depends on how well the graph captures the label similarity.

We are interested in learning the graph parameter $\rho$ which minimizes the loss $l_{A(G(\rho),L_0\cup A'(G(\rho),\ell),U)}$ where $U$ is the set of nodes not selected for querying by $A'$. That is, we seek the graph that works best with the combined active semi-supervised procedure for predicting the labels (by learning over multiple problem instances). We will consider the harmonic objective minimization as the semi-supervised algorithm $A$. Several reasonable heuristics for $A'$ may be considered, we will restrict our attention to Algorithm \ref{alg: budgeted al} which adapts the greedy algorithm of \cite{zhu2003combining} to our budgeted setting. Our main contribution in this section is to prove dispersion for the sequence of loss functions, which implies learnability of the graph for the composite active semi-supervised procedure. The loss function is piecewise constant in the parameter $\rho$, with discontinuities corresponding to changes in algorithmic decisions in selecting the labeling set $L$ and subsequent predictions for the unlabeled examples. The following theorem establishes this for polynomial kernels, extensions to other kernels may be made as before.

\begin{restatable}{theorem}{thmpolyal}\label{thm:dispersion-bal}
Let $l_1,\dots, l_T:\R\rightarrow\R$ denote an independent sequence of losses as a function of parameter $\Tilde{\alpha}$, when the graph is created using a polynomial kernel $w(u,v)=(\Tilde{d}(u,v)+\Tilde{\alpha})^d$ and labeled by Algorithm \ref{alg: budgeted al} followed by predicting the remaining labels by optimizing the quadratic objective $\sum_{u,v}w(u,v)(f(u)-f(v))^2$. If $\Tilde{d}(u,v)$ follows a $\kappa$-bounded distribution with a closed and bounded support, the sequence is $\frac{1}{2}$-dispersed.
\end{restatable}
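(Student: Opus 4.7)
The plan is to extend the analysis of Theorem \ref{thm:dispersion} to account for the additional discontinuities introduced by the active learning step of Algorithm \ref{alg: budgeted al}. At a high level, the composite procedure partitions the parameter domain into finitely many pieces, within each of which the same label set $L$ is selected by $A'$ and the same harmonic labeling is produced for $U$, so the loss is constant. I would show that the boundaries of these pieces lie at roots of polynomials in $\Tilde{\alpha}$ whose coefficients are themselves polynomial functions of the random distances $\Tilde{d}(u,v)$, and then invoke the dispersion recipe of \cite{dick2020semi} as in the proof of Theorem \ref{thm:dispersion}.

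First, I would characterize the discontinuities from a single greedy step of Algorithm \ref{alg: budgeted al}. The algorithm (adapting \cite{zhu2003combining}) selects the next node by minimizing an expected estimated risk computed from the current harmonic solution $f_U = (D_{UU}-W_{UU})^{-1}W_{UL}f_L$. By Cramer's rule each entry of $f_U$ is a ratio of polynomials in the entries of $W$, and since each $w(u,v)=(\Tilde{d}(u,v)+\Tilde{\alpha})^d$ is a polynomial of degree $d$ in $\Tilde{\alpha}$, each $f_u(\Tilde{\alpha})$ is a rational function of $\Tilde{\alpha}$ whose numerator and denominator have degree $O(nd)$. The risk score at a candidate node is therefore a rational function of $\Tilde{\alpha}$ of similar complexity, so a tie (and hence a change of selection) between two candidates occurs exactly at the roots of a single polynomial in $\Tilde{\alpha}$ of degree $\mathrm{poly}(n,d)$ after clearing denominators.

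Next, I would handle the branching across the $\ell$ greedy rounds. The first round induces a partition of the parameter interval into at most $O(n)$ pieces from the $\binom{n}{2}$ pairwise tie-equations; within each piece, the second greedy round refines it by at most another $O(n)$ pieces, and so on. Since $\ell$ is a constant, the total number of pieces produced by $A'$ is $\mathrm{poly}(n)$, and each boundary is the root of a polynomial in $\Tilde{\alpha}$ of degree $\mathrm{poly}(n,d)$ whose coefficients are polynomials of bounded degree in the variables $\{\Tilde{d}(u,v)\}$. Within any such piece, the subsequent harmonic prediction has its own discontinuities as in Theorem \ref{thm:dispersion}: the $|U|-\ell$ equations $f_u(\Tilde{\alpha})=\tfrac12$, again polynomial in $\Tilde{\alpha}$ of degree $O(nd)$.

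Finally, I would verify the hypotheses of the dispersion recipe. Because $\Tilde{d}(u,v)$ is $\kappa$-bounded on a closed bounded support, the coefficients of each of the polynomially many polynomials whose roots demarcate a discontinuity remain jointly $K\kappa$-bounded for some constant $K$ depending only on the degree and support (polynomial transformations of compactly supported bounded densities retain bounded density). Applying Theorem \ref{thm:poly-roots} of \cite{dick2020semi} to each such polynomial, together with the VC-based union bound of Theorem \ref{thm:VC-bound}, yields that in expectation at most $\Tilde{O}(\epsilon T)$ of the $T$ loss functions have any discontinuity in any interval of width $\epsilon \ge T^{-1/2}$, giving $\tfrac12$-dispersion. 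The main obstacle is the branching step: one must argue that the number of pieces produced by the $\ell$ rounds of greedy active learning is polynomial (not exponential) in $n$ and that each interval boundary corresponds to a single $\Tilde{\alpha}$-polynomial of bounded degree with coefficients of bounded joint density. Once that is established, the remainder of the argument is a direct reduction to the machinery already used in Theorem \ref{thm:dispersion}.
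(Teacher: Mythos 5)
Your high-level plan is the right one -- show the discontinuities lie on roots of $\tilde{\alpha}$-polynomials whose coefficients are bounded-degree polynomial functions of the random distances, then invoke Theorem~\ref{thm:poly-roots} and Theorem~\ref{thm:VC-bound} as in Theorem~\ref{thm:dispersion} -- and that is indeed how the paper proceeds. However, you have misread Algorithm~\ref{alg: budgeted al}. It is \emph{not} a sequence of $\ell$ greedy rounds each picking one node: it is a single batch selection that enumerates every $\ell$-subset $S$ of unlabeled nodes, computes an expected uncertainty score $u^S = \sum_{L_S} p^{L_S} u^{L_S}$ by averaging over all $2^{\ell}$ labelings $L_S$ of $S$, and returns $\argmin_S u^S$ once. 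Consequently the ``branching across $\ell$ greedy rounds'' you build your argument around does not occur, and the obstacle you flag at the end (arguing the tree of pieces stays polynomial rather than exponential across rounds) is not the obstacle at all.

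The correct characterization, which is what the paper does, is flat: a change of selected label set occurs only when $u^S = u^T$ for two candidate $\ell$-subsets $S,T$. Since $f$ and each $f^{L_S}$ are rational polynomials of degree $O(nd)$ in $\tilde{\alpha}$ (by Cramer's rule, as you note), each $p^{L_S}$ is a rational polynomial of degree $O(nd\ell)$, so after clearing denominators $u^S = u^T$ is a polynomial equation of degree $O(nd\ell 2^{\ell})$. Counting over all subset pairs and labelings gives at most $(2n)^{2\ell}$ such equations -- polynomial in $n$ for constant $\ell$ by a direct count, not by any nesting argument. Layered on top are the $O((2n)^{\ell})$ equations $f_u^{L_S}(\tilde{\alpha}) = \tfrac{1}{2}$ for the downstream harmonic prediction. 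With the coefficient boundedness supplied by Lemma~\ref{lem:bounded} (the piece you correctly identify as needed), Theorems~\ref{thm:poly-roots} and \ref{thm:VC-bound} then yield $\tfrac12$-dispersion exactly as you say. So your final two paragraphs are essentially salvageable once you replace the greedy/branching picture with the one-shot pairwise comparison; as written, though, the middle of your proof argues about a procedure the algorithm does not perform.
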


\begin{proof}[Proof Sketch] The proof reuses ideas from the proof of Theorem \ref{thm:dispersion} with additional insights to handle the dependence of the loss functions on the budgeted active learning procedure. Discontinuities in the loss function may only occur when the label set from the active procedure changes (which only happens if $u^S=u^T$ for some candidate subsets $S\ne T$ in Algorithm \ref{alg: budgeted al}) or when the semi-supervised prediction changes ($f(u)=1/2$ for the soft labeling using the full labeled set after active learning). We show that both kinds of discontinuities lie along roots of polynomials with bounded discontinuities and are therefore dispersed. Detailed argument appears in Appendix \ref{app:bal}
\end{proof}

\begin{algorithm}[t!]
\caption{$\textsc{BudgetedActiveLearning}(G,\ell)$}
\label{alg: budgeted al}
\begin{algorithmic}[1]
\STATE {\bfseries Input:} Graph $G$ with unlabeled nodes, initial labels $L_0$, label budget $\ell$.
\STATE {\bfseries Output:} Nodes for label query $L$, $|L|=\ell$.
\STATE{Compute soft labeling $f$ to minimize the harmonic objective $f^T(D-W)f$ using initial labels $L_0$.}
\FOR{each subset $S$ of unlabeled nodes, $|S|=\ell$}
\FOR{each labeling $L_S\in\{0,1\}^{\ell}$ of $S$}
\STATE Estimate probability of $L_S$, $p^{L_S}=\Pi_{s\in S}[L_S(s)f_s+(1-L_S(s))(1-f_s)]$.
\STATE Compute soft labels $f^{L_S}$, by minimizing the harmonic objective with labels $L_S$ for $S$ (in addition to $L_0$).
\STATE Estimate uncertainty as $u^{L_S}=\sum_i\frac{1}{2}-\lvert \frac{1}{2}-f^{L_S}_i\rvert$.

\ENDFOR
\ENDFOR
\STATE Return $\argmin_{S}u^{S}$ where $u^S:=\sum_{L_S}p^{L_S}u^{L_S}$.
\end{algorithmic}
\end{algorithm}

We remark that while we have only considered learning the graph family with the active learning procedure of \cite{zhu2003combining}, but it would be interesting to extend our results to other similar active learning algorithms \cite{long2008graph,zhao2008scalable}.

\section{Experiments}
In this section we evaluate the performance of our learning procedures when finding application-specific semi-supervised learning algorithms (i.e. graph parameters). Our experiments demonstrate that the best parameter for different applications
varies greatly, and that the techniques presented in this paper can lead to large gains. We will look at image classification based on standard pixel embedding for different datasets. 
\begin{figure*}[!t]
    \centering
    \begin{subfigure}[b]{0.32\textwidth}
    \centering
         \includegraphics[width=\textwidth]{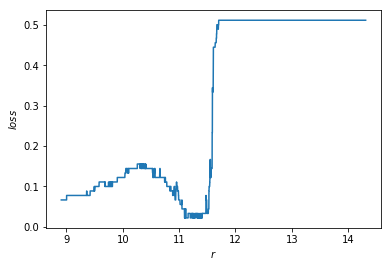}
    \caption{MNIST}
  \end{subfigure}
    \begin{subfigure}[b]{0.32\textwidth}
    \centering
         \includegraphics[width=\textwidth]{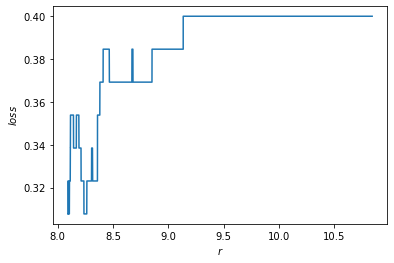}
    \caption{Omniglot}
  \end{subfigure}
    \begin{subfigure}[b]{0.32\textwidth}
    \centering
         \includegraphics[width=\textwidth]{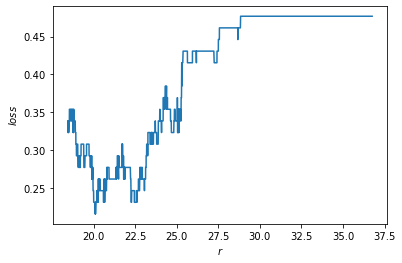}
    \caption{CIFAR-10}
  \end{subfigure}
  \caption{Loss for different unweighted graphs as a function of the threshold $r$.}
\label{fig:r}
\end{figure*}

\begin{figure*}[!t]
    \centering
    \begin{subfigure}[b]{0.32\textwidth}
    \centering
         \includegraphics[width=\textwidth]{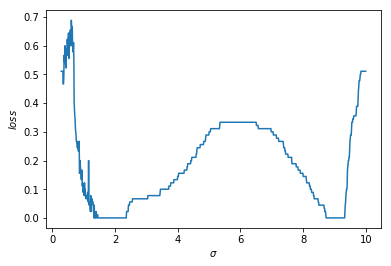}
    \caption{MNIST}
  \end{subfigure}
    \begin{subfigure}[b]{0.32\textwidth}
    \centering
         \includegraphics[width=\textwidth]{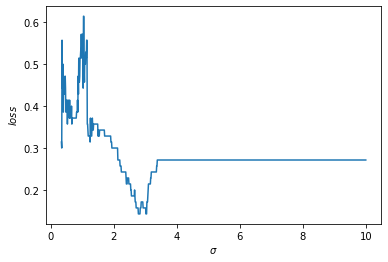}
    \caption{Omniglot}
  \end{subfigure}
    \begin{subfigure}[b]{0.32\textwidth}
    \centering
         \includegraphics[width=\textwidth]{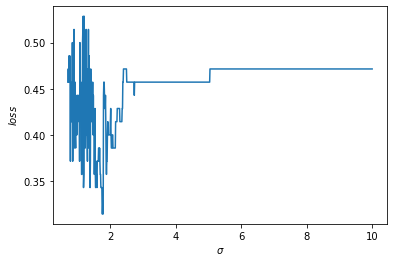}
    \caption{CIFAR-10}
  \end{subfigure}
  \caption{Loss for different weighted graphs as a function of the parameter $\sigma$.}
\label{fig:s}
\end{figure*}



{\it Setup}: We consider the task of semi-supervised classfication on image datasets. We restrict our attention to binary classification 
and pick two classes for each data set. We then draw random subsets of the dataset (with class restriction) of size $n=100$ and randomly select $L$ ($10\le L\le 20$) examples for labeling. For any data subset $S$, we measure distance between any pairs of images
using the $L_2$ distance between their pixel intensities. We would like to determine data-specific parameters $r$ and $\sigma$ which lead to good weighted and unweighted graphs for semi-supervised learning on the datasets. We will optimize the harmonic function objective (Table \ref{table:algos}) and round the fractional labels $f$ to make our predictions.

{\it Data sets}: We use three popular benchmark datasets --- MNIST, Omniglot and CIFAR-10. The MNIST dataset \citep{lecun1998gradient} contains images of hand-written digits from 0 to 9 as $28 \times 28$ binary images, with 5000 training examples for each class. We consider examples with labels 0 or 1.
We generate a random semi-supervised learning instance from this data by sampling $100$ random examples and further sampling $L=10$ random examples from the subset for labeling. Omniglot \citep{lake2015human} has
$105 \times 105$ binary images of handwritten characters across 30 alphabets with 19,280 examples. We consider the task of distinguishing alphabets 0 and 1, and set $L=20$ in this setting. CIFAR-10 \citep{szegedy2015going} has $32\times 32$ color images (an integer value in $[0,255]$ for each of three colors) for object recognition among 10 classes. Again we consider objects 0 and 1 and set $L=20$.

{\it Results and discussion}: For the MNIST dataset we get optimal parameters with near-perfect classification even with small values of $L$, while the error of the optimal parameter is $\sim 0.2-0.3$ even with larger values of $L$, indicating  differences in the inherent difficulties of the classification tasks (like label noise and how well separated the classes are). We examine the full variation of performance of graph-based semi-supervised learning for all possible graphs $G(r)$ ($r_{\min}<r<r_{\max}$) and $G(\sigma)$ for $\sigma\in[0,10]$ (Figures \ref{fig:r}, \ref{fig:s}). The losses are piecewise constant and can have large discontinuities in some cases. The optimal parameter values vary with the dataset, but we observe at least 10\% gap in performance between optimal and suboptimal values within the same dataset.

Another interesting observation is the variation of optima across subsets, indicating transductively optimal parameters may not generalize well. We plot the variation of loss with graph parameter $\sigma$ for several subsets of the same size $N=100$ for MNIST and Omniglot datasets in Figure \ref{fig:subsets}. In MNIST we have two optimal ranges in most subsets but only one shared optimum (around $\sigma=2$) across different subsets. This indicates that local search based techniques that estimate the optimal parameter values on a given data instance may lead to very poor performance on unseen instances. The CIFAR-10 example further shows that the optimal algorithm may not be easy to empirically discern.

\begin{figure*}[!t]
    \centering
    \begin{subfigure}[b]{0.32\textwidth}
    \centering
         \includegraphics[width=\textwidth]{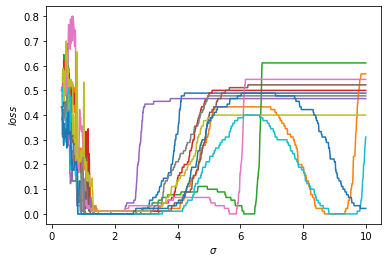}
    \caption{MNIST}
  \end{subfigure}
    \begin{subfigure}[b]{0.32\textwidth}
    \centering
         \includegraphics[width=\textwidth]{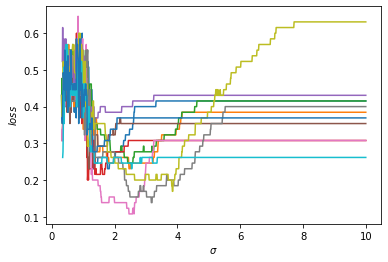}
    \caption{Omniglot}
  \end{subfigure}
    \begin{subfigure}[b]{0.32\textwidth}
    \centering
         \includegraphics[width=\textwidth]{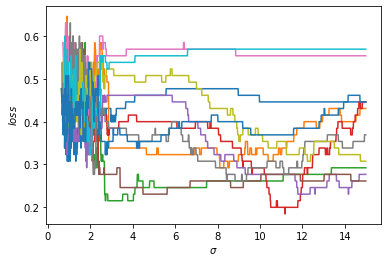}
    \caption{CIFAR-10}
  \end{subfigure}
  \caption{Comparing different subsets of the same problem.}
\label{fig:subsets}
\end{figure*}

\begin{figure*}[!t]
    \centering
    \begin{subfigure}[b]{0.32\textwidth}
    \centering
         \includegraphics[width=\textwidth]{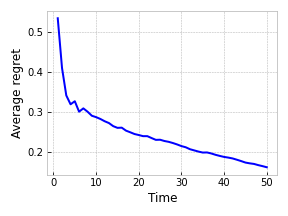}
    \caption{MNIST}
  \end{subfigure}
    \begin{subfigure}[b]{0.32\textwidth}
    \centering
         \includegraphics[width=\textwidth]{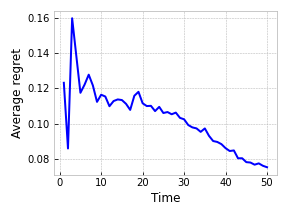}
    \caption{Omniglot}
  \end{subfigure}
    \begin{subfigure}[b]{0.32\textwidth}
    \centering
         \includegraphics[width=\textwidth]{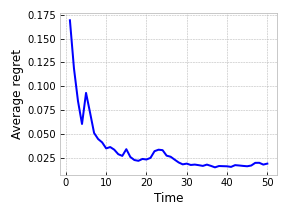}
    \caption{CIFAR-10}
  \end{subfigure}
  \caption{Average regret vs. $T$ for online learning of parameter $\sigma$}
\label{fig:ol}
\end{figure*}


We also implement our online algorithms and compute the average regret (i.e. excess error in predicting labels of unlabeled examples over the best parameter in hindsight) for finding the optimal graph parameter $\sigma$ for the different datasets. To obtain smooth curves we plot the average over 50 iterations for learning from 50 problem instances each ($T=50$, Figure \ref{fig:ol}). We observe fast convergence to the optimal parameter regret for all the datasets considered. The starting part of these curves ($T=0$) indicates regret for randomly setting the graph parameters, averaged over iterations, which is strongly outperformed by our learning algorithms as they learn from problem instances.






\section{Acknowledgments}
This material is based on work supported by the National Science Foundation under grants CCF-1535967, CCF-1910321, IIS-1618714, IIS-1901403, and SES-1919453; the Defense Advanced Research Projects Agency under cooperative agreement HR00112020003; an AWS Machine Learning Research Award; an Amazon Research Award; a Bloomberg Research Grant; a Microsoft Research Faculty Fellowship. The views expressed in this work do not
necessarily reflect the position or the policy of the Government and no official endorsement should be inferred.


\bibliography{mybib}{}
\bibliographystyle{plainnat}

\newpage
\section*{Appendix}
\appendix
\section{Dispersion and Online learning}\label{app:ol}

In this appendix we include details of proofs and algorithms from section \ref{sec: ol}.




\subsection{A general tool for analyzing dispersion}\label{app:disp-recipe}

If the weights of the graph are given by a polynomial kernel $w(u,v)=(\Tilde{d}(u, v)+\Tilde{\alpha})^d$, we can apply the general tool developed by \citet{dick2020semi} to learn $\Tilde{\alpha}$, which we summarize below.

\begin{enumerate}
    \item  Bound the probability density of the random set of
discontinuities of the loss functions.
\item  Use a VC-dimension based uniform convergence argument to transform this into a bound on the
dispersion of the loss functions.
\end{enumerate}

Formally, we have the following theorems from \cite{dick2020semi}, which show how to use this technique when the discontinuities are roots of a random polynomial.

\begin{theorem}[\cite{dick2020semi}]\label{thm:poly-roots}
Consider a random degree $d$ polynomial
$\phi$ with leading coefficient 1 and subsequent coefficients
which are real of absolute value at most $R$, whose joint
density is at most $\kappa$. There is an absolute constant $K$
depending only on $d$ and $R$ such that every interval $I$ of
length $\le\epsilon$ satisfies Pr($\phi$ has a root in $I$) $\le \kappa\epsilon/K$.
\end{theorem}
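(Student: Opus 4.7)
The plan is to reduce the event $\{\phi \text{ has a root in } I\}$ to a condition on a single coefficient, then bound the probability using the joint density assumption via a Fubini-style volume argument.

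First, I would localize the roots. By Cauchy's bound on polynomial roots, any (real or complex) root $\rho$ of $\phi(x) = x^d + a_{d-1}x^{d-1}+\dots+a_0$ satisfies $|\rho| \le 1 + \max_i |a_i| \le 1+R$. Set $M := 1+R$. Thus if $I \cap [-M, M] = \emptyset$ the probability is zero, and otherwise we may replace $I$ by $I \cap [-M, M]$, still an interval of length at most $\epsilon$, contained in $[-M, M]$.

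Next, I would isolate a single coefficient. Write $\phi(x) = \phi_1(x) + a_0$, where $\phi_1(x) := x^d + \sum_{i=1}^{d-1} a_i x^i$. Then for fixed $a_1,\dots,a_{d-1}$, the polynomial $\phi$ has a root at $x$ if and only if $a_0 = -\phi_1(x)$. Hence $\phi$ has a root in $I$ iff $a_0 \in -\phi_1(I)$, a set of Lebesgue measure at most
\[
\sup_{x \in I} |\phi_1'(x)| \cdot |I| \le K' \epsilon,
\]
where $K' := \sup_{|x|\le M}|\phi_1'(x)| \le dM^{d-1} + (d-1)RM^{d-2} + \dots + R$ is a constant that depends only on $d$ and $R$.

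Finally, I would integrate. Letting $f$ denote the joint density of $(a_0,\dots,a_{d-1})$, the hypothesis $\|f\|_\infty \le \kappa$ together with Fubini yields
\begin{align*}
\Pr(\phi \text{ has root in } I)
&= \int_{[-R,R]^{d-1}} \int_{\R} \mathbf{1}[a_0 \in -\phi_1(I)] \, f(a_0,a_1,\dots,a_{d-1})\, da_0\, da_1\cdots da_{d-1} \\
&\le \kappa \int_{[-R,R]^{d-1}} |{-\phi_1(I)}| \, da_1\cdots da_{d-1} \\
&\le \kappa (2R)^{d-1} K' \epsilon,
\end{align*}
so the theorem holds with $K^{-1} := (2R)^{d-1}K'$ depending only on $d$ and $R$. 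The main obstacles are modest: one is recognizing that the coefficient $a_0$ is linearly pinned down by the choice of a candidate root, which makes the one-dimensional slice tractable; the other is that $|\phi_1'(x)|$ is only controlled on a bounded domain, resolved by first truncating $I$ to the Cauchy-bound interval $[-M,M]$ at no cost since $\phi$ has no roots outside it.
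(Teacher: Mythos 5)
Your argument is correct. You reduce the event to a one-coefficient constraint: after first using Cauchy's root bound to truncate $I$ to an interval inside $[-M,M]$ (with $M = 1+R$), you fix $a_1,\dots,a_{d-1}$, observe that $\phi$ has a root in $I$ if and only if $a_0$ lies in the set $-\phi_1(I)$ whose Lebesgue measure is at most $\big(\sup_{|x|\le M}|\phi_1'(x)|\big)\,\epsilon \le K'\epsilon$, and then Fubini together with the $\kappa$-bounded joint density gives $\Pr(\phi\text{ has root in }I)\le \kappa\,(2R)^{d-1}K'\,\epsilon$. All steps check out; just note that $K'$ should formally be defined as the coefficient-independent bound $dM^{d-1}+(d-1)RM^{d-2}+\cdots+R$ rather than the (coefficient-dependent) supremum of $|\phi_1'|$.

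The paper itself does not reprove this result; it cites it from \cite{dick2020semi} and instead proves a generalization to exponential polynomials (Theorem \ref{thm:exproots1} / Theorem \ref{thm:exproots}) whose proof is stated to parallel the polynomial one. That proof proceeds geometrically: it identifies, for each candidate root $\rho$, the hyperplane $S_\rho$ of coefficient vectors orthogonal to $\varrho(\rho)$, shows that $\varrho(\rho)$ and $\varrho(\rho')$ subtend an angle $\tilde O(\epsilon)$ when $|\rho-\rho'|\le\epsilon$, and bounds the volume of the union of nearby hyperplanes. Your route is a genuinely different decomposition: rather than sweeping a family of hyperplanes and bounding the swept volume via an angle estimate, you slice along a single distinguished coordinate $a_0$ and pin it down linearly in terms of the putative root. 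This is cleaner in the monic (affine-hyperplane) setting where the leading coefficient is fixed — one avoids any angular bookkeeping entirely — and it directly exhibits a one-dimensional interval of bad $a_0$ values of length $O(\epsilon)$. The geometric/angular argument in the paper is more naturally tailored to the homogeneous exponential-polynomial setting, where all coefficients are random and $S_\rho$ passes through the origin, so the two approaches are well-matched to their respective settings; yours is arguably the more elementary one for the polynomial case.
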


\begin{theorem}[\cite{dick2020semi}]\label{thm:VC-bound}
 Let $l_1, \dots, l_T : \R \rightarrow \R$ be independent piecewise $L$-Lipschitz functions, each having at most $K$ discontinuities. Let $D(T, \epsilon, \rho) = |\{1 \le t \le T \mid l_t\text{ is not }L\text{-Lipschitz on }[\rho - \epsilon, \rho + \epsilon]\}|$
be the number of functions that are not
$L$-Lipschitz on the ball $[\rho - \epsilon, \rho + \epsilon]$. Then we
have $E[\max_{\rho\in\R} D(T, \epsilon, \rho)] \le \max_{\rho\in\R} E[D(T, \epsilon, \rho)] +
O(\sqrt{T \log(TK)})$.
\end{theorem}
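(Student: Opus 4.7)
The plan is to prove this bound via a Rademacher-style uniform convergence argument, exploiting the observation that although $\rho$ ranges over all of $\R$, for any fixed realization of $l_1,\dots,l_T$ the indicator vector
\begin{align*}
X_\rho := \bigl(\bI[l_1 \text{ non-Lipschitz on } [\rho-\epsilon,\rho+\epsilon]],\dots,\bI[l_T \text{ non-Lipschitz on } [\rho-\epsilon,\rho+\epsilon]]\bigr)
\end{align*}
takes only boundedly many distinct values as $\rho$ varies. The two tools I would bring together are symmetrization and Massart's finite class lemma.

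First I would establish the structural bound: for each $l_t$, the coordinate $X_{t,\rho}$ can only flip its value when $\rho$ crosses one of the (at most) $2K$ critical points of the form $d\pm\epsilon$, where $d$ is a discontinuity of $l_t$. Summing over $t$, the function $\rho\mapsto X_\rho$ has at most $2TK$ switching points, so $X_\rho$ takes at most $N:=2TK+1$ distinct values on $\R$. This is the only place the hypothesis on the $l_t$'s enters the argument.

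Next I would write
\begin{align*}
\E\bigl[\max_\rho D(T,\epsilon,\rho)\bigr] - \max_\rho \E[D(T,\epsilon,\rho)] \le \E\Bigl[\max_\rho \sum_{t=1}^T\bigl(X_{t,\rho}-\E X_{t,\rho}\bigr)\Bigr],
\end{align*}
using the pointwise inequality $\max_\rho D(\rho)\le \max_\rho(D-\E D)+\max_\rho \E D$, and then upper bound the right-hand side by $2\,\E\bigl[\max_\rho \sum_t \sigma_t X_{t,\rho}\bigr]$ via the standard symmetrization trick with independent Rademacher variables $\sigma_t$. Conditioning on the $l_t$'s, the inner supremum reduces to a maximum over a finite collection of at most $N$ vectors in $\{0,1\}^T$, each of Euclidean norm at most $\sqrt T$. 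Massart's finite class lemma then yields $\E_\sigma\bigl[\max_\rho \sum_t \sigma_t X_{t,\rho}\bigr]\le \sqrt{2T\log N}$, and taking outer expectations preserves the bound to give $O(\sqrt{T\log(TK)})$.

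The only real subtlety is that the set of ``effective'' values of $\rho$ (those producing distinct $X_\rho$) depends on the random positions of the discontinuities, so Massart's lemma cannot be applied directly to a fixed finite set; the resolution is to apply it after conditioning on $l_1,\dots,l_T$ and then use the deterministic uniform bound $N\le 2TK+1$ to eliminate the dependence on the realization. Independence of the $l_t$'s enters only through the validity of the symmetrization step. Everything else is a routine empirical-process computation, and no additional structure of the loss functions beyond the piecewise Lipschitz property is needed.
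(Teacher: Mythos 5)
Your argument is correct and amounts to the same uniform-convergence strategy used by the cited source and by the paper's proof of the $p$-dimensional generalization (Theorem \ref{thm:VC-bound-general}), where the key step is described as ``relating the number of ways intervals can label vectors of discontinuity points to the VC-dimension of intervals'' and then invoking standard VC uniform-convergence inequalities. Your version is a touch more direct: rather than passing through a VC-dimension bound, you count the at-most-$2TK+1$ distinct indicator vectors $X_\rho$ via the critical points $d\pm\epsilon$ (this is exactly the growth function evaluated on the sample), then apply symmetrization followed by Massart's finite-class lemma, which yields the same $O(\sqrt{T\log(TK)})$ bound. The subtlety you flag — that the effective finite set of $\rho$'s is data-dependent, resolved by conditioning on $l_1,\dots,l_T$ before applying Massart and then noting that the cardinality bound $N\le 2TK+1$ is deterministic — is exactly the right resolution, and you are also correct that independence of the $l_t$'s enters only through the ghost-sample symmetrization step.
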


We will now use Theorems \ref{thm:poly-roots} and \ref{thm:VC-bound} to establish dispersion in our setting. We first need a simple lemma about $\kappa$-bounded distributions. We remark that similar properties have been proved in \cite{balcan2018dispersion,dick2020semi}, in other problem contexts. {Specifically, \cite{balcan2018dispersion} show the lemma for a ratio of random variables, $Z=X/Y$, and \cite{dick2020semi} establish it for the sum $Z=X+Y$ but for independent variables $X,Y$.}

\begin{lemma}\label{lem:bounded}
Suppose $X$ and $Y$ are real-valued random variables taking values in $[m, m + M]$ and
$[m', m'+M']$ for some $m,m',M,M'\in \R^+$ and suppose that their joint distribution is $\kappa$-bounded. Then,
\begin{enumerate}
    \item[(i)] $Z=X+Y$ is drawn from a $K_1\kappa$-bounded distribution, where $K_1\le \min\{M,M'\}$.
    \item[(ii)] $Z=XY$ is drawn from a $K_2\kappa$-bounded distribution, where $K_2\le \min\{M/m,M'/m'\}$.
\end{enumerate}
\end{lemma}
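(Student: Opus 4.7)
The plan is to compute the density of $Z$ directly by marginalizing the joint density of $(X,Y)$, and then to bound the resulting integral using the two available hypotheses: the pointwise bound $f_{X,Y} \le \kappa$ and the bounded-support hypothesis. The symmetry in the claimed constants $\min\{M,M'\}$ and $\min\{M/m, M'/m'\}$ will arise because one can do the calculation two ways, marginalizing either over $X$ or over $Y$, and keep whichever bound is better. Crucially this does not need $X$ and $Y$ to be independent — the joint density bound is used directly.

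For part (i), the density of $Z = X+Y$ is $f_Z(z) = \int f_{X,Y}(x, z-x)\,dx$. The integrand is supported only where simultaneously $x \in [m, m+M]$ and $z-x \in [m', m'+M']$, so the effective range of integration has length at most $M$ and also at most $M'$. Combined with the pointwise bound $f_{X,Y} \le \kappa$, this gives $f_Z(z) \le \kappa \min\{M,M'\}$. For part (ii), a change of variables yields $f_Z(z) = \int f_{X,Y}(x, z/x)\,\frac{1}{|x|}\,dx$. The Jacobian factor $1/|x|$ is bounded by $1/m$ since $X$ is supported in $[m, m+M]$ with $m > 0$, and the range of integration in $x$ again has length at most $M$. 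This produces $f_Z(z) \le \kappa M/m$; swapping the roles of $X$ and $Y$ gives $\kappa M'/m'$, and taking the smaller yields the claimed bound.

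The main thing to be careful about is the positivity assumption $m, m' > 0$ in part (ii), which is essential because otherwise the Jacobian $1/|x|$ would blow up near zero and the bound would fail. The calculations themselves are routine; the content of the lemma is really the observation that a joint density bound plus bounded support propagates cleanly to sums and products through a single marginalization step, without any independence assumption.
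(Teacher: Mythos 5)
Your proof is correct and takes essentially the same route as the paper: marginalize the joint density to express $f_Z$ as a one-dimensional integral, bound the integrand by $\kappa$, and bound the effective integration length by the support sizes (with the extra $1/m$ or $1/m'$ Jacobian factor in the product case). The only cosmetic difference is that the paper derives the marginal density formula by differentiating the CDF via Leibniz's rule and then invokes symmetry for the second bound, whereas you write the marginalization directly and extract both bounds from the intersection of the support constraints in one pass.
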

\begin{proof} Let $f_{X,Y}(x,y)$ denote the joint density of $X,Y$. 
\begin{enumerate}
    \item[(i)] The case where $X,Y$ are independent has been studied (Lemma 25 in \cite{dick2020semi}), the following is slightly more involved.
    The cumulative density function for $Z$ is given by
    \begin{align*}
        F_Z(z)&=\bP(Z\le z)=\bP(X+Y\le z)=\bP(X\le z-Y)\\
        &=\int_{m'}^{m'+M'}\int_{m}^{z-y}f_{X,Y}(x,y)dxdy.
    \end{align*}
    The density function for $Z$ can be obtained using Leibniz's rule as
    \begin{align*}
        f_Z(z)=\frac{d}{dz}F_Z(z)&=\frac{d}{dz}\int_{m'}^{m'+M'}\int_{m}^{z-y}f_{X,Y}(x,y)dxdy\\
        &=\int_{m'}^{m'+M'}\left(\frac{d}{dz}\int_{m}^{z-y}f_{X,Y}(x,y)dx\right)dy\\
        &=\int_{m'}^{m'+M'}f_{X,Y}(z-y,y)dy\\
        &\le M'\kappa.
    \end{align*}
    A symmetric argument shows that $f_Z(z)\le M\kappa$, together with above this completes the proof.
    
    \item[(ii)] The cumulative density function for $Z$ is given by
    \begin{align*}
        F_Z(z)&=\bP(Z\le z)=\bP(XY\le z)=\bP(X\le z/Y)\\
        &=\int_{m'}^{m'+M'}\int_{m}^{z/y}f_{X,Y}(x,y)dxdy.
    \end{align*}
    The density function for $Z$ can be obtained using Leibniz's rule as
    \begin{align*}
        f_Z(z)=\frac{d}{dz}F_Z(z)&=\frac{d}{dz}\int_{m'}^{m'+M'}\int_{m}^{z/y}f_{X,Y}(x,y)dxdy\\
        &=\int_{m'}^{m'+M'}\left(\frac{d}{dz}\int_{m}^{z/y}f_{X,Y}(x,y)dx\right)dy\\
        &=\int_{m'}^{m'+M'}\frac{1}{y}f_{X,Y}(z/y,y)dy\\
        &\le \int_{m'}^{m'+M'}\frac{1}{m'}f_{X,Y}(z/y,y)dy\\
        &\le \frac{M'}{m'}\kappa.
    \end{align*}
    Similarly we can show that $f_Z(z)\le M\kappa/m$, together with above this completes the proof.
\end{enumerate}
\end{proof}

\thmpoly*

\begin{proof} $w(u,v)$ is a polynomial in $\Tilde{\alpha}$ of degree $d$ with coefficient of $\Tilde{\alpha}^i$ given by $c_i=D_{d,i}\Tilde{d}(u,v)^{E_{d,i}}$ for $i\in[d]$. Since the support of $\Tilde{d}(u,v)$ is closed and bounded, we have $m \le \Tilde{d}(u,v)\le M$ with probability 1 for some $M>1,m>0$ (since $\Tilde{d}(u,v)$ is a metric, $\Tilde{d}(u,v)>0$ for $u\ne v$).

To apply Theorem \ref{thm:poly-roots}, we note that we have an upper bound on the coefficients, $R<(dM)^d$. Moreover, if $f(x)$ denotes the probability density of $d(u,v)$ and $F(x)$ its cumulative density,
\begin{align*}
    \bP(c_i\le x_i)=\bP\left(D_{d,i}\Tilde{d}(u,v)^{E_{d,i}}\le x_i\right) = \bP\left(\Tilde{d}(u,v)\le \left(\frac{x_i}{D_{d,i}}\right)^{1/E_{d,i}}\right)=F\left(\left(\frac{x_i}{D_{d,i}}\right)^{1/E_{d,i}}\right).
\end{align*}
Thus, 
\[\bP(c_i\le x_i\text{ for each }i\in[d])=F\left(\min_i\left(\frac{x_i}{D_{d,i}}\right)^{1/E_{d,i}}\right).\]

The joint density of the coefficients is therefore $K\kappa$-bounded where $K$ only depends on $d,m$. ($K\le \max_i {D_{d,i}}^{-1/E_{d,i}}m^{-1+1/E_{d,i}}$).

Consider the harmonic solution of the quadratic objective \cite{zhu2003semi} which is given by $f_{U}=(D_{UU}-W_{UU})^{-1}W_{UL}f_L$. For any $u\in U$, $f(u)=1/2$ is a polynomial equation in $\Tilde{\alpha}$ with degree at most $nd$. The coefficients of these polynomials are formed by multiplying sets of weights $w(u,v)$ of size up to $n$ and adding the products, and are also bounded density on a bounded support (using above observation in conjunction with Lemma \ref{lem:bounded}). The dispersion result now follows by an application of Theorems \ref{thm:poly-roots} and \ref{thm:VC-bound}. The regret bound is implied by results from \cite{balcan2018dispersion,sharma2020learning}.
\end{proof}

\subsection{Dispersion for roots of exponential polynomials}\label{app:exp}
In this section we will extend the applicability of the dispersion analysis technique from Appendix \ref{app:disp-recipe} to exponential polynomials, i.e. functions of the form $\phi(x)=\sum_{i=1}^na_ie^{b_ix}$. We will now extend the analysis to obtain similar results when using the exponential kernel $w(u,v)=e^{-||u-v||^2/\sigma^2}$. The results of \citet{dick2020semi} no longer directly apply as the points of discontinuity are no longer roots of polynomials. To this end, we extend and generalize arguments from \cite{dick2020semi} below. We need to generalize Theorem \ref{thm:poly-roots} to exponential polynomials below.

\begin{theorem}\label{thm:exproots}
Let $\phi(x)=\sum_{i=1}^na_ie^{b_ix}$ be a random function, such that coefficients $a_i$ are real and of magnitude at most $R$, and distributed with joint density at most $\kappa$. Then for any interval $I$ of width at most $\epsilon$, P($\phi$ has a zero in $I$)$\le \Tilde{O}(\epsilon)$ (dependence on $b_i,n,\kappa,R$ suppressed).
\end{theorem}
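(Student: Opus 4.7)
The plan is to fill in the geometric argument sketched after Theorem~\ref{thm:exproots1}, taking care of the details that were suppressed in the $\tilde{O}(\cdot)$ notation. First I would dispose of the degenerate case $n=1$ (no zeros unless $a_1=0$, an event of measure zero) and then assume $n\ge 2$. Setting $\varrho(y) := (e^{b_1 y},\dots,e^{b_n y}) \in \R^n$, the event that $\phi$ has a root at $y$ is exactly the event $\langle a, \varrho(y)\rangle = 0$, so $S_y := \{a\in\R^n : \langle a,\varrho(y)\rangle=0\}$ is a hyperplane with unit normal $\hat\varrho(y) := \varrho(y)/\|\varrho(y)\|$. Since $a$ is supported in $[-R,R]^n$ with density at most $\kappa$, it suffices to bound the Lebesgue volume of $B := [-R,R]^n \cap \bigcup_{y\in I} S_y$ and multiply by $\kappa$.

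Next I would bound the angle $\theta_{y,y'}$ between $\hat\varrho(y)$ and $\hat\varrho(y')$ for $y,y'\in I$. Writing $\hat\varrho$ as a smooth function of $y$, a direct differentiation gives $\|\tfrac{d}{dy}\hat\varrho(y)\| \le C(b_1,\dots,b_n,I)$ on any compact range (the probability bound is vacuous outside a compact range we can fix), so $\theta_{y,y'} \le C\,|y-y'| \le C\epsilon$ for $y,y'\in I$. (If the range of $y$ is not a priori bounded I would first intersect with a window $[-M,M]$ of size $M = \mathrm{polylog}(1/\epsilon)$ at the cost of a negligible additive term, which is the source of the $\tilde O$ rather than $O$ in the statement.) The key geometric consequence is that every point of $S_y \cap [-R,R]^n$ lies within distance $O(R\,\theta_{y,y'}) = O(RC\epsilon)$ of $S_{y'}$, because rotating a unit normal by angle $\theta$ moves any point of $[-R,R]^n$ on the corresponding hyperplane by at most $(\mathrm{diam}\text{ of box})\cdot\sin\theta$.

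Picking a representative $y_0\in I$, it follows that $B$ is contained in a slab of thickness $O(RC\epsilon)$ around the hyperplane $S_{y_0}$. The volume of $[-R,R]^n$ intersected with such a slab is at most $O((2R)^{n-1}\cdot RC\epsilon) = O(\epsilon)$ with the constants absorbed into the $\tilde O$, which gives the desired bound $\Pr(\phi\text{ has a zero in }I)\le \kappa\cdot O(\epsilon) = \tilde O(\epsilon)$ after multiplying by the density bound. The main obstacle, as in the algebraic-polynomial analogue of \cite{dick2020semi}, is making the constant $C$ in the angle bound explicit and benign: we must verify $\|\hat\varrho'(y)\|$ stays bounded uniformly on the relevant range, which requires ruling out pathological cases where $\|\varrho(y)\|$ becomes very small (handled by noting at least one $e^{b_i y}$ dominates the others on each dyadic window of $y$, so the normalization is well-controlled). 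Once this is done, the union bound over a logarithmic number of such windows yields the stated $\tilde O(\epsilon)$ probability bound.
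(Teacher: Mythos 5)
Your proposal is correct and follows essentially the same route as the paper's proof in Appendix~\ref{app:exp}: set up $\varrho(y)=(e^{b_1y},\dots,e^{b_ny})$ so that a root at $y$ means $a$ lies on the hyperplane $S_y$ normal to $\varrho(y)$, show that $\varrho(y)$ and $\varrho(y')$ subtend an $\Tilde{O}(\epsilon)$ angle when $|y-y'|\le\epsilon$, and conclude that $\bigcup_{y\in I}S_y$ meets $[-R,R]^n$ in a thin slab whose volume (times the density bound $\kappa$) gives $\Tilde{O}(\epsilon)$. The only cosmetic difference is that the paper derives the angle bound by expanding $\sin\theta_{\rho,\rho'}$ via inner products and a Taylor estimate, whereas you bound $\|\frac{d}{dy}\hat\varrho(y)\|$ directly; these are interchangeable, and your dyadic-window hedge is in fact unnecessary since the normalization $\hat\varrho=\varrho/\|\varrho\|$ has a uniformly bounded derivative on all of $\R$ (one coordinate always dominates as $y\to\pm\infty$), so the Lipschitz constant depends only on the $b_i$'s.
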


\begin{proof}
For $n=1$ there are no roots, so assume $n>1$. Suppose $\rho$ is a root of $\phi(x)$. Then $\mathbf{a}=(a_1,\dots,a_n)$ is orthogonal to $\varrho(\rho)=(e^{b_1\rho},\dots,e^{b_n\rho})$ in $\R^n$. For a fixed $\rho$, the set $S_\rho$ of coefficients $\mathbf{a}$ for which $\rho$ is a root of $\phi(y)$ lie along an $n-1$ dimensional linear subspace of $\R^n$. Now $\phi$ has a root in any interval $I$ of length $\epsilon$, exactly when the coefficients lie on $S_\rho$ for some $\rho\in I$.  The desired probability is therefore upper bounded by $\max_{\rho}\textsc{Vol}(\cup S_y\mid y\in [\rho - \epsilon, \rho + \epsilon])/\textsc{Vol}(S_y\mid y\in \R)$ which we will show to be $\Tilde{O}(\epsilon)$. The key idea is that if $|\rho-\rho'|<\epsilon$, then $\varrho(\rho)$ and $\varrho(\rho')$ are within a small angle $\theta_{\rho,\rho'}=\Tilde{O}(\epsilon)$ for small $\epsilon$ (the probability bound is vacuous for large $\epsilon$). But any point in $S_{\rho}$ is at most $\Tilde{O}(\theta_{\rho,\rho'})$ from a point in $S_{\rho'}$, which implies the desired bound (similar arguments to Theorem \ref{thm:poly-roots}).

We will now flesh out the above sketch. Indeed,
\begin{align*}
    \sin\theta_{\rho,\rho'}=
    \sqrt{1-
    \frac{\left(\langle\varrho(\rho),\varrho(\rho')\rangle\right)^2}
    {\norm{\varrho(\rho)}\norm{\varrho(\rho')}}}
    =\sqrt{1-\frac{\left(\sum_i e^{b_i\rho}e^{b_i\rho'}\right)^2}{\sum_ie^{2b_i\rho}\sum_ie^{2b_i\rho'}}}
    =\sqrt{\frac{\sum_{i\ne j} e^{2(b_i\rho+b_j\rho')}-e^{(b_i+b_j)(\rho+\rho')}}{\sum_ie^{2b_i\rho}\sum_ie^{2b_i\rho'}}}.
\end{align*}

Now, for $\rho'=\rho+\varepsilon$, $|\varepsilon|<\epsilon$,

\begin{align*}
    \sin\theta_{\rho,\rho'}
    =\sqrt{\frac{\sum_{i\ne j} e^{2(b_i\rho+b_j\rho+b_j\varepsilon)}-e^{(b_i+b_j)(2\rho+\varepsilon)}}{\sum_ie^{2b_i\rho}\sum_ie^{2b_i\rho'}}}
    =\sqrt{\frac{\sum_{i\ne j} e^{2\rho(b_i+b_j)}(e^{2b_j\varepsilon}-e^{(b_i+b_j)\varepsilon})}{\sum_ie^{2b_i\rho}\sum_ie^{2b_i\rho'}}}.
\end{align*}
Using the Taylor's series approximation for $e^{2b_j\varepsilon}$ and $e^{(b_i+b_j)\varepsilon}$, we note that the largest terms that survive are quadratic in $\varepsilon$. $\sin\theta_{\rho,\rho'}$, and therefore also $\theta_{\rho,\rho'}$, is $\Tilde{O}(\epsilon)$.

Next it is easy to show that any point in $S_{\rho}$ is at most $\Tilde{O}(\theta_{\rho,\rho'})$ from a point in $S_{\rho'}$. For $n=2$, $S_{\rho}$ and $S_{\rho'}$ are along lines orthogonal to $\rho$ and $\rho'$ and are thus themselves at an angle $\theta_{\rho,\rho'}$. Since we further assume that the coefficients are bounded by $R$, any point on $S_{\rho}$ is within $O(R\theta_{\rho,\rho'})=\Tilde{O}(\theta_{\rho,\rho'})$ of the nearest point in $S_{\rho'}$. For $n>2$, consider the 3-space spanned by $\rho$, $\rho'$ and an arbitrary $\varsigma\in S_{\rho}$. $S_{\rho}$ and $S_{\rho'}$ are along 2-planes in this space with normal vectors $\rho,\rho'$ respectively. Again it is straightforward to see that the nearest point in the projection of $S_{\rho'}$ to $\varsigma$ is $\Tilde{O}(\theta_{\rho,\rho'})$.

The remaining proof is identical to that of Theorem \ref{thm:poly-roots} (see Theorem 18 of \cite{dick2020semi}), and is omitted for brevity.


\end{proof}

We will also need the following lemma for the second step noted above, i.e. obtain a result similar to Theorem \ref{thm:VC-bound} for exponential polynomials.


\begin{lemma}\label{lem:exproot}
The equation $\sum_{i=1}^na_ie^{b_ix} = 0$ where $a_i,b_i\in \R$ has at most $n-1$ distinct solutions $x\in\R$.
\end{lemma}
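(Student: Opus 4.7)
The plan is to prove this by induction on $n$, using Rolle's theorem in the standard way for quasi-polynomials (exponential sums). The base case $n=1$ is trivial: $a_1 e^{b_1 x} = 0$ has no real solutions when $a_1 \neq 0$ (and if $a_1 = 0$ the equation is vacuous), so at most $0 = n-1$ solutions exist.

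Before the inductive step I would first argue that we may assume without loss of generality that the exponents $b_1, \dots, b_n$ are pairwise distinct. If two exponents coincide, say $b_i = b_j$, the terms combine into $(a_i + a_j) e^{b_i x}$, reducing the problem to an instance with fewer terms, for which the bound is only stronger. I would also assume $a_i \neq 0$ for each $i$, discarding any zero coefficients, since they contribute nothing to the equation while only making the induction hypothesis easier to apply.

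For the inductive step, suppose the claim holds for exponential sums with at most $n-1$ terms, and consider $\phi(x) = \sum_{i=1}^n a_i e^{b_i x}$ with distinct $b_i$. The key trick is to factor out $e^{b_n x}$: define
\[
\psi(x) = e^{-b_n x}\phi(x) = a_n + \sum_{i=1}^{n-1} a_i e^{(b_i - b_n) x}.
\]
Since $e^{-b_n x} > 0$, the real zeros of $\psi$ coincide with those of $\phi$. Differentiating,
\[
\psi'(x) = \sum_{i=1}^{n-1} a_i (b_i - b_n)\, e^{(b_i - b_n) x},
\]
which is an exponential sum of only $n-1$ terms (with nonzero coefficients $a_i(b_i - b_n)$ since the $b_i$ are distinct). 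By the induction hypothesis, $\psi'$ has at most $n-2$ real zeros.

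The main (and only) step that requires care is invoking Rolle's theorem correctly: between any two distinct real zeros of $\psi$, the derivative $\psi'$ must have a zero. Hence if $\psi$ has $k$ distinct real zeros, then $\psi'$ has at least $k-1$ distinct real zeros, so $k - 1 \le n - 2$, yielding $k \le n - 1$. This completes the induction. I don't expect any serious obstacle here; the only subtlety is the preliminary normalization step (distinct exponents, nonzero coefficients) so that the inductive hypothesis applies to $\psi'$ with the correct term count.
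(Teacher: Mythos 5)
Your proof is correct and follows essentially the same approach as the paper: factor out one exponential term to reduce to a constant plus an $(n-1)$-term exponential sum, differentiate to remove the constant, apply the inductive hypothesis to the derivative, and conclude via Rolle's theorem. Your preliminary normalization (distinct exponents, nonzero coefficients) is a welcome bit of extra care that the paper leaves implicit, but it is not a genuinely different route.
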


\begin{proof}
We will use induction on $n$. It is easy to verify that there is no solution for $n=1$. We assume the statement holds for all $1\le n\le N$.
Consider the equation $\phi_{N+1}(x)=\sum_{i=1}^{N+1}a_ie^{b_ix}=0$. WLOG $a_1\ne 0$ and we can write
\[\phi_{N+1}(x)=\sum_{i=1}^{N+1}a_ie^{b_ix}=a_1e^{b_1x}\left(1+\sum_{i=2}^{N+1}\frac{a_i}{a_1}e^{(b_i-b_1)x}\right)=a_1e^{b_1x}\left(1+g(x)\right).\]
By our induction hypothesis, $g'(0)=0$ has at most $N-1$ solutions, and so $(1+g(x))'$ has at most $N-1$ roots. By Rolle's theorem, $(1+g(x))$ has at most $N$ roots, and therefore $\phi_{N+1}(x)=0$ has at most $N$ solutions.
\end{proof}

Lemma \ref{lem:exproot} implies that Theorem \ref{thm:VC-bound} may be applied. The number of discontinuities may be exponentially high in this case. Indeed solving the quadratic objective can result in an exponential equation of the form in Lemma \ref{lem:exproot} with $O(|U|^n)$ terms.

\subsection{Learning several metrics simultaneously}\label{app:multimetric}
We start by getting a couple useful definitions out of the way.
\begin{defn}[Homogeneous algebraic hypersurface] An algebraic hypersurface is an algebraic variety (a system of polynomial equations) that may be defined by a single implicit equation of the form $p(x_1,\dots,x_n)=0$, where $p$ is a multivariate polynomial. The degree $d$ of the algebraic hypersurface is the total degree of the polynomial $p$. We say that the algebraic hypersurface is homogeneous if $p$ is a homogeneous polynomial, i.e. $p(\lambda x_1,\dots,\lambda x_m)=\lambda^dp(x_1,\dots,x_n)$.
\end{defn}
In the following we will refer to homogeneous algebraic hypersurfaces as simply algebraic hypersurfaces. We will also need the standard definition of set shattering, which we restate in our context as follows.

\begin{defn}[Hitting and Shattering]
Let $\C$ denote a set of curves in $\R^p$. 
We say that a subset of $\C$ is {\it hit} by a curve $s$ if the subset is exactly the set of curves in $\C$ which intersect the curve $s$. A collection of curves $\cS$ shatters the set $\C$ if for each subset $C$ of $\C$, there is some element $s$ of $\cS$ such that $s$ hits $C$.
\end{defn}

To extend our learning results to learning graphs built from several metrics, we will now state and prove a couple theorems involving algebraic hypersurfaces. Our results generalize significantly the techniques from \cite{dick2020semi} by bringing in novel connections with algebraic geometry.

\thmalghyp*

\begin{proof}
Let $\C$ denote a collection of $k$ algebraic hypersurfaces of degree at most $d$ in $\R^p$. 
We say that a subset of $\C$ is {\it hit} by a line segment if the subset is exactly the set of curves in $\C$ which intersect the segment, and {\it hit} by a line if some segment of the line hits the subset. We seek to upper bound the number of subsets of $\C$ which may be hit by axis-aligned line segments. We will first consider shattering by line segments in a fixed axial direction $x$. We can easily extend this to axis-aligned segments by noting they may hit only $p$ times as many subsets.


Let $L_c$ be a line in the $x$ direction. The subsets of $\C$ which may be hit by (segments along) $L_c$ is determined by the pattern of intersections of $L_c$ with hypersurfaces in $\C$. By Bezout's theorem, there are at most $kd+1$ distinct regions of $L_c$ due to the intersections. Therefore at most $\binom{kd+1}{2}$ distinct subsets may be hit.

Define the equivalence relation $L_{c_1} \sim L_{c_2}$
if the same hypersurfaces in $\C$ intersect $L_{c_1}$
and $L_{c_2}$, and in the same order (including with multiplicities). To determine these equivalence classes, we will project the hypersurfaces in $\C$ on to a hyperplane orthogonal to the $x$-direction. By the Tarski-Seidenberg-Łojasiewicz Theorem, we get a semi-algebraic collection $\C_x$, i.e. a set of polynomial equations and constraints in the projection space. Each cell of $\C_x$ corresponds to an equivalence class. Using well-known upper bounds for {\it cylindrical algebraic decomposition} (see for example \cite{england2016complexity}), we get that the number of equivalence classes is at most $O\left((2d)^{2^p-1}k^{2^p-1}2^{2^{p-1}}\right)$.

Putting it all together, the number of subsets hit by any axis aligned segment is at most $$O\left(p\binom{kd+1}{2}(2d)^{2^p-1}k^{2^p-1}2^{2^{p-1}}\right).$$

We are done as this is less than $2^k$ for fixed $d$ and $p$ and large enough $k$, and therefore all subsets may not be hit.

\end{proof}

\thmvcgeneral*

\begin{proof}
The proof is similar to that of Theorem \ref{thm:VC-bound} (see \cite{dick2020semi}). The main difference is that instead of relating the number
of ways intervals can label vectors of discontinuity points to the VC-dimension of intervals, we instead relate the
number of ways line segments can label vectors of $K$ algebraic hypersurfaces of degree $d$ to the VC-dimension of line
segments (when labeling algebraic hypersurfaces), which from Theorem \ref{thm:alg-hyp} is constant. To verify dispersion,
we need a uniform-convergence bound on the number of Lipschitz failures between the worst pair of points $\alpha,\alpha'$
at
distance $\le \epsilon$, but the definition allows us to bound the worst rate of discontinuties along any path between $\alpha,\alpha'$ of our
choice. We can bound the VC dimension of axis aligned segments against bounded-degree algebraic
hypersurfaces, which will allow us to establish dispersion by considering piecewise axis-aligned paths between points $\alpha$ and $\alpha'$.

Let $\C$ denote the set of all algebraic hypersurfaces of degree $d$. For simplicity, we assume that every function has its discontinuities specified by a collection of exactly $K$ algebraic hypersurfaces. For each function $l_t$, let $\gamma^{(t)}
\in \C^K$
denote the ordered tuple of algebraic hypersurfaces in $\C$ whose entries are the discontinuity locations of $l_t$. That is, $l_t$ has discontinuities along $(\gamma^{(t)}_1,\dots,\gamma^{(t)}_K)$,
but is otherwise $L$-Lispchitz. 

For any axis aligned path $s$, define the function $f_s : \C^K \rightarrow \{0, 1\}$ by
\begin{align*}
    f_s(\gamma) = \begin{cases*}
    1 &if for some $i \in [K]$ $\gamma_i$ intersects $s$\\
    0 & otherwise,
    \end{cases*}
\end{align*}
where $\gamma = (\gamma_1, \dots, \gamma_K) \in \C^K$. The sum $\sum_{t=1}^T f_s (\gamma^{(t)})$ counts the number of vectors $(\gamma^{(t)}_1,\dots,\gamma^{(t)}_K)$
that intersect $s$ or, equivalently, the number of functions $l_1, \dots , l_T$ that are not $L$-Lipschitz on $s$. We will
apply VC-dimension uniform convergence arguments to the class $\F = \{f_s : \C^K\rightarrow \{0, 1\} \mid s \text{ is an axis-aligned path}\}$.
In particular, we will show that for an independent set of vectors $(\gamma^{(t)}_1,\dots,\gamma^{(t)}_K)$, with high probability we have that $\frac{1}{T}\sum_{t=1}^T f_s (\gamma^{(t)})$ is close to $\E[\frac{1}{T}\sum_{t=1}^T f_s (\gamma^{(t)})]$ for all paths $s$. This uniform convergence argument will lead to the desired bounds.

Indeed, Theorem \ref{thm:alg-hyp} implies that VC dimension of $\F$ is $O(\log K)$. Now standard VC-dimension uniform convergence arguments for the class $\F$ imply that with probability at least $1-\delta$, for all $f_s\in\F$
\begin{align*}
    \left\lvert\frac{1}{T}\sum_{t=1}^T f_s (\gamma^{(t)})-\E\left[\frac{1}{T}\sum_{t=1}^T f_s (\gamma^{(t)})\right]\right\rvert\le O\left(\sqrt{\frac{\log(K/\delta)}{T}}\right)&\text{, or}\\
    \left\lvert\sum_{t=1}^T f_s (\gamma^{(t)})-\E\left[\sum_{t=1}^T f_s (\gamma^{(t)})\right]\right\rvert\le O\left(\sqrt{T\log(K/\delta)}\right).
\end{align*}
Now since $D(T,s)=\sum_{t=1}^Tf_s (\gamma^{(t)})$, we have for all $s$ and $\delta$, with probability at least $1-\delta$,
$\sup_{s\in L} D(T, s) \le \sup_{s\in L} \E[D(T, s)] +
O(\sqrt{T \log(K/\delta)})$. Taking expectation and setting $\delta=1/\sqrt{T}$ completes the proof as it allows us to bound the expected discontinuities by $O(\sqrt{T})$ when the above high probability event fails.
\end{proof}

Theorem \ref{thm:VC-bound-general} above generalizes the second step of the dispersion tool from single parameter families to several hyperparameters, and uses Theorem \ref{thm:alg-hyp} as a key ingredient. To complete the first step of in the multi-parameter setting, we can use a simple generalization of Theorem \ref{thm:poly-roots} by showing that few zeros are likely to occur on a piecewise axis-aligned path on whose pieces the zero sets of the
multivariate polynomial is the zero set of a single-variable
polynomial. Putting together we get Theorem \ref{thm:mutlimetric}.

\thmmultimetric*
\begin{proof}
Notice that $w(u,v)$ is a homogeneous polynomial in $\rho=(\rho_i,i\in[p])$. Further, the solutions of the quadratic objective subject to $f(u)=1/2$ for some $u$ are also homogeneous polynomial equations, of degree $nd$. Now to show dispersion, consider an axis-aligned path between any two parameter vectors $\rho,\rho'$ such that $\norm{\rho-\rho'}<\epsilon$ (notice that the definition of dispersion allows us to use any path between $\rho,\rho'$ for counting discontinuities). To compute the expected number of non-Lipchitzness in along this path, notice that for any fixed segment of this path, all but one variable are constant and the discontinuities are the zeros of single variable polynomial with bounded-density random coefficients, and that Theorem \ref{thm:poly-roots} applies. Summing along these paths we get at most $\Tilde{O}(p\epsilon)$ discontinuities in expectation for any $\norm{\rho-\rho'}<\epsilon$. Theorem \ref{thm:VC-bound-general} now completes the proof of dispersion in this case.
\end{proof}

\subsection{Semi-bandit efficient algorithms}
\label{app:sb}
In this appendix we present details of the efficient algorithms for computing the semi-bandit feedback sets in Algorithm \ref{alg:ddsslsb}. For unweighted graphs, we only have a polynomial number $O(n^2)$ of feedback sets and the feedback set for a given $\rho_t$ is readily computed by looking up a sorted list of distances $d(u,v)_{u,v\in L_i\cup U_i}$. For the weighted graph setting, we need non-trivial algorithms as discussed in Section \ref{sec: semibandit}.

\subsubsection{Min-cut objective}
First some notation for this section. We will use $G=(V,E)$ to denote an undirected graph with $V$ as the set of nodes  and $E\subseteq V\times V$ the weighted edges with capacity $d : E \rightarrow \R_{\ge 0}$. We are given special nodes $s,t\in V$ called {\it source} and {\it target} vertices. Recall the following definitions.

\begin{defn}
{\bf (s,t)-flows} An (s,t)-flow (or just flow if the source and target are clear from context) is a function
$f : V\times V \rightarrow \R_{\ge 0}$
that satisfies the conservation constraint at every vertex v except possibly
s and t given by $\sum_{(u,v)\in E}f(u,v)=\sum_{(v,u)\in E}f(v,u)$. The value of flow (also refered by just flow when clear from context) is the total flow out of $s$, $\sum_{u\in V}f(s,u)-\sum_{u\in V}f(u,s)$. 
\end{defn}

\begin{defn}
{\bf (s,t)-cut} An (s,t)-cut (or just a cut if the source and target are clear from context) is a partition of $V$ into $S,T$ such that $s\in S,t\in T$. We will denote the set $\{(u,v)\in E\mid u\in S,v\in T\}$ of edges in the cut by $\partial S$ or $\partial T$. The capacity of the cut is the total capacity of edges in the cut.
\end{defn}

For convenience we also define 

\begin{defn}\label{defn:pathflow}
Path flow. An (s,t)-flow is a path flow along a path $p=(s=v_0,v_1,\dots,v_n=t)$ if $f(u,w)>0$ iff $(u,w)=(v_i,v_{i+1})$ for some $i\in[n-1]$.
\end{defn}

\begin{defn}\label{defn:residual}
Residual capacity graph. Given a set of path flows $F$, the residual capacity graph (or simply the residual graph) is the graph $G'=(V,E)$ with capacities given by $c'(e)=c(e)-\sum_{f\in F}f(e)$.
\end{defn}

We will list without proof some well-known facts about maximum flows and minimum cuts in a graph which will be useful in our arguments.

\begin{fact}
\begin{itemize}[leftmargin=0.5cm]
\item[1.] Let $f$ be any feasible $(s, t)$-flow, and let $(S, T)$ be any $(s, t)$-cut. The value of $f$ is at
most the capacity of $(S, T)$. Moreover, the value of $f$ equals the capacity of $(S,T)$ if and only if $f$ saturates every edge in the cut.
\item[2.] Max-flow min-cut theorem. The value of maximum (value of) $(s, t)$-flow equals the capacity of the minimum $(s, t)$-cut. It may be computed in $O(VE)$ time.
\item[3.] Flow Decomposition Theorem. Every feasible $(s, t)$-flow $f$ can be written as a weighted sum of
directed $(s, t)$-paths and directed cycles. Moreover, a directed edge $(u,v)$ appears in at least one of
these paths or cycles if and only if $f (u,v) > 0$, and the total number of paths and cycles is at most
the number of edges in the network. It may be computed in $O(VE)$ time.
\end{itemize}
\end{fact}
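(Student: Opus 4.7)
The plan is to establish each of the three flow-theory facts using classical combinatorial arguments; all three are standard textbook results whose role here is to justify the flow/cut primitives invoked by Algorithm~\ref{algorithm: semi cut}, so the task is mainly to lay out clean proofs and verify that the stated $O(VE)$ runtimes fall out.

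For Fact~1, I would first derive the identity $|f|=\sum_{u\in S,\,v\in T}f(u,v)-\sum_{u\in T,\,v\in S}f(u,v)$ by summing the flow-conservation equations over every vertex in $S\setminus\{s\}$ and adding the defining sum for $|f|$ at $s$; the contributions from edges with both endpoints in $S$ cancel in the telescoping and only cut edges survive. Weak duality $|f|\le \mathrm{cap}(S,T)$ then follows from $0\le f(u,v)\le c(u,v)$: the first sum is dominated by $\mathrm{cap}(S,T)$ and the second is nonnegative. The equality clause is obtained by forcing both inequalities to be tight simultaneously, which requires every forward cut edge to be saturated and every backward cut edge to carry zero flow (in the undirected setting of the paper these coalesce into the single saturation condition stated).

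For Fact~2, the plan is the residual-graph argument. Given a maximum flow $f^*$, build the residual network $G_{f^*}$ and let $S$ be the set of vertices reachable from $s$ in $G_{f^*}$; maximality of $f^*$ rules out any $s$-$t$ augmenting path, so $t\notin S$ and $(S,V\setminus S)$ is a legitimate cut. By construction every forward cut edge is saturated in $f^*$ and every backward cut edge carries zero flow, so the equality case of Fact~1 gives $|f^*|=\mathrm{cap}(S,V\setminus S)$; combined with weak duality applied to any other cut, this proves max-flow equals min-cut. For the $O(VE)$ runtime I would cite Orlin's algorithm (or an equivalent modern max-flow routine) on general graphs, and note that once $f^*$ is in hand the min-cut is recovered by one BFS/DFS on $G_{f^*}$ in $O(V+E)$ additional time. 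For Fact~3 I would use a peeling argument: while $f$ has nonempty support, if $s$ has a positive-flow outgoing edge, extend a walk along positive-flow edges (which by conservation cannot dead-end at an internal vertex) until it either reaches $t$ (producing an $s$-$t$ path $P$) or revisits a vertex (producing a cycle $C$); subtract $\delta=\min_e f(e)$ over edges of $P$ or $C$ to zero out at least one edge, and record $(P,\delta)$ or $(C,\delta)$ in the decomposition. If $s$ has no positive outgoing flow, seed the walk at any other vertex with positive outflow; conservation forces a cycle to appear. Each iteration eliminates at least one edge, so at most $|E|$ paths/cycles are produced, and each walk runs in $O(V)$ with adjacency-list bookkeeping, yielding $O(VE)$ overall.

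The main obstacle I anticipate is pinning down the precise $O(VE)$ runtime for Fact~2. The elementary Ford--Fulkerson analysis only yields $O(VE^2)$ via Edmonds--Karp, and the claimed $O(VE)$ depends on Orlin's algorithm or a similar modern result; additional care is required in the real-capacity model coming from the exponential kernel of Definition~\ref{defn:g}c, where one must quantify precision. The other two facts follow cleanly once the telescoping identity (for Fact~1) and the peeling template (for Fact~3) are in place.
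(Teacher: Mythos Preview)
Your arguments are the standard textbook proofs and are correct. However, there is nothing to compare against: the paper explicitly introduces this block as ``We will list without proof some well-known facts about maximum flows and minimum cuts,'' and indeed provides no proof for any of the three items. Your write-up would serve as a perfectly adequate supplement, but be aware that the paper treats these as background citations rather than results to be established, so your anticipated obstacle about the precise $O(VE)$ bound is handled in the paper simply by appeal to the literature rather than by argument.
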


We now have the machinery to prove the correctness and analyze the time complexity of our Algorithm \ref{algorithm: semi cut}.


\thmsb*

\begin{proof}
First, we briefly recall the set up of the mincut objective. Let $L_1$ and $L_2$ denote the labeled points $L$ of different classes. To obtain the labels for $U$, we seek the smallest cut $(V_1, V\setminus V_1)$ of $G$ separating the nodes in $L_1$ and $L_2$. To frame as $s,t$-cut we can augment the data graph with nodes $s,t$, and add infinite capacity edges to nodes in $L_1$ and $L_2$ respectively. If $L_i\subseteq V_1$, label exactly the nodes in $V_1$ with label $i$.  The loss function, $l(\sigma)$ gives the fraction of labels this procedure gets right for the unlabeled set $U$. 

If the min-cut is the same for two values of $\sigma$, then so is prediction on each point and thus the loss function $l(\sigma)$. So we seek the smallest amount of change in $\sigma$ so that the mincut changes. Our semi-bandit feedback set is given by the intervals for which the min-cut is fixed. Consider a fixed value of $\sigma=\sigma_0$ and the corresponding graph $G(\sigma_0)$. We can compute the max-flow on $G(\sigma_0)$, and simultaneously obtain a min-cut $(V_1,V \setminus V_1)$ in time $O(VE)=O(n^3)$. All the edges in $\partial V_1$ are saturated by the flow. Obtain the flow decomposition of the max-flow (again $O(VE)=O(n^3)$). For each $e_i\in \partial V_1$, let $f_i$ be a path flow through $e_i$ from the flow decomposition (cycle flows cannot saturate, or even pass through, $e_i$ since it is on the min-cut). Note that the $f_i$ are distinct due to the max-flow min-cut theorem. Now as $\sigma$ is increased, we increment each $f_i$ by the additional capacity in the corresponding edge $e_i$, until an edge $e'$ in $E\setminus \partial V_1$ saturates (at a faster rate than the flow through it). This can be detected by expressing $f_i$ as a function of $\sigma$ for each $f_i$ and computing the zero of an exponential polynomial capturing the change in residual capacity of any edge $e\notin \partial V_1$. Let $f_j$ be one of the path flows through $e'$. We reassign this flow to $e'$ (it will now increase with $e'$ as its bottleneck) and find an alternate path avoiding this edge through non-saturated edges and $e_j$ (if one exists) along which we send the new $f_j$.  We now increment all the path flows as before keeping their bottleneck edges saturated. The procedure stops when we can no longer find an alternate path for some $e_j$. But this means we must have a new cut with the saturated edges, and therefore a new min-cut. This gives us a new critical value of $\sigma$, and the desired upper end for the feedback interval. Obtaining the lower end is possible by a symmetric procedure that decreases the path flows while keeping edges saturated.

We remark that our procedure differs from the well-known algorithms for obtaining min-cuts in a static graph. The greedy procedures for static graphs need directed edges $(u,v)$ and $(v,u)$ in the residual graph, and find paths through unsaturated edges through this graph to increase the flow, and cannot work with monotonically increasing path flows. We however start with a max flow and maintain the invariant that our flow equals some cut size throughout.

Finally note that each time we perform step 9 of the algorithm, a new saturated edge stays saturated for all further $\sigma$ until the new cut is found. So we can do this at most $O(n^2)$ times. In each loop we need to obtain the saturation condition for $O(n)$ edges corresponding to one new path flow.
\end{proof}

We remind the reader that a remarkable property of finding the min-cuts dynamically in our setting is an interesting ``hybrid" combinatorial and continuous set-up, which may be of independent interest. A similar dynamic, but purely combinatorial, setting for recomputing flows efficiently online over a discrete graph sequence has been studied in \cite{altner2008rapidly}.


\section{Distributional setting}\label{app:distrib}
In this appendix we include details of proofs and algorithms from section \ref{sec: distrib}. Recall that we define the set of loss functions $\mathcal{H}_r=\{l_{A(G(r),L,U)}\mid 0\le r<\infty \}$, where $G(r)$ is the family of threshold graphs specified by Definition \ref{defn:g}a, and $\mathcal{H}_{\sigma}=\{l_{A(G(\sigma),L,U)}\mid 0\le \sigma<\infty \}$, where $G(\sigma)$ is the family of exponential kernel graphs specified by Definition \ref{defn:g}c. We show lower bounds on the pseudodimension of these function classes below.

\thmpdlb*

We first prove the following
useful statement which helps us construct general examples with desirable properties. In particular,
the following lemma guarantees that given a sequence of values of $r$ of size $O(n)$, it is possible to
construct an instance $S$ of partially labeled points such that the cost of the output of algorithm $A(G(r),L)$ on V as a function of $r$ oscillates above and below some threshold as $r$ moves along the sequence of intervals
$(r_i,r_{i+1})$. Given this powerful guarantee, we can then pick appropriate sequences of $r$ and generate
a sample set of $\Omega(\log n)$ instances that correspond to cost functions that oscillate in a manner that
helps us pick $\Omega(n)$ values of $r$ that shatters the samples. 

\begin{lemma}\label{lem:pdlb}
Given integer $n>5$ and a sequence of $n'$ $r$'s such that $1<r_1<r_2<\dots<r_{n'}<2$ and $n'\le n-5$, there exists a real valued witness $w > 0$ and
a labeling instance $S$ of partially labeled $n$ points, 
such that for $0 \le i \le n'/2 - 1$,
$l_{A(G(r),L)} < w$ for
$r \in (r_{2i}, r_{2i+1})$,
and $l_{A(G(r),L)} > w$ for
$r \in (r_{2i+1}, r_{2i+2})$ (where $r_0$ and $r_{n'+1}$ correspond to immediate left and right neighborhoods respectively of $r_1$ and $r_{n'}$).
\end{lemma}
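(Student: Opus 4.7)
The plan is to construct a single labeling instance on $n$ points that realizes the prescribed loss oscillation, using three labeled points and $n'$ unlabeled points whose pairwise distances are arranged so that the threshold graph $G(r)$ has a controlled ``star'' structure for all $r$ in the range of interest. I would take labeled points $a_1$ with label $0$ and $b_1, b_2$ with label $1$, together with unlabeled points $u_1, \dots, u_{n'}$; the remaining $n - n' - 3 \ge 2$ points (permitted by $n' \le n-5$) are placed far from everything else and thus do not interact with the analysis. The distances among the relevant points will be chosen so that for every $r \in (r_0, r_{n'+1})$ the only edges of $G(r)$ are $(a_1, u_j)$ for every $j \in [n']$ together with $(b_1, u_j)$ and $(b_2, u_j)$ for every $j$ with $r_j \le r$.

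Concretely, I would set $d(a_1, u_j) = \alpha$ (the same $\alpha$ for every $j$) with $r_{n'}/2 < \alpha < r_0$, set $d(b_k, u_j) = r_j$ for $k \in \{1, 2\}$, and choose the remaining distances $d(a_1, b_k)$, $d(b_1, b_2)$, and $d(u_i, u_j)$ all strictly greater than $r_{n'+1}$. The main technical hurdle is verifying that this yields a valid metric. The binding triangle inequalities come from the triples $\{a_1, u_i, u_j\}$ (forcing $d(u_i, u_j) \le 2\alpha$) and $\{a_1, b_k, u_j\}$ (forcing $|d(a_1, b_k) - r_j| \le \alpha$ for every $j$, which restricts $d(a_1, b_k)$ to $[r_{n'} - \alpha,\, r_1 + \alpha]$). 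The hypothesis $1 < r_1 < r_{n'} < 2$ implies $r_{n'}/2 < 1 < r_0$ and $r_{n'} < 2r_1 \le 2\alpha$, so each interval constraint is nonempty and one may take, e.g., $d(u_i, u_j) \in (r_{n'+1}, 2\alpha]$ and $d(a_1, b_k), d(b_1, b_2) \in (r_{n'+1}, r_1 + \alpha]$.

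With the graph structure fixed, I compute the algorithm's predictions on each constancy interval. For $r \in (r_i, r_{i+1})$ each unlabeled node $u_j$ is the center of an isolated star: if $j \le i$ its neighbors are $a_1, b_1, b_2$, and if $j > i$ its only neighbor is $a_1$; in particular distinct $u_j$'s share no edge, so the prediction at each $u_j$ decouples. For the min-cut algorithm, when $j \le i$, putting $u_j$ with $a_1$ cuts two edges while putting it with $\{b_1, b_2\}$ cuts one, and when $j > i$, putting it with $a_1$ cuts no edge; for the harmonic algorithm, $f(u_j) = 2/3 > 1/2$ in the first case and $f(u_j) = 0$ in the second. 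Either way $u_j$ is predicted $1$ precisely when $j \le i$. Finally I set the true labels alternately, $\tau(u_j) = 0$ for odd $j$ and $\tau(u_j) = 1$ for even $j$ (WLOG $n'$ is even, else drop $u_{n'}$). Writing $L_0$ for the loss on $(r_0, r_1)$, which equals $n'/2$, crossing $r_j$ flips only the prediction at $u_j$ from $0$ to $1$, changing the loss by $+1$ if $\tau(u_j) = 0$ and by $-1$ if $\tau(u_j) = 1$. With the chosen alternation the loss equals $L_0$ on every $(r_{2i}, r_{2i+1})$ and $L_0 + 1$ on every $(r_{2i+1}, r_{2i+2})$, so the witness $w := L_0 + \tfrac{1}{2}$ establishes the lemma.
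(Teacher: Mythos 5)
Your construction is essentially the paper's: in both, the unlabeled nodes $u_1,\dots,u_{n'}$ sit below one threshold $\alpha$ from a single $0$-labeled node and at distances $r_j$ from two $1$-labeled nodes, so that crossing each $r_j$ flips exactly the prediction at $u_j$, and alternating true labels produce the required oscillation with the midpoint as witness. The differences are cosmetic (you use three labeled points where the paper uses five, two of which the paper itself notes are not crucial; you verify the triangle inequality directly rather than confining all distances to $[1,2]$), and your choice of label parity actually matches the interval parity in the lemma statement exactly, whereas the paper's own write-up begins the $(r_0,r_1)$ interval at $l_{\mathrm{high}}$ rather than the $l<w$ side.
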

\begin{proof}
We first present a sketch of the construction. We will use binary labels $a$ and $b$. We further have three points labeled $a$ (namely $a_1,a_2,a_3$) and two points labeled $b$ (say $b_1,b_2$). At some intial $r=r_0$, all the like-labeled points are connected in $G(r_0)$ and all the unlabeled points (namely $u_1,\dots,u_{n'}$) are connected to $a_1$ as shown in Figure \ref{fig:lb1}. The algorithm $A(G(r),L)$ labels everything $a$ and gets exactly half the labels right. As $r$ is increased to $r_i$, $u_i$ gets connected to $b_1$ and $b_2$ (Figure \ref{fig:lb2}). If the sequence $u_i$ is alternately labeled, the loss increases and decreases alternately as all the predicted labels turn to $b$ as $r$ is increased to $r_{n'}$. Further increasing $r$ may connect all the unlabeled points with true label $a$ to $a_2$ and $a_3$ (Figure \ref{fig:lb3}), although this is not crucial to our argument.
The rest of the proof gives concrete values of $r$ and verifies that the construction is indeed feasible. 

We will ensure all the pairwise distances are between 1 and 2, so that triangle inequality is always satisfied. It may also be readily verified that $O(\log n)$ dimensions suffice for our construction to exist. We start by defining some useful constants. We pick $r_-,r_{+},r_{\max}\in (1,2)$ such that $r_-<r_1<\dots<r_{n'}<r_{+}<r_{\max}$,
\begin{align*}
    r_-&= \frac{1+r_1}{2},\\
    r_{+}&= 1+\frac{r_{n'}}{2},\\
    r_{\max}&= 1+\frac{r_{+}}{2}.
\end{align*}
We will now specify the distances of the labeled points. The points with the same label are close together and away from the oppositely labeled points.
\begin{align*}
    d(a_i,a_j)&= r_-,&&1\le i<j \le 3,\\
    d(b_1,b_2)&= r_-,&&\\
    d(a_i,b_j)&= r_{\max},&&1\le i \le 3,1\le j \le 2.
\end{align*}
Further, the unlabeled points are located as follows
\begin{align*}
    d(a_1,u_k)&= r_-,&&1\le k \le n',\\
    d(b_i,u_k)&= r_k,&&1\le k \le n',1\le i \le 2,\\
    d(a_i,u_k)&= r_{+},&&1\le k \le n',2\le i \le 3,\\
    d(u_i,u_j)&= r_{\max},&&1\le i<j \le n'.
\end{align*}
That is, all unknown points are closest to $a_1$, followed by $b_i$'s, remaining $a_i$'s and other $u_i$'s in order.
Further let the true labels of the unlabeled nodes be alternating with the index, i.e. $u_k$ is $a$ if and only if $k$ is even.

We will now compute the loss for the soft labeling algorithm $A(G(r),L)$ of \cite{zhu2003semi} as $r$ varies from $r_-$ to $r_+$, starting with $r=r_0=r_-$. We note that our construction also works for other algorithms as well, for example the min-cut based approach of  \cite{blum2001learning}, but omit the details.

For the graph $G(r_-)$, $A(G,L)$ labels each unknown node as $a$ since each unknown point is a leaf node connected to $a_1$. Indeed if $f(a_1)=1$, the quadratic objective attains the minimum of 0 for exactly $f(u_k)=1$ for each $1\le k\le n'$.  This results in half the labels in the dataset being incorrectly labeled since we stipulate that half the unknown labels are of each category. This results in loss $l_{A(G(r_-),L)}=:l_{\text{high}}$ say.

\begin{figure}[t]
    \centering
    
    \begin{tabular}{c }
    
    \begin{subfigure}[b]{0.3\textwidth}

\tikzset{every picture/.style={line width=0.75pt}} 

\begin{tikzpicture}[x=0.75pt,y=0.75pt,yscale=-1,xscale=1]

\draw   (476.44,357.05) .. controls (476.44,355.1) and (478.02,353.52) .. (479.97,353.52) .. controls (481.92,353.52) and (483.5,355.1) .. (483.5,357.05) .. controls (483.5,359) and (481.92,360.58) .. (479.97,360.58) .. controls (478.02,360.58) and (476.44,359) .. (476.44,357.05) -- cycle ;
\draw   (517.97,361.59) .. controls (517.97,359.64) and (519.55,358.06) .. (521.5,358.06) .. controls (523.45,358.06) and (525.03,359.64) .. (525.03,361.59) .. controls (525.03,363.54) and (523.45,365.13) .. (521.5,365.13) .. controls (519.55,365.13) and (517.97,363.54) .. (517.97,361.59) -- cycle ;
\draw    (478.5,378.05) -- (519.97,364.59) ;
\draw    (483.5,357.05) -- (517.97,361.59) ;
\draw   (472.97,379.58) .. controls (472.97,377.63) and (474.55,376.05) .. (476.5,376.05) .. controls (478.45,376.05) and (480.03,377.63) .. (480.03,379.58) .. controls (480.03,381.53) and (478.45,383.11) .. (476.5,383.11) .. controls (474.55,383.11) and (472.97,381.53) .. (472.97,379.58) -- cycle ;
\draw    (479.97,360.58) -- (476.5,376.05) ;
\draw    (522.5,365.13) -- (556.5,410.05) ;
\draw   (555.5,412.05) .. controls (555.5,410.1) and (557.08,408.52) .. (559.03,408.52) .. controls (560.98,408.52) and (562.56,410.1) .. (562.56,412.05) .. controls (562.56,414) and (560.98,415.58) .. (559.03,415.58) .. controls (557.08,415.58) and (555.5,414) .. (555.5,412.05) -- cycle ;
\draw  [color={rgb, 255:red, 0; green, 0; blue, 0 }  ,draw opacity=1 ][line width=3] [line join = round][line cap = round] (557,359.06) .. controls (557,359.06) and (557,359.06) .. (557,359.06) ;
\draw  [color={rgb, 255:red, 0; green, 0; blue, 0 }  ,draw opacity=1 ][line width=3] [line join = round][line cap = round] (557,371.06) .. controls (557,371.06) and (557,371.06) .. (557,371.06) ;
\draw  [color={rgb, 255:red, 0; green, 0; blue, 0 }  ,draw opacity=1 ][line width=3] [line join = round][line cap = round] (557,382.06) .. controls (557,382.06) and (557,382.06) .. (557,382.06) ;
\draw    (524.5,360.06) -- (559.03,321.58) ;
\draw   (557.5,320.05) .. controls (557.5,318.1) and (559.08,316.52) .. (561.03,316.52) .. controls (562.98,316.52) and (564.56,318.1) .. (564.56,320.05) .. controls (564.56,322) and (562.98,323.58) .. (561.03,323.58) .. controls (559.08,323.58) and (557.5,322) .. (557.5,320.05) -- cycle ;
\draw    (521.5,358.06) -- (557.03,284.58) ;
\draw   (555.5,283.05) .. controls (555.5,281.1) and (557.08,279.52) .. (559.03,279.52) .. controls (560.98,279.52) and (562.56,281.1) .. (562.56,283.05) .. controls (562.56,285) and (560.98,286.58) .. (559.03,286.58) .. controls (557.08,286.58) and (555.5,285) .. (555.5,283.05) -- cycle ;
\draw   (602.44,356.05) .. controls (602.44,354.1) and (604.02,352.52) .. (605.97,352.52) .. controls (607.92,352.52) and (609.5,354.1) .. (609.5,356.05) .. controls (609.5,358) and (607.92,359.58) .. (605.97,359.58) .. controls (604.02,359.58) and (602.44,358) .. (602.44,356.05) -- cycle ;
\draw    (605.97,359.58) -- (602.5,375.05) ;
\draw   (596.97,378.58) .. controls (596.97,376.63) and (598.55,375.05) .. (600.5,375.05) .. controls (602.45,375.05) and (604.03,376.63) .. (604.03,378.58) .. controls (604.03,380.53) and (602.45,382.11) .. (600.5,382.11) .. controls (598.55,382.11) and (596.97,380.53) .. (596.97,378.58) -- cycle ;

\draw (518,345) node   [align=left] {$\displaystyle a_{1}$};
\draw (470,343) node   [align=left] {$\displaystyle a_{2}$};
\draw (465,384) node   [align=left] {$\displaystyle a_{3}$};
\draw (563,426) node   [align=left] {$\displaystyle u_{n'}$};
\draw (563,332) node   [align=left] {$\displaystyle u_{2}$};
\draw (561,295) node   [align=left] {$\displaystyle u_{1}$};
\draw (596,342) node   [align=left] {$\displaystyle b_{1}$};
\draw (594,388) node   [align=left] {$\displaystyle b_{2}$};

\end{tikzpicture}

 \subcaption{$G(r_-)$}
    \label{fig:lb1}
    \end{subfigure}
    
    \hfill
    \begin{subfigure}[b]{0.3\textwidth}

\tikzset{every picture/.style={line width=0.75pt}} 

\begin{tikzpicture}[x=0.75pt,y=0.75pt,yscale=-1,xscale=1]

\draw   (476.44,357.05) .. controls (476.44,355.1) and (478.02,353.52) .. (479.97,353.52) .. controls (481.92,353.52) and (483.5,355.1) .. (483.5,357.05) .. controls (483.5,359) and (481.92,360.58) .. (479.97,360.58) .. controls (478.02,360.58) and (476.44,359) .. (476.44,357.05) -- cycle ;
\draw   (517.97,361.59) .. controls (517.97,359.64) and (519.55,358.06) .. (521.5,358.06) .. controls (523.45,358.06) and (525.03,359.64) .. (525.03,361.59) .. controls (525.03,363.54) and (523.45,365.13) .. (521.5,365.13) .. controls (519.55,365.13) and (517.97,363.54) .. (517.97,361.59) -- cycle ;
\draw    (478.5,378.05) -- (519.97,364.59) ;
\draw    (483.5,357.05) -- (517.97,361.59) ;
\draw   (472.97,379.58) .. controls (472.97,377.63) and (474.55,376.05) .. (476.5,376.05) .. controls (478.45,376.05) and (480.03,377.63) .. (480.03,379.58) .. controls (480.03,381.53) and (478.45,383.11) .. (476.5,383.11) .. controls (474.55,383.11) and (472.97,381.53) .. (472.97,379.58) -- cycle ;
\draw    (479.97,360.58) -- (476.5,376.05) ;
\draw    (522.5,365.13) -- (556.5,410.05) ;
\draw   (555.5,412.05) .. controls (555.5,410.1) and (557.08,408.52) .. (559.03,408.52) .. controls (560.98,408.52) and (562.56,410.1) .. (562.56,412.05) .. controls (562.56,414) and (560.98,415.58) .. (559.03,415.58) .. controls (557.08,415.58) and (555.5,414) .. (555.5,412.05) -- cycle ;
\draw  [color={rgb, 255:red, 0; green, 0; blue, 0 }  ,draw opacity=1 ][line width=3] [line join = round][line cap = round] (559,320.06) .. controls (559,320.06) and (559,320.06) .. (559,320.06) ;
\draw  [color={rgb, 255:red, 0; green, 0; blue, 0 }  ,draw opacity=1 ][line width=3] [line join = round][line cap = round] (559,329.06) .. controls (559,329.06) and (559,329.06) .. (559,329.06) ;
\draw  [color={rgb, 255:red, 0; green, 0; blue, 0 }  ,draw opacity=1 ][line width=3] [line join = round][line cap = round] (559,338.06) .. controls (559,338.06) and (559,338.06) .. (559,338.06) ;
\draw    (521.5,358.06) -- (559.03,303.58) ;
\draw   (557.5,302.05) .. controls (557.5,300.1) and (559.08,298.52) .. (561.03,298.52) .. controls (562.98,298.52) and (564.56,300.1) .. (564.56,302.05) .. controls (564.56,304) and (562.98,305.58) .. (561.03,305.58) .. controls (559.08,305.58) and (557.5,304) .. (557.5,302.05) -- cycle ;
\draw   (602.44,356.05) .. controls (602.44,354.1) and (604.02,352.52) .. (605.97,352.52) .. controls (607.92,352.52) and (609.5,354.1) .. (609.5,356.05) .. controls (609.5,358) and (607.92,359.58) .. (605.97,359.58) .. controls (604.02,359.58) and (602.44,358) .. (602.44,356.05) -- cycle ;
\draw    (605.97,359.58) -- (602.5,375.05) ;
\draw   (596.97,378.58) .. controls (596.97,376.63) and (598.55,375.05) .. (600.5,375.05) .. controls (602.45,375.05) and (604.03,376.63) .. (604.03,378.58) .. controls (604.03,380.53) and (602.45,382.11) .. (600.5,382.11) .. controls (598.55,382.11) and (596.97,380.53) .. (596.97,378.58) -- cycle ;
\draw    (605.97,352.52) -- (562.56,307.11) ;
\draw    (600.5,375.05) -- (562.56,307.11) ;
\draw    (560.56,364.05) -- (602.44,356.05) ;
\draw    (560.56,364.05) -- (596.97,378.58) ;
\draw  [color={rgb, 255:red, 0; green, 0; blue, 0 }  ,draw opacity=1 ][line width=3] [line join = round][line cap = round] (558,380.06) .. controls (558,380.06) and (558,380.06) .. (558,380.06) ;
\draw  [color={rgb, 255:red, 0; green, 0; blue, 0 }  ,draw opacity=1 ][line width=3] [line join = round][line cap = round] (558,390.06) .. controls (558,390.06) and (558,390.06) .. (558,390.06) ;
\draw  [color={rgb, 255:red, 0; green, 0; blue, 0 }  ,draw opacity=1 ][line width=3] [line join = round][line cap = round] (558,399.06) .. controls (558,399.06) and (558,399.06) .. (558,399.06) ;
\draw   (553.5,366.05) .. controls (553.5,364.1) and (555.08,362.52) .. (557.03,362.52) .. controls (558.98,362.52) and (560.56,364.1) .. (560.56,366.05) .. controls (560.56,368) and (558.98,369.58) .. (557.03,369.58) .. controls (555.08,369.58) and (553.5,368) .. (553.5,366.05) -- cycle ;
\draw    (525.03,361.59) -- (553.5,364.05) ;

\draw (520,372) node   [align=left] {$\displaystyle a_{1}$};
\draw (470,343) node   [align=left] {$\displaystyle a_{2}$};
\draw (465,384) node   [align=left] {$\displaystyle a_{3}$};
\draw (562,425) node   [align=left] {$\displaystyle u_{n'}$};
\draw (564,288) node   [align=left] {$\displaystyle u_{1}$};
\draw (608,334) node   [align=left] {$\displaystyle b_{1}$};
\draw (610,392) node   [align=left] {$\displaystyle b_{2}$};
\draw (559,352) node   [align=left] {$\displaystyle u_{i}$};

\end{tikzpicture}

 \subcaption{$G(r_i)$}
    \label{fig:lb2}
    \end{subfigure}
    
    \begin{subfigure}[b]{0.3\textwidth}
    
\tikzset{every picture/.style={line width=0.75pt}} 

\begin{tikzpicture}[x=0.75pt,y=0.75pt,yscale=-1,xscale=1]

\draw   (476.44,357.05) .. controls (476.44,355.1) and (478.02,353.52) .. (479.97,353.52) .. controls (481.92,353.52) and (483.5,355.1) .. (483.5,357.05) .. controls (483.5,359) and (481.92,360.58) .. (479.97,360.58) .. controls (478.02,360.58) and (476.44,359) .. (476.44,357.05) -- cycle ;
\draw   (517.97,361.59) .. controls (517.97,359.64) and (519.55,358.06) .. (521.5,358.06) .. controls (523.45,358.06) and (525.03,359.64) .. (525.03,361.59) .. controls (525.03,363.54) and (523.45,365.13) .. (521.5,365.13) .. controls (519.55,365.13) and (517.97,363.54) .. (517.97,361.59) -- cycle ;
\draw    (478.5,378.05) -- (519.97,364.59) ;
\draw    (483.5,357.05) -- (517.97,361.59) ;
\draw   (472.97,379.58) .. controls (472.97,377.63) and (474.55,376.05) .. (476.5,376.05) .. controls (478.45,376.05) and (480.03,377.63) .. (480.03,379.58) .. controls (480.03,381.53) and (478.45,383.11) .. (476.5,383.11) .. controls (474.55,383.11) and (472.97,381.53) .. (472.97,379.58) -- cycle ;
\draw    (479.97,360.58) -- (476.5,376.05) ;
\draw    (522.5,365.13) -- (556.5,410.05) ;
\draw   (555.5,412.05) .. controls (555.5,410.1) and (557.08,408.52) .. (559.03,408.52) .. controls (560.98,408.52) and (562.56,410.1) .. (562.56,412.05) .. controls (562.56,414) and (560.98,415.58) .. (559.03,415.58) .. controls (557.08,415.58) and (555.5,414) .. (555.5,412.05) -- cycle ;
\draw  [color={rgb, 255:red, 0; green, 0; blue, 0 }  ,draw opacity=1 ][line width=3] [line join = round][line cap = round] (559,352.06) .. controls (559,352.06) and (559,352.06) .. (559,352.06) ;
\draw  [color={rgb, 255:red, 0; green, 0; blue, 0 }  ,draw opacity=1 ][line width=3] [line join = round][line cap = round] (559,364.06) .. controls (559,364.06) and (559,364.06) .. (559,364.06) ;
\draw  [color={rgb, 255:red, 0; green, 0; blue, 0 }  ,draw opacity=1 ][line width=3] [line join = round][line cap = round] (559,375.06) .. controls (559,375.06) and (559,375.06) .. (559,375.06) ;
\draw    (521.5,358.06) -- (559.03,303.58) ;
\draw   (557.5,302.05) .. controls (557.5,300.1) and (559.08,298.52) .. (561.03,298.52) .. controls (562.98,298.52) and (564.56,300.1) .. (564.56,302.05) .. controls (564.56,304) and (562.98,305.58) .. (561.03,305.58) .. controls (559.08,305.58) and (557.5,304) .. (557.5,302.05) -- cycle ;
\draw   (602.44,356.05) .. controls (602.44,354.1) and (604.02,352.52) .. (605.97,352.52) .. controls (607.92,352.52) and (609.5,354.1) .. (609.5,356.05) .. controls (609.5,358) and (607.92,359.58) .. (605.97,359.58) .. controls (604.02,359.58) and (602.44,358) .. (602.44,356.05) -- cycle ;
\draw    (605.97,359.58) -- (602.5,375.05) ;
\draw   (596.97,378.58) .. controls (596.97,376.63) and (598.55,375.05) .. (600.5,375.05) .. controls (602.45,375.05) and (604.03,376.63) .. (604.03,378.58) .. controls (604.03,380.53) and (602.45,382.11) .. (600.5,382.11) .. controls (598.55,382.11) and (596.97,380.53) .. (596.97,378.58) -- cycle ;
\draw    (481.97,357.58) -- (556.5,410.05) ;
\draw    (480.03,379.58) -- (556.5,410.05) ;
\draw    (483.5,357.05) -- (559.03,303.58) ;
\draw    (476.5,376.05) -- (559.03,303.58) ;
\draw    (605.97,352.52) -- (562.56,307.11) ;
\draw    (600.5,375.05) -- (562.56,307.11) ;
\draw    (559.03,408.52) -- (602.44,356.05) ;
\draw    (559.03,408.52) -- (596.97,378.58) ;

\draw (520,372) node   [align=left] {$\displaystyle a_{1}$};
\draw (470,343) node   [align=left] {$\displaystyle a_{2}$};
\draw (465,384) node   [align=left] {$\displaystyle a_{3}$};
\draw (562,425) node   [align=left] {$\displaystyle u_{n'}$};
\draw (563,320) node   [align=left] {$\displaystyle u_{1}$};
\draw (608,334) node   [align=left] {$\displaystyle b_{1}$};
\draw (610,392) node   [align=left] {$\displaystyle b_{2}$};

\end{tikzpicture}

 \subcaption{$G(r_+)$}
    \label{fig:lb3}
    \end{subfigure}
    
    \end{tabular}

 \caption{Graphs $G(r)$ as $r$ is varied, for lower bound construction for pseudodimension of $\mathcal{H}_r$.}
    \label{fig:lb}
\end{figure}
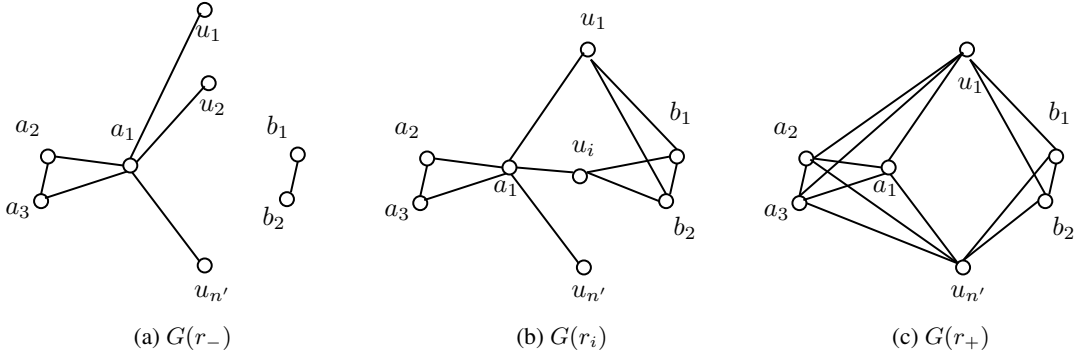

Now as $r$ is increased to $r_1$, the edges $(b_i,u_1)$, $i=1,2$ are added with $b_i$ labeled as $f(b_i)=0$. This results in a fractional label of $\frac{1}{3}$ for $f(u_1)$ while $f(u_k)=1$ for $k\ne 1$. Indeed the terms involving $f(u_1)$ in the objective are $(1-f(u_1))^2+2f(u_1)^2$, which is minimized at $\frac{1}{3}$. Since $u_1$ has true label $b$, this results in a slightly smaller loss of $l_{A(G(r_1),L)}=:l_{\text{low}}$. This happens when $A$ uses rounding, or in expectation if $A$ uses randomized prediction with probability $f(u)$.

At the next critical point $r_2$, $u_2$  gets connected to $b_i$'s and gets incorrectly classified as $b$. This increases the loss again to $l_{\text{high}}$. The loss function thus alternates as $r$ is varied through the specified values, between $l_{\text{high}}$ and $l_{\text{low}}$. We therefore set the witness $w$ to something in between.
$$w=\frac{l_{\text{low}}+l_{\text{high}}}{2}.$$
\end{proof}

\begin{proof}[Proof of Theorem \ref{thm:pdlb}] We will now use Lemma \ref{lem:pdlb} to prove our lower bound.  Arbitarily choose $n'=n-5$ (assumed to be a power of 2 just for convenient presentation) real numbers $r_{[000\dots 01]}<r_{[000\dots 10]}<\dots <r_{[111\dots 11]}$ in $(1,2)$. The indices are increasing binary numbers of length $m=\log n'$. We create labeling instances using Lemma \ref{lem:pdlb} which can be shattered by these $r$ values. Instance $S_i=(G_i,L_i)$ corresponds to fluctuation of $i$-th bit $b_i$ in our $r_b$ sequence, where $b=(b_1,\dots,b_m)\in\{0,1\}^m$, i.e., we apply the lemma by using a subset of the $r_b$ values which correspond to the bit flips in the $i$-th binary digit. For example, $S_1$ just needs a single bit flip (at $r_{[100\dots 00]}$). The lemma gives us both the instances and corresponding witnesses $w_i$.

This construction ensures $\text{sign}(l_{A(G_i(r_b),L_i)}-w_i)=b_i$, i.e. the set of instances is shattered. Thus the pseudodimension is at least $\log (n-5)=\Omega(\log n)$.

\end{proof}

\thmpdlbs*

\begin{proof}
The plan for the proof is to first construct a graph where the edge weights are carefully selected, so that we have $2^N$ oscillations in the loss function with $\sigma$ for $N=\Omega(n)$. Then we use this construction to create $\Theta(n)$ instances, each having a subset of the oscillations so that each interval leads to a unique labeling of the instances, for a total of $2^N$ labelings, which would imply pseudodimension is $\Omega(n)$. We will present our discussion in terms of the min-cut objective, for simplicity of presentation.

{\it Graph construction}: First a quick rough overview.  We start with a pair of labeled nodes of each class, and a pair of unlabeled nodes which may be assigned either label depending on $\sigma$.  We then build the graph in $N=(n-4)/2$ stages, adding two new nodes at each step with carefully chosen distances from existing nodes. Before adding the $i$th pair $x_i,y_i$ of nodes, there will be 
$2^{i-1}$ intervals of $\sigma$ such that the intervals correspond to distinct min-cuts which result in all possible labelings of $\{x_1,\dots,x_{i-1}\}$. Moreover, $y_j$ will be labeled differently from $x_j$ in each of these intervals. The edges to the new nodes will ensure that the cuts that differ exactly in $x_i$ will divide each of these intervals giving us $2^i$ intervals where distinct mincuts give all labelings of $\{x_1,\dots,x_{i}\}$, and allowing an inductive proof. The challenge is that we only get to set $O(i)$ edges but seek properties about $2^{i}$ cuts, so we must do this carefully. 

 Let $\varsigma=e^{-1/\sigma^2}$. Notice $\varsigma\in(0,1)$, and bijectively corresponds to $\sigma\in(0,\infty)$ (due to monotonicity) and therefore it suffices to specify intervals of $\varsigma$ corresponding to different labelings. Further we can specify distances $d(u,v)$ between pairs of nodes $u,v$ by specifying the squared distance $d(u,v)^2$. For the remainder of this proof we will refer to $\delta(u,v)=d(u,v)^2$ by {\it distance} and set values in $[1.5,1.6]$. Consequently, $d(u,v)\in(1.22,1.27)$ and therefore the triangle inequality is always satisfied. Notice that with this notation, the graph weights will be $w(u,v)=\varsigma^{\delta(u,v)}$.
 
We now provide details of the construction. We have four labeled nodes as follows. $a_1,a_2$ are labeled 0 and are collectively denoted by $A=\{a_1,a_2\}$, similarly $b_1,b_2$ are labeled 1 and $B=\{b_1,b_2\}$. Note that edges between these nodes are on all or no cut separating $A,B$, we set the distances to 1.6 and call this graph $G_0$. We further add unlabeled nodes in pairs $(x_j,y_j)$ in {\it rounds} $1\le j\le N$. In round $i$, we construct graph $G_i$ by adding nodes $(x_i,y_i)$ to $G_{i-1}$. The distances are set to ensure that for  $G_N$ there are $2^N$ unique values of $\varsigma$ corresponding to distinct min-cuts, each giving a unique labeling for $\{x_1,\dots,x_n\}$ (and the complementary labeling for $\{y_1,\dots,y_n\}$). Moreover subsets of these points also obtain the unique labeling for $\{x_1,\dots,x_i\}$ for each $G_i$.

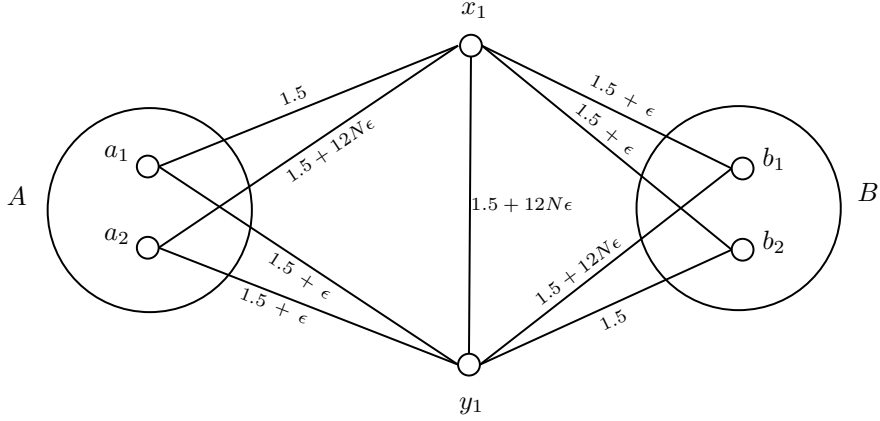
\begin{figure}[t]
    \centering
    \tikzset{every picture/.style={line width=0.75pt}} 

\begin{tikzpicture}[x=0.75pt,y=0.75pt,yscale=-1,xscale=1]

\draw   (154,140.5) .. controls (154,112.06) and (177.06,89) .. (205.5,89) .. controls (233.94,89) and (257,112.06) .. (257,140.5) .. controls (257,168.94) and (233.94,192) .. (205.5,192) .. controls (177.06,192) and (154,168.94) .. (154,140.5) -- cycle ;
\draw   (451,139.5) .. controls (451,111.06) and (474.06,88) .. (502.5,88) .. controls (530.94,88) and (554,111.06) .. (554,139.5) .. controls (554,167.94) and (530.94,191) .. (502.5,191) .. controls (474.06,191) and (451,167.94) .. (451,139.5) -- cycle ;
\draw   (362,57.5) .. controls (362,54.46) and (364.46,52) .. (367.5,52) .. controls (370.54,52) and (373,54.46) .. (373,57.5) .. controls (373,60.54) and (370.54,63) .. (367.5,63) .. controls (364.46,63) and (362,60.54) .. (362,57.5) -- cycle ;
\draw   (361,218.5) .. controls (361,215.46) and (363.46,213) .. (366.5,213) .. controls (369.54,213) and (372,215.46) .. (372,218.5) .. controls (372,221.54) and (369.54,224) .. (366.5,224) .. controls (363.46,224) and (361,221.54) .. (361,218.5) -- cycle ;
\draw   (199,118.5) .. controls (199,115.46) and (201.46,113) .. (204.5,113) .. controls (207.54,113) and (210,115.46) .. (210,118.5) .. controls (210,121.54) and (207.54,124) .. (204.5,124) .. controls (201.46,124) and (199,121.54) .. (199,118.5) -- cycle ;
\draw   (199,159.5) .. controls (199,156.46) and (201.46,154) .. (204.5,154) .. controls (207.54,154) and (210,156.46) .. (210,159.5) .. controls (210,162.54) and (207.54,165) .. (204.5,165) .. controls (201.46,165) and (199,162.54) .. (199,159.5) -- cycle ;
\draw   (499,119.5) .. controls (499,116.46) and (501.46,114) .. (504.5,114) .. controls (507.54,114) and (510,116.46) .. (510,119.5) .. controls (510,122.54) and (507.54,125) .. (504.5,125) .. controls (501.46,125) and (499,122.54) .. (499,119.5) -- cycle ;
\draw   (499,160.5) .. controls (499,157.46) and (501.46,155) .. (504.5,155) .. controls (507.54,155) and (510,157.46) .. (510,160.5) .. controls (510,163.54) and (507.54,166) .. (504.5,166) .. controls (501.46,166) and (499,163.54) .. (499,160.5) -- cycle ;
\draw    (367.5,63) -- (366.5,213) ;
\draw    (361,57.5) -- (210,118.5) ;
\draw    (361,57.5) -- (210,159.5) ;
\draw    (373,57.5) -- (499,119.5) ;
\draw    (373,57.5) -- (499,160.5) ;
\draw    (361,218.5) -- (210,118.5) ;
\draw    (361,218.5) -- (210,159.5) ;
\draw    (372,218.5) -- (499,160.5) ;
\draw    (372,218.5) -- (499,119.5) ;

\draw (132,127.4) node [anchor=north west][inner sep=0.75pt]    {$A$};
\draw (561,125.4) node [anchor=north west][inner sep=0.75pt]    {$B$};
\draw (360,233.4) node [anchor=north west][inner sep=0.75pt]    {$y_{1}$};
\draw (361,34.4) node [anchor=north west][inner sep=0.75pt]    {$x_{1}$};
\draw (181,106.4) node [anchor=north west][inner sep=0.75pt]    {$a_{1}$};
\draw (181,148.4) node [anchor=north west][inner sep=0.75pt]    {$a_{2}$};
\draw (513,107.4) node [anchor=north west][inner sep=0.75pt]    {$b_{1}$};
\draw (513,149.4) node [anchor=north west][inner sep=0.75pt]    {$b_{2}$};

\draw (251.75,177.38) node [anchor=north west][inner sep=0.75pt]  [font=\scriptsize,rotate=-21.32]  {$1.5\ +\ \epsilon $};
\draw (267.45,159.13) node [anchor=north west][inner sep=0.75pt]  [font=\scriptsize,rotate=-32.86]  {$1.5\ +\ \epsilon $};
\draw (268.38,81.8) node [anchor=north west][inner sep=0.75pt]  [font=\scriptsize,rotate=-337.21]  {$1.5$};
\draw (271.12,120.9) node [anchor=north west][inner sep=0.75pt]  [font=\scriptsize,rotate=-326.31]  {$1.5+12N\epsilon $};
\draw (365.97,132.6) node [anchor=north west][inner sep=0.75pt]  [font=\scriptsize]  {$1.5+12N\epsilon $};
\draw (396.83,183.34) node [anchor=north west][inner sep=0.75pt]  [font=\scriptsize,rotate=-323.07]  {$1.5+12N\epsilon $};
\draw (429.73,195.54) node [anchor=north west][inner sep=0.75pt]  [font=\scriptsize,rotate=-333.79]  {$1.5$};
\draw (429.21,69.46) node [anchor=north west][inner sep=0.75pt]  [font=\scriptsize,rotate=-27.89]  {$1.5\ +\ \epsilon $};
\draw (423.21,83.58) node [anchor=north west][inner sep=0.75pt]  [font=\scriptsize,rotate=-39.26]  {$1.5\ +\ \epsilon $};

\end{tikzpicture}
    \caption{The base case of our inductive construction.}
    \label{fig:base_case_lbs}
\end{figure}

We set the distances in round 1 such that there are intervals $I_0=(\varsigma_0,\varsigma'_0)\subset(0,1)$ and $I_1=(\varsigma_1,\varsigma'_1)\subset(0,1)$ such that $\varsigma'_0<\varsigma_1$ and $(x_1,y_1)$ are labeled $(l,1-l)$ in interval $I_l$. In general, there will be $2^{i-1}$ intervals at the end of round $i-1$, any interval $I^{(i-1)}$ will be split into disjoint intervals $I^{(i)}_0,I^{(i)}_1\subset I^{(i-1)}$ where labelings of $\{x_1,\dots,x_{i-1}\}$ match that of $I^{(i-1)}$ and $(x_i,y_i)$ are labeled $(l,1-l)$ in  $I^{(i)}_l$.

Now we set up the edges to achieve these properties. In round 1, we set the distances as follows.
\begin{align*}
    \delta(x_1,a_1)= \delta(y_1,b_2)&=1.5,\\
    \delta(x_1,a_2)= \delta(y_1,b_1)= \delta(x_1,y_1)&=1.5+12N\epsilon,\\
    \delta(x_1,b_1)=\delta(x_1,b_2)=\delta(y_1,a_1)= \delta(y_1,a_2)&=1.5+\epsilon.
\end{align*}
where $\epsilon$ is a small positive quantity such that the largest distance $1.5+12N\epsilon<1.6$. It is straightforward to verify that for $I_0=(0,\frac{1}{2}^{1/\epsilon})$ we have that $(x_1,y_1)$ are labeled $(0,1)$ by determining the values of $\varsigma$ for which the corresponding cut is the min-cut (Figure \ref{fig:base_case_lbs}). Indeed, we seek $\varsigma$ such that $w_{C01}=w(x_1,b_1)+w(x_1,b_2)+w(x_1,y_1)+w(y_1,a_1)+w(y_1,a_2)$ satisfies
\[w_{C01}\le w_{C00}= w(x_1,b_1)+w(x_1,b_2)+w(y_1,b_1)+w(y_1,b_2),\]\[w_{C01}\le w_{C11}=w(x_1,a_1)+w(x_1,a_2)+w(y_1,a_1)+w(y_1,a_2),\]\[w_{C01}\le w_{C10}= w(x_1,a_1)+w(x_1,a_2)+w(x_1,y_1)+w(y_1,b_1)+w(y_1,b_2),\]
which simultaneously hold for $\varsigma<\frac{1}{2}^{1/\epsilon}$. 

Moreover, we can similarly conclude that $(x_1,y_1)$ are labeled $(1,0)$ for the interval $I_1=(\varsigma_1,\varsigma'_1)$ where $\varsigma_1<\varsigma'_1$ are given by the two positive roots of the equation
\[1-2\varsigma^{\epsilon}+2\varsigma^{12N\epsilon}=0.\]
We now consider the inductive step, to set the distances and obtain an inductive proof of the claim above. In round $i$, the distances are as specified.
\begin{align*}
    \delta(x_i,a_1)= \delta(y_i,b_2)&=1.5,\\
    \delta(x_i,a_2)= \delta(y_i,b_1)= \delta(x_i,y_i)&=1.5+12N\epsilon,\\
    \delta(x_i,b_1)=\delta(x_i,b_2)=\delta(y_i,a_1)= \delta(y_i,a_2)&=1.5+\epsilon,\\
    \delta(x_i,y_j)= \delta(y_i,x_j)&=1.5+6(2j-1)\epsilon\;\;\;(1\le j\le i-1),\\
    \delta(x_i,x_j)= \delta(y_i,y_j)&=1.5+12j\epsilon\;\;\;\;\;\;\;\;\;\;\;\;(1\le j\le i-1).
\end{align*}
We denote the (inductively hypothesized) $2^{i-1}$ $\varsigma$-intervals at the end of round $i-1$ by $I_{\mathbf{b}}^{(i-1)}$, where $\mathbf{b}=\{b^{(1)},\dots,b^{(i-1)}\}\in\{0,1\}^{i-1}$ indicates the labels of $x_j,j\in[i-1]$ in $I_{\mathbf{b}}^{(i-1)}$. Min-cuts from round $i-1$ extend to min-cuts of round $i$ depending on how the edges incident on $(x_i,y_i)$ are set (Figure \ref{fig:lbs}). It suffices to consider only those min-cuts where $x_j$ and $y_j$ have opposite labels for each $j$. Consider an arbitrary such min-cut $C_{\mathbf{b}}=(A_{\mathbf{b}},B_{\mathbf{b}})$ of $G_{i-1}$ which corresponds to the interval $I_{\mathbf{b}}^{(i-1)}$, that is $A_{\mathbf{b}}=\{x_j\mid b^{(j)}=0\}\cup\{y_j\mid b^{(j)}=1\}$ and $B_{\mathbf{b}}$ contains the remaining unlabeled nodes of $G_{i-1}$. It extends to $C_{[\mathbf{b}\;0]}$ and $C_{[\mathbf{b}\;1]}$ for $\varsigma\in I_{\mathbf{b}}^{(i-1)}$ satisfying, respectively,
\begin{align*}
    E_{\mathbf{b},0}(\varsigma):= &\;\; 1-2\varsigma^{\epsilon}+F(C_{\mathbf{b}};\varsigma)>0,\\
    E_{\mathbf{b},1}(\varsigma):= &\;\; 1-2\varsigma^{\epsilon}+2\varsigma^{12N\epsilon}+F(C_{\mathbf{b}};\varsigma)<0,
\end{align*}
where $F(C_{\mathbf{b}};\varsigma)=\sum_{z\in A_{\mathbf{b}}}\varsigma^{\delta(x_i,z)}-\sum_{z\in B_{\mathbf{b}}}\varsigma^{\delta(x_i,z)}=\sum_{z\in B_{\mathbf{b}}}\varsigma^{\delta(y_i,z)}-\sum_{z\in A_{\mathbf{b}}}\varsigma^{\delta(y_i,z)}$. If we show that the solutions of the above inequations have disjoint non-empty intersections with $\varsigma\in I_{\mathbf{b}}^{(i-1)}$, our induction step is complete. We will use an indirect approach for this.

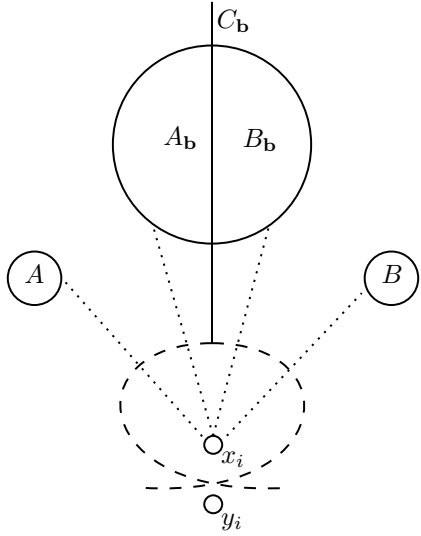
\begin{figure}[t]
    \centering
    \tikzset{every picture/.style={line width=0.75pt}} 

\begin{tikzpicture}[x=0.75pt,y=0.75pt,yscale=-1,xscale=1]

\draw  [dash pattern={on 4.5pt off 4.5pt}]  (299,180.25) .. controls (369,180.25) and (362,259.25) .. (264,253.25) ;
\draw  [dash pattern={on 4.5pt off 4.5pt}]  (299,180.25) .. controls (229,180.25) and (236,259.25) .. (337,253.25) ;

\draw   (249,80) .. controls (249,52.39) and (271.39,30) .. (299,30) .. controls (326.61,30) and (349,52.39) .. (349,80) .. controls (349,107.61) and (326.61,130) .. (299,130) .. controls (271.39,130) and (249,107.61) .. (249,80) -- cycle ;
\draw    (299,8) -- (299,180.25) ;
\draw   (295.09,231.59) .. controls (295.08,229.09) and (297.1,227.06) .. (299.59,227.05) .. controls (302.09,227.04) and (304.13,229.05) .. (304.14,231.55) .. controls (304.15,234.05) and (302.13,236.08) .. (299.64,236.09) .. controls (297.14,236.1) and (295.1,234.09) .. (295.09,231.59) -- cycle ;
\draw   (295.09,261.59) .. controls (295.08,259.09) and (297.1,257.06) .. (299.59,257.05) .. controls (302.09,257.04) and (304.13,259.05) .. (304.14,261.55) .. controls (304.15,264.05) and (302.13,266.08) .. (299.64,266.09) .. controls (297.14,266.1) and (295.1,264.09) .. (295.09,261.59) -- cycle ;
\draw  [dash pattern={on 0.84pt off 2.51pt}]  (225.14,149.55) -- (296.09,229.59) ;
\draw  [dash pattern={on 0.84pt off 2.51pt}]  (306.5,227) -- (374.5,155) ;
\draw  [dash pattern={on 0.84pt off 2.51pt}]  (299.59,227.05) -- (327.5,122) ;
\draw  [dash pattern={on 0.84pt off 2.51pt}]  (299.59,227.05) -- (269.5,122) ;
\draw   (376,147.5) .. controls (376,140.04) and (382.04,134) .. (389.5,134) .. controls (396.96,134) and (403,140.04) .. (403,147.5) .. controls (403,154.96) and (396.96,161) .. (389.5,161) .. controls (382.04,161) and (376,154.96) .. (376,147.5) -- cycle ;
\draw   (196,147.5) .. controls (196,140.04) and (202.04,134) .. (209.5,134) .. controls (216.96,134) and (223,140.04) .. (223,147.5) .. controls (223,154.96) and (216.96,161) .. (209.5,161) .. controls (202.04,161) and (196,154.96) .. (196,147.5) -- cycle ;

\draw (273,69.4) node [anchor=north west][inner sep=0.75pt]    {$A_{\mathbf{b}}$};
\draw (313,70.4) node [anchor=north west][inner sep=0.75pt]    {$B_{\mathbf{b}}$};
\draw (300,10.4) node [anchor=north west][inner sep=0.75pt]    {$C_{\mathbf{b}}$};
\draw (203,139.4) node [anchor=north west][inner sep=0.75pt]    {$A$};
\draw (383,139.4) node [anchor=north west][inner sep=0.75pt]    {$B$};
\draw (302.14,233.5) node [anchor=north west][inner sep=0.75pt]    {$x_{i}$};
\draw (302.14,264.95) node [anchor=north west][inner sep=0.75pt]    {$y_{i}$};

\end{tikzpicture}
    \caption{The inductive step in our lower bound construction for pseudodimension of $\mathcal{H}_{\sigma}$. The min-cut $C_{\mathbf{b}}$ is extended to two new min-cuts (depicted by dashed lines) for which labels of $x_i,y_i$ are flipped, at controlled parameter intervals.}
    \label{fig:lbs}
\end{figure}

For $1\le i\le N$, given $\mathbf{b}=\{b^{(1)},\dots,b^{(i-1)}\}\in\{0,1\}^{i-1}$, let $E_{\mathbf{b},0}$ and $E_{\mathbf{b},1}$ denote the expressions (exponential polynomials in $\varsigma$) in round $i$ which determine labels of $(x_i,y_i)$, in the case where for all $1 \le j < i$, $x_j$ is labeled $b^{(j)}$
(and let $E_{\phi,0},E_{\phi,1}$ denote the expressions for round 1). Let $\varsigma_{\mathbf{b},i} \in (0, 1)$ denote the smallest solution to $E_{\mathbf{b},i} = 0$.
Then we need to show the $\varsigma_{\mathbf{b},i}$’s are well-defined and follow a specific ordering. 
This ordering is completely specified by two conditions:

\begin{enumerate}
    \item[(i)]$\varsigma_{[\mathbf{b}\; 0],1} < \varsigma_{[\mathbf{b}],0} < \varsigma_{[\mathbf{b}],1} < \varsigma_{[\mathbf{b}\; 1],0}$, and 
    \item[(ii)] $\varsigma_{[\mathbf{b} \;0\; \mathbf{c}],1} < \varsigma_{[\mathbf{b} \;1\; \mathbf{d}],0}$
\end{enumerate}

for all $\mathbf{b}, \mathbf{c},\mathbf{d}\in \cup_{i<N}\{0, 1\}^i$ and $|\mathbf{c}| = |\mathbf{d}|$.

First we make a quick observation that all $\varsigma_{\mathbf{b},i}$’s are well-defined and less than $(3/4)^{1/\epsilon}$. To do this, it will suffice to note that $E_{\mathbf{b},i}(0)=1$ and $E_{\mathbf{b},i}(\frac{3}{4}^{1/\epsilon})<0$ for all $\mathbf{b},i$, since the functions are continuous in $(0,\frac{3}{4}^{1/\epsilon})$. This holds because \begin{align*}
    E_{\mathbf{b},0}\left(\frac{3}{4}^{1/\epsilon}\right)<
    E_{\mathbf{b},1}\left(\frac{3}{4}^{1/\epsilon}\right)&=
    1-\frac{3}{2}+\left(\frac{3}{4}\right)^{12N}+F\left(C_{\mathbf{b}};\frac{3}{4}^{1/\epsilon}\right)\\
    &\le -\frac{1}{2}+\left(\frac{3}{4}\right)^{12N}+\sum_{j=1}^{|\mathbf{b}|}\left(\frac{3}{4}\right)^{6j}\left(1-\left(\frac{3}{4}\right)^{6j}\right)\\
    &< -\frac{1}{2}+\sum_{j=1}^{N}\left(\frac{3}{4}\right)^{6j}\\
    &<0
\end{align*}

Let's now consider condition (i). We begin by showing $\varsigma_{[\mathbf{b}],0} < \varsigma_{[\mathbf{b}],1}$ for any $\mathbf{b}$. The exponential polynomials $E_{\mathbf{b},0}$ and $E_{\mathbf{b},1}$ both evaluate to 1 for $\varsigma=0$ (since $|A_{\mathbf{b}}|=|B_{\mathbf{b}}|=|\mathbf{b}|$) and decrease monotonically (verified by elementary calculus) till their respective smallest zeros  $\varsigma_{[\mathbf{b}],0}, \varsigma_{[\mathbf{b}],1}$. But then $E_{\mathbf{b},1}(\varsigma_{[\mathbf{b}],0})=2(\varsigma_{[\mathbf{b}],0})^{12N\epsilon}>0$, which implies $\varsigma_{[\mathbf{b}],0} < \varsigma_{[\mathbf{b}],1}$. Now, to show $\varsigma_{[\mathbf{b}\; 0],1} < \varsigma_{[\mathbf{b}],0}$, note that $E_{[\mathbf{b}\;0],1}(\varsigma)-E_{[\mathbf{b}],0}(\varsigma)=2\varsigma^{12N\epsilon}+\varsigma^{12i\epsilon}-\varsigma^{(12i-6)\epsilon}=\varsigma^{(12i-6)\epsilon}(2\varsigma^{(12(N-i)+6)\epsilon}+\varsigma^{6\epsilon}-1)$ where $1\le i=|\mathbf{b}|+1<N$. Since $\varsigma_{[\mathbf{b}],0}<\frac{3}{4}^{1/\epsilon}$, it follows that $E_{[\mathbf{b}\;0],1}(\varsigma_{[\mathbf{b}],0})<0$, which implies  $\varsigma_{[\mathbf{b}\; 0],1} < \varsigma_{[\mathbf{b}],0}$. Similarly, it is readily verified that $\varsigma_{[\mathbf{b}],1} < \varsigma_{[\mathbf{b}\; 1],0}$, establishing (i).

Finally, to show (ii), note that $E_{[\mathbf{b}\;0\;\mathbf{c}],1}(\varsigma)-E_{[\mathbf{b}\;0\;\mathbf{d}],0}(\varsigma)=2\varsigma^{12N\epsilon}+\varsigma^{12i\epsilon}-\varsigma^{(12i-6)\epsilon}+\varsigma^{12i\epsilon}(F(C_{\mathbf{c}};\varsigma)-F(C_{\mathbf{d}};\varsigma))=\varsigma^{(12i-6)\epsilon}(2\varsigma^{(12(N-i)+6)\epsilon}+\varsigma^{6\epsilon}-1+\varsigma^{6\epsilon}(F(C_{\mathbf{c}};\varsigma)-F(C_{\mathbf{d}};\varsigma)))$. Again, similar to above, we use $\varsigma_{[\mathbf{b}\;0\;\mathbf{d}],0}<\frac{3}{4}^{1/\epsilon}$ in this expression to get $E_{[\mathbf{b}\;0\;\mathbf{c}],1}(\varsigma_{[\mathbf{b}\;0\;\mathbf{d}],0})<0$. Since the exponential polynomials decay monotonically with $\varsigma$ till their first roots, (ii) follows.

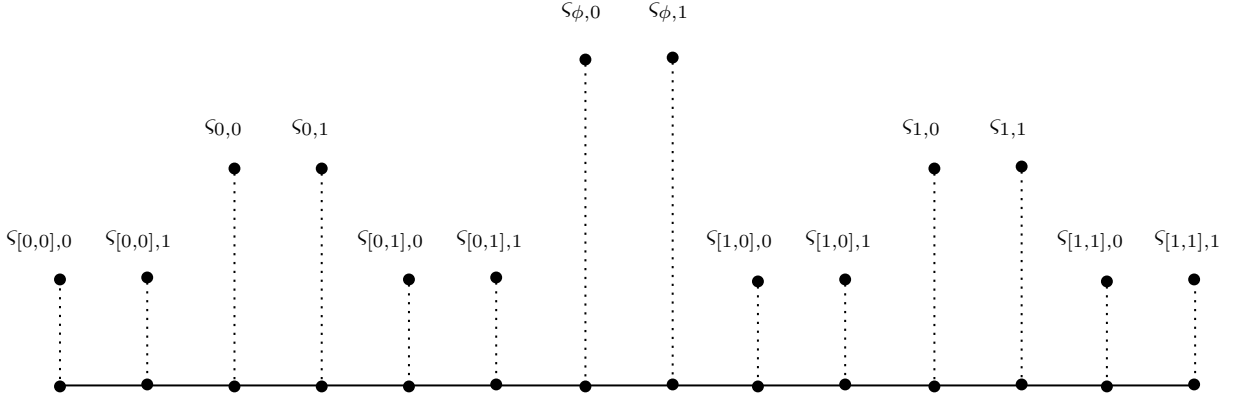
\begin{figure}[t]
    \centering
\tikzset{every picture/.style={line width=0.75pt}} 

\begin{tikzpicture}[x=0.75pt,y=0.75pt,yscale=-1,xscale=1]

\draw    (33,242) -- (606,242) ;
\draw  [fill={rgb, 255:red, 0; green, 0; blue, 0 }  ,fill opacity=1 ] (75,187.5) .. controls (75,186.12) and (76.12,185) .. (77.5,185) .. controls (78.88,185) and (80,186.12) .. (80,187.5) .. controls (80,188.88) and (78.88,190) .. (77.5,190) .. controls (76.12,190) and (75,188.88) .. (75,187.5) -- cycle ;
\draw  [fill={rgb, 255:red, 0; green, 0; blue, 0 }  ,fill opacity=1 ] (31,188.5) .. controls (31,187.12) and (32.12,186) .. (33.5,186) .. controls (34.88,186) and (36,187.12) .. (36,188.5) .. controls (36,189.88) and (34.88,191) .. (33.5,191) .. controls (32.12,191) and (31,189.88) .. (31,188.5) -- cycle ;
\draw  [fill={rgb, 255:red, 0; green, 0; blue, 0 }  ,fill opacity=1 ] (163,132.5) .. controls (163,131.12) and (164.12,130) .. (165.5,130) .. controls (166.88,130) and (168,131.12) .. (168,132.5) .. controls (168,133.88) and (166.88,135) .. (165.5,135) .. controls (164.12,135) and (163,133.88) .. (163,132.5) -- cycle ;
\draw  [fill={rgb, 255:red, 0; green, 0; blue, 0 }  ,fill opacity=1 ] (119,132.5) .. controls (119,131.12) and (120.12,130) .. (121.5,130) .. controls (122.88,130) and (124,131.12) .. (124,132.5) .. controls (124,133.88) and (122.88,135) .. (121.5,135) .. controls (120.12,135) and (119,133.88) .. (119,132.5) -- cycle ;
\draw  [fill={rgb, 255:red, 0; green, 0; blue, 0 }  ,fill opacity=1 ] (251,187.5) .. controls (251,186.12) and (252.12,185) .. (253.5,185) .. controls (254.88,185) and (256,186.12) .. (256,187.5) .. controls (256,188.88) and (254.88,190) .. (253.5,190) .. controls (252.12,190) and (251,188.88) .. (251,187.5) -- cycle ;
\draw  [fill={rgb, 255:red, 0; green, 0; blue, 0 }  ,fill opacity=1 ] (207,188.5) .. controls (207,187.12) and (208.12,186) .. (209.5,186) .. controls (210.88,186) and (212,187.12) .. (212,188.5) .. controls (212,189.88) and (210.88,191) .. (209.5,191) .. controls (208.12,191) and (207,189.88) .. (207,188.5) -- cycle ;
\draw  [fill={rgb, 255:red, 0; green, 0; blue, 0 }  ,fill opacity=1 ] (340,76.5) .. controls (340,75.12) and (341.12,74) .. (342.5,74) .. controls (343.88,74) and (345,75.12) .. (345,76.5) .. controls (345,77.88) and (343.88,79) .. (342.5,79) .. controls (341.12,79) and (340,77.88) .. (340,76.5) -- cycle ;
\draw  [fill={rgb, 255:red, 0; green, 0; blue, 0 }  ,fill opacity=1 ] (296,77.5) .. controls (296,76.12) and (297.12,75) .. (298.5,75) .. controls (299.88,75) and (301,76.12) .. (301,77.5) .. controls (301,78.88) and (299.88,80) .. (298.5,80) .. controls (297.12,80) and (296,78.88) .. (296,77.5) -- cycle ;
\draw  [fill={rgb, 255:red, 0; green, 0; blue, 0 }  ,fill opacity=1 ] (427,188.5) .. controls (427,187.12) and (428.12,186) .. (429.5,186) .. controls (430.88,186) and (432,187.12) .. (432,188.5) .. controls (432,189.88) and (430.88,191) .. (429.5,191) .. controls (428.12,191) and (427,189.88) .. (427,188.5) -- cycle ;
\draw  [fill={rgb, 255:red, 0; green, 0; blue, 0 }  ,fill opacity=1 ] (383,189.5) .. controls (383,188.12) and (384.12,187) .. (385.5,187) .. controls (386.88,187) and (388,188.12) .. (388,189.5) .. controls (388,190.88) and (386.88,192) .. (385.5,192) .. controls (384.12,192) and (383,190.88) .. (383,189.5) -- cycle ;
\draw  [fill={rgb, 255:red, 0; green, 0; blue, 0 }  ,fill opacity=1 ] (516,131.5) .. controls (516,130.12) and (517.12,129) .. (518.5,129) .. controls (519.88,129) and (521,130.12) .. (521,131.5) .. controls (521,132.88) and (519.88,134) .. (518.5,134) .. controls (517.12,134) and (516,132.88) .. (516,131.5) -- cycle ;
\draw  [fill={rgb, 255:red, 0; green, 0; blue, 0 }  ,fill opacity=1 ] (472,132.5) .. controls (472,131.12) and (473.12,130) .. (474.5,130) .. controls (475.88,130) and (477,131.12) .. (477,132.5) .. controls (477,133.88) and (475.88,135) .. (474.5,135) .. controls (473.12,135) and (472,133.88) .. (472,132.5) -- cycle ;
\draw  [fill={rgb, 255:red, 0; green, 0; blue, 0 }  ,fill opacity=1 ] (603,188.5) .. controls (603,187.12) and (604.12,186) .. (605.5,186) .. controls (606.88,186) and (608,187.12) .. (608,188.5) .. controls (608,189.88) and (606.88,191) .. (605.5,191) .. controls (604.12,191) and (603,189.88) .. (603,188.5) -- cycle ;
\draw  [fill={rgb, 255:red, 0; green, 0; blue, 0 }  ,fill opacity=1 ] (559,189.5) .. controls (559,188.12) and (560.12,187) .. (561.5,187) .. controls (562.88,187) and (564,188.12) .. (564,189.5) .. controls (564,190.88) and (562.88,192) .. (561.5,192) .. controls (560.12,192) and (559,190.88) .. (559,189.5) -- cycle ;
\draw  [dash pattern={on 0.84pt off 2.51pt}]  (298.5,80) -- (298.5,242) ;
\draw  [dash pattern={on 0.84pt off 2.51pt}]  (342.5,79) -- (342.5,241) ;
\draw  [dash pattern={on 0.84pt off 2.51pt}]  (253.5,190) -- (253.5,242) ;
\draw  [dash pattern={on 0.84pt off 2.51pt}]  (209.5,191) -- (209.5,243) ;
\draw  [dash pattern={on 0.84pt off 2.51pt}]  (33.5,188.5) -- (33.5,240.5) ;
\draw  [dash pattern={on 0.84pt off 2.51pt}]  (77.5,190) -- (77.5,242) ;
\draw  [dash pattern={on 0.84pt off 2.51pt}]  (385.5,192) -- (385.5,244) ;
\draw  [dash pattern={on 0.84pt off 2.51pt}]  (429.5,191) -- (429.5,243) ;
\draw  [dash pattern={on 0.84pt off 2.51pt}]  (561.5,192) -- (561.5,244) ;
\draw  [dash pattern={on 0.84pt off 2.51pt}]  (606,190) -- (606,242) ;
\draw  [dash pattern={on 0.84pt off 2.51pt}]  (121.5,135) -- (121.5,241) ;
\draw  [dash pattern={on 0.84pt off 2.51pt}]  (165.5,135) -- (165.5,241) ;
\draw  [dash pattern={on 0.84pt off 2.51pt}]  (474.5,135) -- (474.5,241) ;
\draw  [dash pattern={on 0.84pt off 2.51pt}]  (518.5,134) -- (518.5,240) ;
\draw  [fill={rgb, 255:red, 0; green, 0; blue, 0 }  ,fill opacity=1 ] (75,241.5) .. controls (75,240.12) and (76.12,239) .. (77.5,239) .. controls (78.88,239) and (80,240.12) .. (80,241.5) .. controls (80,242.88) and (78.88,244) .. (77.5,244) .. controls (76.12,244) and (75,242.88) .. (75,241.5) -- cycle ;
\draw  [fill={rgb, 255:red, 0; green, 0; blue, 0 }  ,fill opacity=1 ] (31,242.5) .. controls (31,241.12) and (32.12,240) .. (33.5,240) .. controls (34.88,240) and (36,241.12) .. (36,242.5) .. controls (36,243.88) and (34.88,245) .. (33.5,245) .. controls (32.12,245) and (31,243.88) .. (31,242.5) -- cycle ;
\draw  [fill={rgb, 255:red, 0; green, 0; blue, 0 }  ,fill opacity=1 ] (163,242.5) .. controls (163,241.12) and (164.12,240) .. (165.5,240) .. controls (166.88,240) and (168,241.12) .. (168,242.5) .. controls (168,243.88) and (166.88,245) .. (165.5,245) .. controls (164.12,245) and (163,243.88) .. (163,242.5) -- cycle ;
\draw  [fill={rgb, 255:red, 0; green, 0; blue, 0 }  ,fill opacity=1 ] (119,242.5) .. controls (119,241.12) and (120.12,240) .. (121.5,240) .. controls (122.88,240) and (124,241.12) .. (124,242.5) .. controls (124,243.88) and (122.88,245) .. (121.5,245) .. controls (120.12,245) and (119,243.88) .. (119,242.5) -- cycle ;
\draw  [fill={rgb, 255:red, 0; green, 0; blue, 0 }  ,fill opacity=1 ] (251,241.5) .. controls (251,240.12) and (252.12,239) .. (253.5,239) .. controls (254.88,239) and (256,240.12) .. (256,241.5) .. controls (256,242.88) and (254.88,244) .. (253.5,244) .. controls (252.12,244) and (251,242.88) .. (251,241.5) -- cycle ;
\draw  [fill={rgb, 255:red, 0; green, 0; blue, 0 }  ,fill opacity=1 ] (207,242.5) .. controls (207,241.12) and (208.12,240) .. (209.5,240) .. controls (210.88,240) and (212,241.12) .. (212,242.5) .. controls (212,243.88) and (210.88,245) .. (209.5,245) .. controls (208.12,245) and (207,243.88) .. (207,242.5) -- cycle ;
\draw  [fill={rgb, 255:red, 0; green, 0; blue, 0 }  ,fill opacity=1 ] (340,241.5) .. controls (340,240.12) and (341.12,239) .. (342.5,239) .. controls (343.88,239) and (345,240.12) .. (345,241.5) .. controls (345,242.88) and (343.88,244) .. (342.5,244) .. controls (341.12,244) and (340,242.88) .. (340,241.5) -- cycle ;
\draw  [fill={rgb, 255:red, 0; green, 0; blue, 0 }  ,fill opacity=1 ] (296,242.5) .. controls (296,241.12) and (297.12,240) .. (298.5,240) .. controls (299.88,240) and (301,241.12) .. (301,242.5) .. controls (301,243.88) and (299.88,245) .. (298.5,245) .. controls (297.12,245) and (296,243.88) .. (296,242.5) -- cycle ;
\draw  [fill={rgb, 255:red, 0; green, 0; blue, 0 }  ,fill opacity=1 ] (427,241.5) .. controls (427,240.12) and (428.12,239) .. (429.5,239) .. controls (430.88,239) and (432,240.12) .. (432,241.5) .. controls (432,242.88) and (430.88,244) .. (429.5,244) .. controls (428.12,244) and (427,242.88) .. (427,241.5) -- cycle ;
\draw  [fill={rgb, 255:red, 0; green, 0; blue, 0 }  ,fill opacity=1 ] (383,242.5) .. controls (383,241.12) and (384.12,240) .. (385.5,240) .. controls (386.88,240) and (388,241.12) .. (388,242.5) .. controls (388,243.88) and (386.88,245) .. (385.5,245) .. controls (384.12,245) and (383,243.88) .. (383,242.5) -- cycle ;
\draw  [fill={rgb, 255:red, 0; green, 0; blue, 0 }  ,fill opacity=1 ] (516,241.5) .. controls (516,240.12) and (517.12,239) .. (518.5,239) .. controls (519.88,239) and (521,240.12) .. (521,241.5) .. controls (521,242.88) and (519.88,244) .. (518.5,244) .. controls (517.12,244) and (516,242.88) .. (516,241.5) -- cycle ;
\draw  [fill={rgb, 255:red, 0; green, 0; blue, 0 }  ,fill opacity=1 ] (472,242.5) .. controls (472,241.12) and (473.12,240) .. (474.5,240) .. controls (475.88,240) and (477,241.12) .. (477,242.5) .. controls (477,243.88) and (475.88,245) .. (474.5,245) .. controls (473.12,245) and (472,243.88) .. (472,242.5) -- cycle ;
\draw  [fill={rgb, 255:red, 0; green, 0; blue, 0 }  ,fill opacity=1 ] (603,241.5) .. controls (603,240.12) and (604.12,239) .. (605.5,239) .. controls (606.88,239) and (608,240.12) .. (608,241.5) .. controls (608,242.88) and (606.88,244) .. (605.5,244) .. controls (604.12,244) and (603,242.88) .. (603,241.5) -- cycle ;
\draw  [fill={rgb, 255:red, 0; green, 0; blue, 0 }  ,fill opacity=1 ] (559,242.5) .. controls (559,241.12) and (560.12,240) .. (561.5,240) .. controls (562.88,240) and (564,241.12) .. (564,242.5) .. controls (564,243.88) and (562.88,245) .. (561.5,245) .. controls (560.12,245) and (559,243.88) .. (559,242.5) -- cycle ;

\draw (329,47.4) node [anchor=north west][inner sep=0.75pt]    {${{\varsigma _{\phi ,1}}}$};
\draw (285,47.4) node [anchor=north west][inner sep=0.75pt]    {${{\varsigma _{\phi ,0}}}$};
\draw (149,106.4) node [anchor=north west][inner sep=0.75pt]    {${{\varsigma _{0,1}}}$};
\draw (105,106.4) node [anchor=north west][inner sep=0.75pt]    {${{\varsigma _{0,0}}}$};
\draw (501,106.4) node [anchor=north west][inner sep=0.75pt]    {${{\varsigma _{1,1}}}$};
\draw (457,106.4) node [anchor=north west][inner sep=0.75pt]    {${{\varsigma _{1,0}}}$};
\draw (55,162.4) node [anchor=north west][inner sep=0.75pt]    {${{\varsigma _{[ 0,0],1}}}$};
\draw (5,162.4) node [anchor=north west][inner sep=0.75pt]    {${{\varsigma _{[ 0,0],0}}}$};
\draw (232,162.4) node [anchor=north west][inner sep=0.75pt]    {${{\varsigma _{[ 0,1],1}}}$};
\draw (182,162.4) node [anchor=north west][inner sep=0.75pt]    {${{\varsigma _{[ 0,1],0}}}$};
\draw (408,162.4) node [anchor=north west][inner sep=0.75pt]    {${{\varsigma _{[ 1,0],1}}}$};
\draw (358,162.4) node [anchor=north west][inner sep=0.75pt]    {${{\varsigma _{[ 1,0],0}}}$};
\draw (584,162.4) node [anchor=north west][inner sep=0.75pt]    {${{\varsigma _{[ 1,1],1}}}$};
\draw (535,162.4) node [anchor=north west][inner sep=0.75pt]    {${{\varsigma _{[ 1,1],0}}}$};

\end{tikzpicture}
    \caption{Relative positions of critical values of the parameter $\varsigma=e^{-1/\sigma^2}$.}
    \label{fig:base-case-lbs}
\end{figure}
{\it Problem instances}: We will now show the graph instances and witnesses to establish the pseudodimension bound. Our graphs will be $G_i$ from the above construction (padded appropriately such that the min-cut intervals do not change, if we insist each instance has exactly $n$ nodes), and the shattering family $\sigma_b$ ($b=(b_1,\dots,b_N)\in\{0,1\}^N$) will be $2^N$ values of $\sigma$ corresponding to the $2^N$ intervals of $\varsigma$ with distinct min-cuts in $G_N$ described above. To obtain the witnesses, we set the labels so that only the last pair of nodes $(x_i,y_i)$ have different labels (i.e. labels are same for all $(x_j,y_j), j<i$) and therefore the loss function oscillates $2^i$ times as $(x_i,y_i)$ are correctly and incorrectly labeled in alternating intervals. The intervals of successive $G_i$ are nested precisely so that $\sigma_b$ shatter the instances for the above labelings/witnesses. Thus, we have shown that the pseudodimension is $\Omega(N)=\Omega((n-4)/2)=\Omega(n)$.\end{proof}

\section{Active Learning}
\label{app:bal}


\thmpolyal*
\begin{proof}
We first determine values of the graph parameter $\Tilde{\alpha}$ for which two candidate sets $S,T$ have the same heuristic utilities ($u^S=u^T$) in Algorithm \ref{alg: budgeted al}. As noted in the proof of Theorem \ref{thm:dispersion}, $f$ and $f^{L_S}$ are rational polynomials in the graph parameter with degree at most $nd$. For fixed $L_S$, we observe that  $p^{L_S}$ is a rational polynomial in $\Tilde{\alpha}$ with degree $O(nd\ell)$. Putting together, the equation $u^S=u^T$ simplifies to a polynomial equation in $\Tilde{\alpha}$ with degree $O(nd\ell2^{\ell})$. Application of Lemma \ref{lem:bounded} implies that the coefficients of the polynomial equation have bounded joint density, and therefore the roots are $\frac{1}{2}$-dispersed. Accounting for all possible subset pairs $S,T$ and their respective labelings $L_S,L_T$, we obtain a maximum of $(2n)^{2\ell}$ such equations. The discontinuities due to the active learning algorithm $A'$ are therefore dispersed.

At the same time, the discontinuities due to the semi-supervised procedure for label prediction correspond to soft label flips for some $f^{L_S}$. This gives $O((2n)^{\ell})$ polynomial equations of degree at most $nd$ whose roots collectively contain the possible discontinuities in the loss function due to algorithm $A$. On any $\epsilon$-interval we can apply Theorem \ref{thm:poly-roots} to bound the number of roots for each of the polynomial equations and apply Theorem \ref{thm:VC-bound} to show that this implies $\frac{1}{2}$-dispersion.
\end{proof}

\end{document}